\documentclass[journal,transmag]{IEEEtran}
\ifCLASSINFOpdf
\else
\fi
%
%

%
\usepackage[cmex10]{amsmath} 
\usepackage{amsthm,latexsym}
\usepackage{multirow,tabularx}
\usepackage{pdflscape,setspace} 
\usepackage[CaptionAfterwards]{fltpage}
\newtheorem{prop}{Proposition}
\theoremstyle{remark}

\usepackage{algorithm,latexsym,mathrsfs,graphicx}
\usepackage{algpseudocode}
\usepackage{ccaption}

\usepackage[normalem]{ulem}

\usepackage{tikz}
\newcommand*\circled[1]{\tikz[baseline=(char.base)]{
            \node[shape=circle,draw,inner sep=0.5pt] (char) {#1};}}

\usepackage{array}


\ifCLASSOPTIONcompsoc
\else
  \usepackage[caption=false,font=footnotesize]{subfig}
\fi

\begin{document}

\title{A Novel Adaptive Possibilistic Clustering Algorithm}

\author{\IEEEauthorblockN{Spyridoula~D.~Xenaki\IEEEauthorrefmark{1,}\IEEEauthorrefmark{2}, Konstantinos~D.~Koutroumbas\IEEEauthorrefmark{1}, and Athanasios~A.~Rontogiannis\IEEEauthorrefmark{1}}%
\IEEEauthorblockA{\IEEEauthorrefmark{1}Institute for Astronomy, Astrophysics, Space Applications and Remote Sensing (IAASARS),\\ National Observatory of Athens, Penteli, GR-15236 Greece}
\IEEEauthorblockA{\IEEEauthorrefmark{2}Department of Informatics and Telecommunications, National \& Kapodistrian University of Athens,\\ GR-157 84, Ilissia, Greece}%
%
}

\IEEEtitleabstractindextext{%
\begin{abstract}
In this paper a novel possibilistic c-means clustering algorithm, called Adaptive Possibilistic c-means, is presented. Its main feature is that its parameters, after their initialization, are properly adapted during its execution. Provided that the algorithm starts with a reasonable overestimate of the number of physical clusters formed by the data, it is capable, in principle, to unravel them (a long-standing issue in the clustering literature). This is due to the fully adaptive nature of the proposed algorithm that enables the removal of the clusters that gradually become obsolete. In addition, the adaptation of all its parameters increases the flexibility of the algorithm in following the variations in the formation of the clusters that occur from iteration to iteration. Theoretical results that are indicative of the convergence behavior of the algorithm are also provided. Finally, extensive simulation results on both synthetic and real data highlight the effectiveness of the proposed algorithm.
\end{abstract}

\begin{IEEEkeywords}
Possibilistic clustering, parameter adaptation, cluster elimination
\end{IEEEkeywords}}

\maketitle

\IEEEdisplaynontitleabstractindextext

%
\IEEEpeerreviewmaketitle

\section{Introduction}
%
%
%
%
%
%
%
%
%
%
%

\IEEEPARstart{C}{lustering} is a well established data analysis methodology that has been extensively used in various fields of applications during the last decades, such as life sciences, medical sciences and engineering \cite{Ande73}. Given a set of entities, its aim is the identification of groups (clusters) formed by ``similar" entities (e.g. \cite{Theo09}, \cite{Duda73}, \cite{Jain88}, \cite{Ever01}). Usually, each entity is represented by a set of measurements, which forms its associated {\it feature vector}. This is also called {\it data vector} and the set of all these vectors forms the {\it data set} under study. The space where all these vectors live is called {\it feature space}. The clustering of the entities under study is based exclusively on the clustering of their corresponding feature vectors. According to the way a data vector is associated with various clusters, three main philosophies have been developed: (a) {\it hard clustering}, where each vector belongs exclusively to a single cluster, (b) {\it fuzzy clustering}, where each vector may be shared among two or more clusters and (c) {\it possibilistic clustering}, where the association ({\it degree of compatibility}) of a data vector with a given cluster is independent of its association with any other cluster.

Most of the work on clustering has been focused on compact and hyperellipsoidally shaped clusters and the most well-known algorithms that deal with this case and follow one of the previous philosophies are the {\it k-means} (hard clustering), e.g. \cite{Hart79}, the {\it fuzzy c-means} (FCM), e.g. \cite{Bezd80}, \cite{Bezd81} and the {\it possibilistic c-means} (PCM), e.g. \cite{Theo09}, \cite{Kris93}, \cite{Kris96}, \cite{Pal05}, \cite{Yang06}, \cite{Tree13}, respectively. In all these algorithms the clusters are represented by vectors that lie in the feature space, called {\it cluster representatives}. The aim of all these algorithms is to move the representatives to the ``centers" of the regions that are ``dense in data points" (dense regions), that is to regions where there is significant aggregation of data points (clusters). Under this perspective, we say that each such vector {\it represents} a cluster and their movement towards the center of the clusters is carried out via the minimization of appropriately defined cost functions.

Notwithstanding their popularity, both k-means and FCM have two shortcomings. First, they are vulnerable to noisy data and outliers \cite{Theo09}, \cite{Pal05}\footnote{A method for facing this problem with FCM is discussed in \cite{Yang08}.}. Second, they require prior knowledge of the number of clusters, $m$, underlying in the data set (which, of course, is rarely known in practice)\footnote{A method for estimating $m$ for FCM is via the use of suitable validity indices (e.g., \cite{Xie91}, \cite{Yang06}).}. An additional characteristic that both of these algorithms share is that they {\it impose} a clustering structure on the data set, in the sense that they will return $m$ clusters irrespectively of the fact that more or less than $m$ clusters may actually underlie in the data set. Specifically, if $m$ is less than the actual number of clusters, at least some representatives will fail to move to dense regions, while in the opposite case, some naturally formed clusters will split into more than one pieces. 

As far as the PCM algorithms are concerned, the cluster representatives are updated, based on the {\it degree of compatibility} of a data vector with a given cluster. Contrastingly to the FCM, in PCM algorithms, the degrees of compatibility of a data vector with the various clusters are independent to each other and no sum-to-one constraint is imposed on them. A consequence of this fact is that even if the number of clusters is overestimated, in principle, all representatives will be driven to dense regions, making thus feasible the uncovering of the true clusters. However, in this case, the scenario where two or more cluster representatives are led to the same dense in data region, may arise \cite{Zhan04}, \cite{Barn96}. In addition, although PCM deals well with noisy data points and outliers, compared to k-means and FCM, it involves additional parameters, usually denoted by $\gamma$ \footnote{In other works the letter $\eta$ is used.}, each one being associated with a cluster, which require good estimates. In addition, once they have been estimated, they are kept fixed during its execution. Poor initial estimation of these parameters often leads to poor clustering performance, especially in more demanding data sets. 

Many variants of PCM have been proposed to deal with its weaknesses. More specifically, \cite{Timm04} tries to avoid coincident clusters by introducing mutual repulsion of the clusters, so that they are forced away from each other. The same problem is treated in \cite{Zhan04}, \cite{Masu06} and \cite{Pal05} by combining possibilistic and fuzzy arguments. Also, in \cite{Xie08} a strategy is proposed that intoduces a ``gray zone" around each representative, which contains the points around the cluster boundary. The latter deals with the coincident clusters problem, is robust to outliers and uses less ad hoc defined parameters than PCM. Another algorithm that involves very few parameters and is robust to noise and outliers is described in \cite{Yang06}. In \cite{Wu10} ideas from \cite{Yang06} and \cite{Pal05} are combined for dealing additionaly with the coincident clusters issue. The same issues are also addressed in \cite{Tree13} using, however, a different approach than \cite{Wu10}.

%
The original versions of PCM algorithms have no cluster elimination ability, that is, if they are initialized with an overestimated number of clusters, they cannot eliminate any of them as they evolve. Inspired by \cite{Yang04}, PCM-type algorithms that perform cluster elimination during their execution are described in \cite{Yang11} and \cite{Liao11}. However, in these algorithms the parameters $\gamma$ are considered equal for all clusters and are kept fixed as they evolve. Consequently, their ability to deal with closely located clusters with significantly different variances, is drastically decreased. In addition, their computational complexity is dramatically increased \cite{Yang11}. 
%

In the present work, we focus on PCM. More specifically, we extent the classical PCM algorithm, proposed in \cite{Kris96}, by modifying the way the parameters $\gamma$ are defined and treated, giving rise to a new algorithm called {\it Adaptive Possibilistic c-means} ({\it APCM})\footnote{A preliminary version of APCM has been presented in \cite{Xen13}.}. In APCM the parameters $\gamma$, after their initialization, are properly adapted as the algorithm evolves. In particular, for each specific cluster, we propose to adapt its parameter $\gamma$ based on the {\it mean absolute deviation} of only those data vectors that are {\it most compatible} with this cluster. 

The adaptation of $\gamma$'s renders the algorithm more flexible in uncovering the underlying clustering structure, compared to other related possibilistic algorithms, especially in demanding data sets such as those consisting of closely located to each other clusters or even with big difference in their variances. In addition, as a direct consequence of this adaptation, the algorithm has the ability to estimate the (unknown in most cases in practice) true number of {\it physical} (or {\it natural}) clusters. More specifically, if the number of the representatives with which APCM starts is a crude overestimation of the number of natural clusters, the algorithm gradually reduces this number, as it progresses, and, finally, it places a single representative to the center of each dense region. In this sense, it provides not only the number of natural clusters, which is a long-standing issue in the clustering framework, but also the clusters themselves. Analytical results are presented that justify the cluster elimination capability of the proposed algorithm and provide strong indications of its convergence behavior. Extensive simulation results on both synthetic and real data, corroborate our theoretical analysis and show that APCM offers in general superior clustering performance compared to relative state-of-the-art clustering schemes.

The rest of the paper is organized as follows. In Section \ref{sec2}, a brief description of PCM algorithms is given, as well as previous attempts for dealing with their shortcomings. In Section \ref{sec3}, the proposed Adaptive PCM (APCM) clustering algorithm is presented in detail and its rationale is fully explained in a separate subsection. In Section \ref{sec4}, the performance of APCM is tested against several related state-of-the-art algorithms. Concluding remarks are provided in Section \ref{sec5}. Finally, indicative theoretical convergence results of APCM are given in Appendix B.



\section{A review of PCM, issues and potential solutions}
\label{sec2}
In this section, the PCM clustering algorithm is reviewed and its main features are discussed. Also, possible solutions from the literature are commented that try to deal with its weak points.

\subsection{PCM review}

Let $X=\{\mathbf{x}_i \in \Re^\ell, i=1,...,N\}$ be a set of $N$, $l$-dimensional data vectors to be clustered and $\Theta=\{\boldsymbol{\theta}_j \in \Re^\ell, j=1,...,m\}$ be a set of $m$ vectors that will be used for the representation of the clusters formed by the points in $X$. Let $U=[u_{ij}], i=1,...,N, j=1,...,m$ be an {$N\times m$} matrix whose $(i,j)$ entry stands for the so-called {\it degree of compatibility} of $\mathbf{x}_i$ with the $j$th cluster, denoted by $C_j$, and represented by the vector $\boldsymbol{\theta}_j$. 
In what follows we consider only Euclidean norms, denoted by $\|\cdot\|$.

Unlike fuzzy clustering algorithms, the sum-to-one constraint is not imposed on the rows of $U$ in possibilistic clustering algorithms, i.e. the summation $\sum\nolimits_{j=1}^m u_{ij}$ is not necessarily equal to $1$ for each $\mathbf{x}_i$. According to \cite{Kris93}, \cite{Kris96}, the $u_{ij}$'s should satisfy the conditions, 
\begin{multline}
\label{conditions}
(\text{C1})\ {u_{ij} \in [0,1]},\ \ (\text{C2})\ {\max_{j=1,...,m}u_{ij}>0} \\ \text{and}\ \ (\text{C3})\ \ {0<\sum\limits_{i=1}^N u_{ij}<N}
\end{multline}
In words, (C2) means that no vector is allowed to be totally incompatible with all clusters, whereas (C3) means that for a given cluster, there is at least one data point that is not totally incompatible with it. Loosely speaking, each data point should ``belong" to at least one cluster (C2), whereas no cluster is allowed to be ``empty" (C3). The aim of a possibilistic algorithm is to move $\boldsymbol{\theta}_j$'s towards the centers of regions where the data points of $X$ form aggregations (i.e. to dense regions). This is carried out via the minimization of, among others, the following objective function \cite{Kris96}\footnote{We use this cost function, instead of the one given in the seminal paper \cite{Kris93}, since the proposed scheme, to be presented in the next section, is based on it. However, for reasons of thoroughness, we give also the cost function of \cite{Kris93}, which is
{\scriptsize $$J_{PCM}'(\Theta,U)={\sum\limits_{j=1}^m J_j'} \equiv \sum\limits_{j=1}^m \left[ \sum\limits_{i=1}^N u_{ij}^q\|\mathbf{x}_i-\boldsymbol{\theta}_j\|^2 + \gamma_j \sum\limits_{i=1}^N (1-u_{ij})^q \right ]$$}
where $q$ is a parameter that ``resembles'' to the fuzzifier in FCM (such a parameter does not appear in $J_{PCM}$ in eq.~({\ref{Jpcm}})).\label{foot2}}:
%
\begin{multline}
J_{PCM}(\Theta,U)=\sum\limits_{j=1}^m J_j \equiv \\ \equiv\sum\limits_{j=1}^m \overbrace{\left[ \sum\limits_{i=1}^N u_{ij}\|\mathbf{x}_i-\boldsymbol{\theta}_j\|^2 + \gamma_j \sum\limits_{i=1}^N (u_{ij}\ln u_{ij}-u_{ij}) \right ]}^{J_j}
\label{Jpcm}
\end{multline}
%
with respect to $\boldsymbol{\theta}_j$'s and $u_{ij}$'s, while $\gamma_j$'s are {\it fixed} user-defined positive parameters. Note that the second term in the bracketed expression in the right hand side of eq. (\ref{Jpcm}) prevents the algorithm from ending up with the trivial zero solution for $u_{ij}$'s.
%
%

Proceeding with the minimization of $J_{PCM}(\Theta,U)$ with respect to $u_{ij}$ and $\boldsymbol{\theta}_j$, we end up with the following PCM updating equations,
\begin{center}
\begin{minipage}{.9\linewidth}
\begin{equation}
	u_{ij}(t)=\exp\left(-\frac{ \|\mathbf{x}_i - \boldsymbol{\theta}_j(t) \|^2 }{\gamma_j}\right)
\label{uij}
\end{equation}
\end{minipage}
\begin{minipage}{.9\linewidth}
\begin{equation}
	\boldsymbol{\theta}_j(t+1)=\frac{\sum\nolimits_{i=1}^N u_{ij}(t) \mathbf{x}_i}{\sum\nolimits_{i=1}^N u_{ij}(t)},
\label{theta}
\end{equation}
\end{minipage}
\end{center}
for $t=0,1,2,\ldots$, with the iterations being started after the initialization of $\boldsymbol{\theta}_j$'s to $\boldsymbol{\theta}_j(0)$'s, $j=1,\ldots,m$. Iterations are performed until a specific termination criterion is met (e.g., no significant change occurs on $\boldsymbol{\theta}_j$'s between two succesive iterations). Note from the updating eq.~(\ref{uij}) that $u_{ij}$ decreases exponentially fast as the distance between $\mathbf{x}_i$ and $\boldsymbol{\theta}_j$ increases. Also, from eq. (\ref{theta}), it follows that {\it all} data vectors contribute to the estimation of the next location of each one of the representatives. 
However, the farther a data vector lies from the current location of a specific $\boldsymbol{\theta}_j$ the less it contributes to the determination of its new location, as eq.~(\ref{uij}) indicates. 


Let us comment now on the parameters $\gamma_j$, $j=1,\ldots,m$. These are a priori estimated and kept fixed during the execution of the algorithm. A common strategy for their estimation is to run the FCM algorithm first and set
\begin{equation}
\label{Ketaj}
\gamma_j=K \frac{ \sum_{i=1}^N u^{FCM}_{ij} \|\mathbf{x}_i-\boldsymbol{\theta}_j\|^2 }{ \sum_{i=1}^N u^{FCM}_{ij} }, \ \ \ j=1,\ldots,m
\end{equation}
where $\boldsymbol{\theta}_j$'s and $u^{FCM}_{ij}$'s are the final FCM estimates for cluster representatives and $u_{ij}$ coefficients, respectively\footnote{The version of eq.~(\ref{Ketaj}) proposed in \cite{Kris93} for the cost function $J'_{PCM}$ (see footnote~\ref{foot2}), raises $u^{FCM}_{ij}$'s to the $q$th power. However, in $J_{PCM}$ no parameter $q$ is involved.}. Parameter $K$ is user-defined and is usually set equal to $1$ \footnote{An alternative choice for $\gamma_j$'s, given in \cite{Kris93} is $\gamma_j=\frac{\sum\nolimits_{u_{ij} > k}^{} \|\mathbf{x}_i-\boldsymbol{\theta}_j\|^2}{\sum\nolimits_{u_{ij} > k}^{} 1},$ where $k$ is an appropriate threshold.}. From eq.~\eqref{Ketaj}, $\gamma_j$ turns out to be a measure of variance of cluster $C_j$ around its representative.

It is worth noting that, due to the independence between $u_{ij}$'s, $j=1,\ldots,m$, for a specific $\mathbf{x}_i$, the optimization problem solved by PCM can be decomposed into $m$ sub-problems, each one optimizing a specific $J_j$ function (see eq. (\ref{Jpcm})). Considering the representative $\boldsymbol{\theta}_j$ associated with a given $J_j$, we have from eq. (\ref{uij}) that points that lie closer to the cluster representative will have larger degrees of compatibility with $C_j$. On the other hand, eq.~(\ref{theta}) implies that the new position of $\boldsymbol{\theta}_j$ is mainly specified by the data points that are most compatible with $C_j$. It is not difficult to see that such a coupled iteration is expected to lead representative $\boldsymbol{\theta}_j$ towards the center of the dense in data region that lies closer to its initial position, for appropriate choices of $\gamma_j$'s (see also propositions 3 and 4, in Appendix B).

\subsection{PCM issues and potential solutions}
\label{subsec22}
Having described the main characteristics of the algorithm and the rationale behind them, let us focus now on some issues that a user faces with PCM. The first one concerns the $m$ parameters $\gamma_j$'s. An improper choice of $\gamma_j$'s may lead PCM to failure in identifying a sparse cluster that is located very close to a denser cluster (see also experiment 1, in section~\ref{subsec41}), or it may even lead the algorithm to recover the whole data set as a single cluster \cite{Barn96}. Referring to eq.~(\ref{Ketaj}), the $u_{ij}$'s produced by the FCM ($u^{FCM}_{ij}$'s), are not always accurate (e.g. in the presence of noise, \cite{Kris96}). In addition, the choice of the parameter $K$ is clearly data-dependent and there is no general clue on how to select it. In order to deal with this problem, \cite{Yang06} proposes the replacement of all $\gamma_j$'s by a single quantity that is controlled by only two parameters: (a) the number of clusters and (b) a parameter that plays a ``fuzzifier" role.

An additional source of inconveniences concerning $\gamma_j$'s is the fact that, once they have been set, they remain fixed during the execution of PCM. This reduces the ability of the algorithm to track the variations in the clusters formation during its evolution. A way out of this problem is to allow $\gamma_j$'s to vary during the execution of the algorithm. A hint on this issue has been given in \cite{Kris93}, but, to the best of our knowledge, no further work has been done towards this direction.

The second issue, which is related with the first one, is that of {\em coincident clusters}. As stated before, with a proper choice of $\gamma_j$'s, PCM drives, in principle, the cluster representatives towards the centers of the dense in data regions that are closer to their initial positions. Therefore, if two or more representatives are initialized close to the same dense region, they will move towards its center, i.e., all of them will represent the same cluster. Alternatively, one could say that the clusters represented by these representatives are  coincident\footnote{This point of view justifies the term ``coincident clusters".}. This situation arises due to the absence of dependence between the coefficients $u_{ij}$, $j=1,\ldots,m$, associated with a specific $\mathbf{x}_i$ (see eq. (\ref{uij})), which, as an indirect consequence, allows the representatives to move independently from each other (see eq. (\ref{theta})). Note that such an issue does not arise in FCM due to the sum-to-one constraint imposed on the $u_{ij}$'s associated with each $\mathbf{x}_i$. Several ways to deal with this problem have been proposed in the literature. More specifically, in \cite{Pal05}, a variation of PCM is proposed, named Possibilistic Fuzzy c-means (PFCM), which combines concepts from PCM and FCM.
Relative approaches are discussed in \cite{Zhan04}, \cite{Pal97}, \cite{Wu10}, while other approaches are proposed in \cite{Tree13}, \cite{Timm04}.

A common feature in all the previously mentioned works, is that condition (C3), which basically requires all clusters to be non-empty, is respected. Thus, in all the algorithms, the true number of clusters $m$ is implicitly required, in order to give them the ability to recover all clusters, without, hopefully, returning coincident clusters. Thus, the requirement of the knowledge of the number of clusters is still here in disguise. A conceptually simple solution to address this requirement, while respecting condition (C3), comes from the PCM itself. Specifically, one could run the original PCM with an overestimated number of cluster representatives which will be initialized appropriately (at least one representative should lie at each dense in data region). Then, after a proper selection of $\gamma_j$'s, PCM will (hopefully) recover the physical clusters, that is, it will move at least one representative to the center of each dense region. Then, an additional step is required in order to identify coincident clusters and remove duplicates. This idea has been partially discussed in \cite{Kris96}, without, however, proposing explicitly to run the algorithm with an overdetermined number of clusters. However, in this case a reliable method for identifying duplicate clusters should be invented.


The APCM algorithm proposed in this paper alleviates the shortcomings of PCM discussed previously. The aims of APCM are (a) to place initially at least one representative to each physical cluster and (b) to retain each representative to the physical cluster where it was first placed, leading it gradually to its center. The first aim is achieved by adopting the results provided by the FCM algorithm, when the latter is executed with an overestimated number of physical clusters. The second one is achieved through a different definition of $\gamma_j$'s, while, in addition (and perhaps more importantly), $\gamma_j$'s are allowed to adapt at each iteration of the algorithm. In contrast to PCM the adopted expression of $\gamma_j$'s takes into account only the points that are most compatible with the corresponding $C_j$'s at each iteration of the algorithm. The benefit of the proposed approach is twofold. First, the algorithm becomes more flexible in tracking the variations in the formation of the clusters, as it evovles. As a result, APCM is capable in dealing with difficult clustering problems in which PCM frequently fails, e.g. the identification of small and/or sparse physical clusters that are located close to bigger and/or denser clusters. Second, the algorithm allows the possibility for a cluster to become empty, by reducing its corresponding $\gamma_j$ towards zero. This allows us to start the APCM with an overestimated number of natural clusters and end up with a single representative placed to the center of each natural cluster, eliminating all the remaining representatives\footnote{This is theoretically justified in proposition 5, in Appendix B}. Thus, in APCM the number of clusters, $m$, is also considered to be a time-varying quantity.

\section{The Adaptive PCM (APCM)}
\label{sec3}
In this section, we describe in detail the various stages of the algorithm. Specifically, we first describe the way its parameters are initialized. Next, we comment on the updating of its parameters ($u_{ij}$'s, $\boldsymbol{\theta}_j$'s, $\gamma_j$'s, $m$) and we discuss in detail, how the initial estimate of the number of natural clusters can be reduced to the true one, by exploiting the adaptation of $\gamma_j$'s.

The proposed APCM algorithm stems from the optimization of the cost function of the original PCM (eq.~\eqref{Jpcm}), by setting
\begin{equation}
\gamma_j=\frac{\hat{\eta}}{\alpha}\eta_j
\label{gamma}
\end{equation}
where, parameter $\eta_j$ is a measure of the mean absolute deviation of the current form of cluster $C_j$, $\hat{\eta}$ is a constant defined as the minimum among all initial $\eta_j$'s, $\hat{\eta}=\min\limits_{j} \eta_j$ and $\alpha$ is a user-defined positive parameter. The rationale of the adopted expression for $\gamma_j$'s as given in eq.~\eqref{gamma} will be analyzed and further discussed in subsection~\ref{subsec3}.

\subsection{Initialization in APCM}
\label{subsec31}
As mentioned previously, first, we make an overestimation, denoted by $m_{ini}$, of the true number of natural clusters $m$, formed by the data points. Regarding $\boldsymbol{\theta}_j$'s and $\eta_j$'s, their initialization drastically affects the final clustering result in APCM. Thus, a good starting point for them is of crucial importance. To this end, the initialization of $\boldsymbol{\theta}_j$'s is carried out using the final cluster representatives obtained from the FCM algorithm, when the latter is executed with $m_{ini}$ clusters. Taking into account that FCM is very likely to drive the representatives to dense in data regions (since $m_{ini}>m$), the probability of at least one of the initial $\boldsymbol{\theta}_j$'s to be placed in each dense region (cluster) of the data set, increases with $m_{ini}$.

After the initialization of $\boldsymbol{\theta}_j$'s, we propose to initialize $\eta_j$'s as follows:
\begin{equation}
\label{initetaj}
\eta_j=\frac{ \sum_{i=1}^N u^{FCM}_{ij} \|\mathbf{x}_i-\boldsymbol{\theta}_j\|}{ \sum_{i=1}^N u^{FCM}_{ij} }, \ \ \ j=1,\ldots,m_{ini}
\end{equation}
where $\boldsymbol{\theta}_j$'s and $u^{FCM}_{ij}$'s in eq.~(\ref{initetaj}) are the final parameter estimates obtained by FCM\footnote{An alternative initialization for $\boldsymbol{\theta}_j$'s and $\eta_j$'s is proposed in \cite{Xen13}.}.

It is worth noting that the above initialization of $\eta_j$'s involves {\it Euclidean instead of squared Euclidean distances}, as is the case for $\gamma_j$'s in the classical PCM. As it will be shown next, this convention will also be kept in the update expressions of $\eta_j$'s, given below, while its rationale is explained in Section \ref{subsec3}.

\subsection{Parameter adaptation in APCM}
In the proposed APCM algorithm, all parameters are adapted during its execution. More specifically, this refers to, (a) the degrees of compatibility $u_{ij}$'s and the cluster representatives $\boldsymbol{\theta}_j$'s, (b) the adjustment of the number of clusters and (c) the adaptation of $\eta_j$'s, with (b) and (c) being two interrelated processes. 

As far as the updating of $u_{ij}$'s is concerned, after setting $\gamma_j=\frac{\hat{\eta}}{\alpha}\eta_j$ in eq.~\eqref{Jpcm} and minimizing $J_{PCM}(\Theta,U)$ with respect to $u_{ij}$, we end up with the following equation
\begin{equation}
	u_{ij}(t)=\exp\left(-\frac{ \|\mathbf{x}_i - \boldsymbol{\theta}_j(t) \|^2 }{\gamma_j(t)}\right)= \exp\left(-\frac{\alpha}{\hat{\eta}}\frac{ \|\mathbf{x}_i - \boldsymbol{\theta}_j(t) \|^2 }{\eta_j(t)}\right)
\label{uij2}
\end{equation}
where iteration dependence on $\eta_j$'s has now been inserted. The updating of $\boldsymbol{\theta}_j$'s is done as in the original PCM scheme according to eq.~(\ref{theta}). Concerning the adjustment of the number of clusters $m(t)$ at $t$th iteration, we proceed as follows. Let $label$ be a $N$-dimensional vector, whose $i$th element is the index of the cluster which is {\it most compatible} with $\mathbf{x}_i$, that is the index $j$ for which $u_{ij}(t)=\max_{r=1,...,m(t)} u_{ir}(t)$. At each iteration of the algorithm, the adjustment (reduction) of the number of clusters $m(t)$ is achieved by examining, for each cluster $C_j$, if its index $j$ appears at least once in the vector $label$ (i.e. if there exists at least one vector $\mathbf{x}_i$ that is most compatible with $C_j$). If this is the case, $C_j$ is preserved. Otherwise, $C_j$ is eliminated and, thus, $U$ and $\Theta$ are updated accordingly. As a result, the current number of clusters $m(t)$ is reduced (see {\it Possible cluster elimination} part in Algorithm~\ref{alg:apcm}).

Finally, concerning $\gamma_j(t)$'s, in contrast to the classical PCM where $\gamma_j$'s remain constant during the execution of the algorithm, in APCM the parameters $\gamma_j$'s, given in eq.~\eqref{gamma}, {\it are adapted} at each iteration through the adaptation of the corresponding $\eta_j$'s. More specifically, we propose to compute the parameter $\eta_j$ of a cluster $C_j$ at each iteration, as the {\it mean absolute deviation} of the most compatible to cluster $C_j$ data vectors, i.e., 
\begin{equation}
\eta_j(t+1)=\frac{1}{n_j(t)}\sum\nolimits_{\mathbf{x}_i:u_{ij}(t)=\max_{r=1,...,m(t+1)} u_{ir}(t)}^{} \|\mathbf{x}_i-\boldsymbol{\mu}_j(t)\|
\label{adapteta}
\end{equation}
where $n_j(t)$ denotes the number of the data points $\mathbf{x}_i$ that are most compatible with the cluster $C_j$ at iteration $t$ and $\boldsymbol{\mu}_j(t)$ the mean vector of these data points (see also {\it Adaptation of $\eta_j$'s} part in Algorithm~\ref{alg:apcm}). Note that, the definition of $\gamma_j$'s in the proposed updating mechanism from eqs.~\eqref{gamma},~\eqref{adapteta}, differs from others used in the classical PCM, as well as in many of its variants, in two distinctive points. First, $\eta_j$'s in APCM are updated taking into account {\it only} the data vectors that are most compatible to cluster $C_j$ and not all the data points weighted by their corresponding $u_{ij}$ coefficients. This particularity is an essential condition for succeeding cluster elimination, as by this way a parameter $\eta_j$ may be pushed to zero value, thus eliminating the corresponding cluster $C_j$, whereas in the case where all data points are taken into account, $\eta_j$ would remain always positive. Second, the distances involved in eq.~(\ref{adapteta}) are between a data vector and the mean vector $\boldsymbol{\mu}_j(t)$ of the most compatible points of the cluster; {\it not} from $\boldsymbol{\theta}_j(t)$, as in previous works (e.g. \cite{Kris93}, \cite{Zhan04}). This allows more accurate estimates of $\eta_j$'s, since $\boldsymbol{\mu}_j(t)$ is expected to be closer to the next location of $\boldsymbol{\theta}_j$, $\boldsymbol{\theta}_j(t+1)$, than $\boldsymbol{\theta}_j(t)$. This is crucial mainly during the first few iterations of the algorithm where the position of $\boldsymbol{\theta}_j$ may vary significantly from iteration to iteration. It is also noted that, in the (rare) case where there are two or more clusters, that are equally compatible with a specific $\mathbf{x}_i$, the latter will contribute to the determination of the parameter $\eta$ of {\it only} one of them, which is chosen arbitrarily. This modification prevents a situation of having equal $\eta_j$'s in such exceptional cases (e.g. in data sets consisting of symmetrically arranged data points) which assists the successful cluster elimination procedure, in situations where this must be carried out. Finally, it is worth pointing out that the definition of eq.~\eqref{adapteta}, implicitly interrelates the various $\gamma_j$'s and this interrelation passes to the $u_{ij}$'s concerning a given $\mathbf{x}_i$ through eq.~\eqref{uij}. 

The APCM algorithm can be stated as follows.

\begin{algorithm}[H]
\caption{ [$\Theta$, $U$, $label$] = APCM($X$, $m_{ini}$, $\alpha$)}
\label{alg:apcm}
\begin{algorithmic}[1]
\Require {$X$, $m_{ini}$, $\alpha$}
\State $t=0$
\Statex {\Comment{Initialization of $\boldsymbol{\theta}_j$'s}}
\State \textbf{Initialize:} $\boldsymbol{\theta}_j(t)$ {\it via FCM (see subsection \ref{subsec31})}
\Statex {\Comment{Initialization of $\eta_j$'s}}
   \State \textbf{Set:} $\eta_j(t)=\frac{ \sum_{i=1}^n u^{FCM}_{ij} \|\mathbf{x}_i-\boldsymbol{\theta}_j(t)\|}{ \sum_{i=1}^n u^{FCM}_{ij} }$, $j=1,...,m_{ini}$ {\it (see subsection \ref{subsec31})}
	\State \textbf{Set:} $\hat{\eta}=\min\nolimits_{j=1,...,m_{ini}} \eta_j(t)$
\State $m(t)=m_{ini}$
%
\Repeat 
\Statex {\Comment {Update $U$}}
\State $u_{ij}(t)=\exp\left(-\frac{\alpha}{\hat{\eta}}\frac{ ||\mathbf{x}_i - \boldsymbol{\theta}_j(t) ||^2 }{\eta_j(t)}\right)$, $i=1,...,N$, $j=1,...,m(t)$
\Statex {\Comment{Update $\Theta$}}
\State $\boldsymbol{\theta}_j(t+1)=\left.{\sum\limits_{i=1}^N u_{ij}(t)\mathbf{x}_i} \middle/ {\sum\limits_{i=1}^N u_{ij}(t)} \right.$, $j=1,...,m(t)$
\Statex {\Comment{Possible cluster elimination}}
\For{$i \leftarrow 1 \textbf{ to } N$}
	\State \textbf{Determine:} $u_{ir}(t)=\max_{j=1,...,m(t)} u_{ij}(t)$
	\State \textbf{Set:} $label(i)=r$
\EndFor
\State $p=0$ {\it \ \ \ //number of removed clusters at iteration $t$}
\For{$j \leftarrow 1 \textbf{ to } m$}
	\If{$j \notin label$}
		\State \textbf{Remove:} $C_j$ (and \textbf{renumber} accordingly $\Theta$ and the columns of $U$)
		\State $p=p+1$
	\EndIf
\EndFor
\State $m(t+1)=m(t)-p$
\Statex {\Comment{Adaptation of $\eta_j$'s}}
\State $\eta_j(t+1)=$ 

$=\frac{1}{n_j(t)}\sum\nolimits_{\mathbf{x}_i:u_{ij}(t)=\max\limits_{r=1,...,m(t+1)} u_{ir}(t)}^{} \|\mathbf{x}_i-\boldsymbol{\mu}_j(t)\|$, $j=1,...,m(t+1)$ 
\State $t=t+1$
\Until{the change in $\boldsymbol{\theta}_j$'s between two successive iterations becomes sufficiently small}\\
\Return {$\Theta$, $U$, $label$}
\end{algorithmic}
\end{algorithm}

\subsection{Rationale of the algorithm}
\label{subsec3}

\begin{figure*}[htb!]
\centering
\subfloat[Data set]{\includegraphics[width=0.32\textwidth]{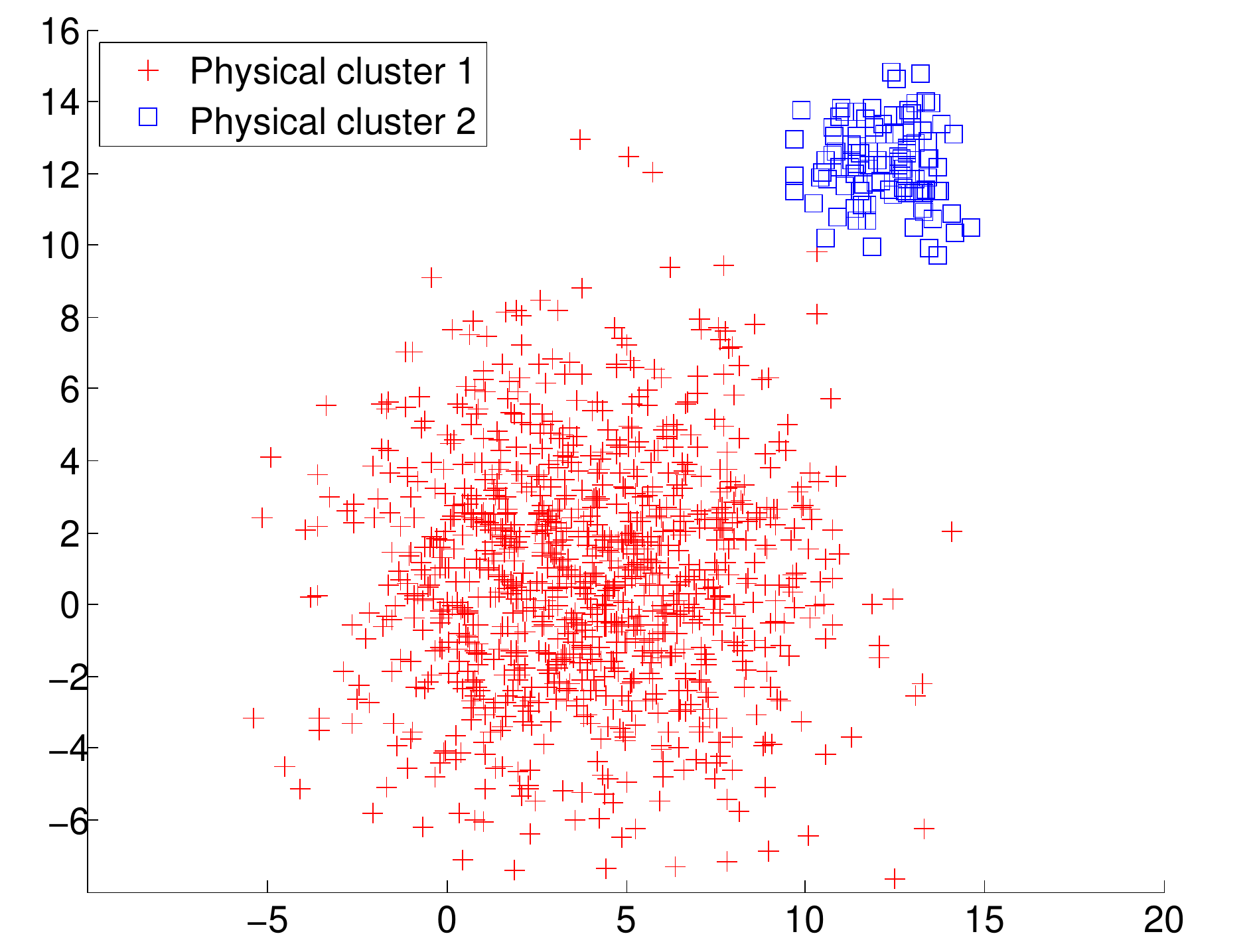}
\label{cex1a}}
\hfil
\centering
\subfloat[Initialization of PCM]{\includegraphics[width=0.32\textwidth]{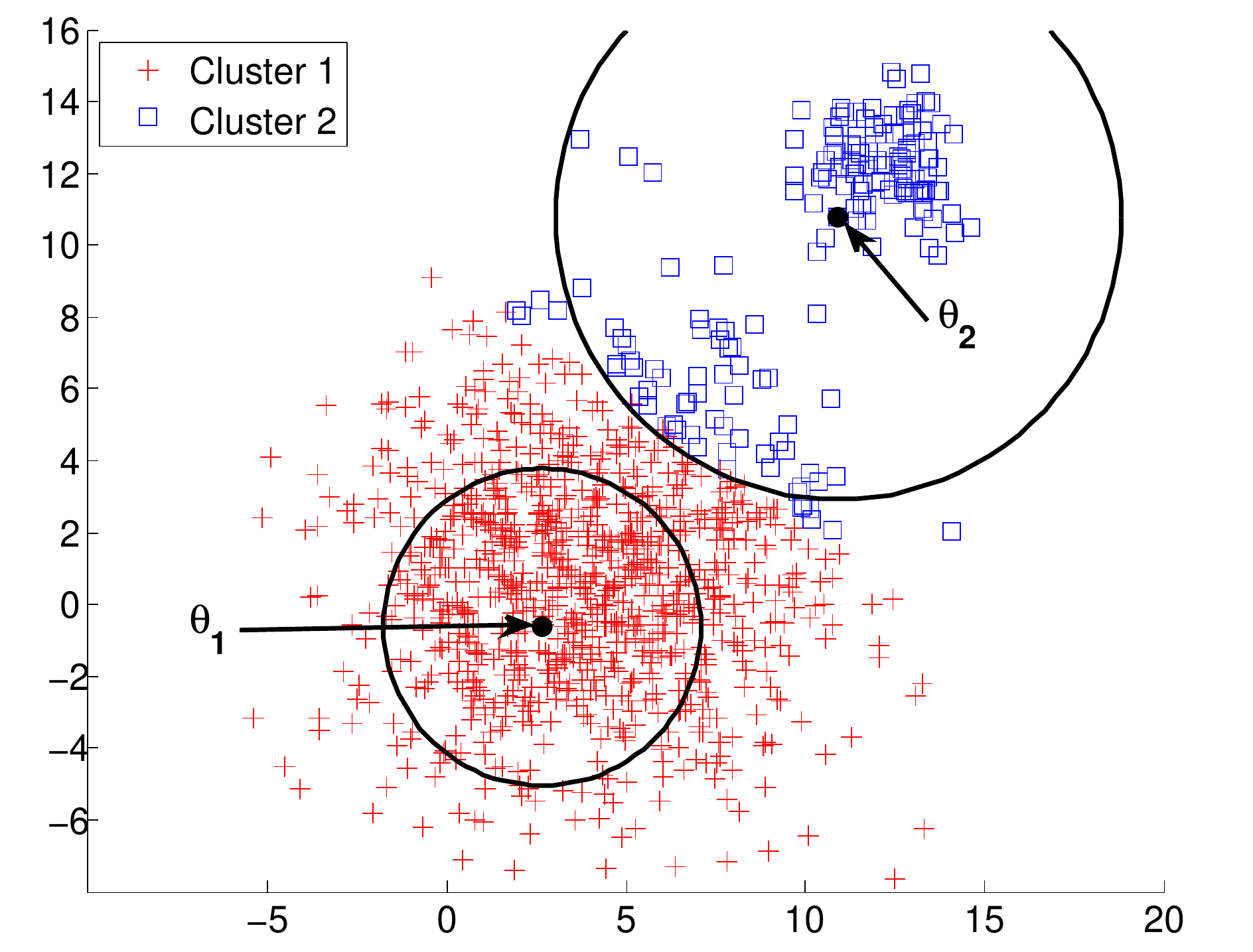} 
\label{cex1b}}
\hfil
\centering
\subfloat[3rd iteration of PCM]{\includegraphics[width=0.32\textwidth]{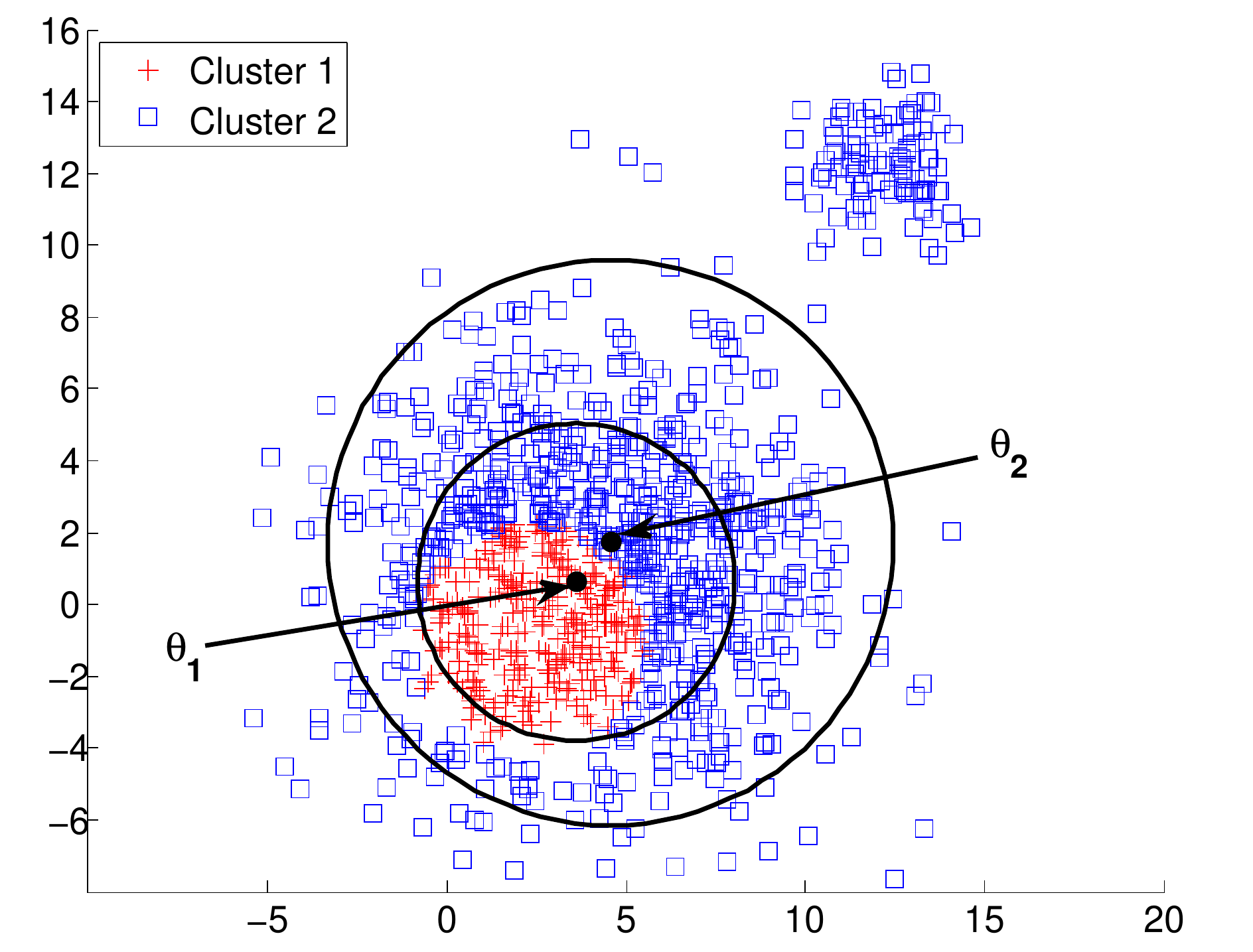} 
\label{cex1c}}
\hfil
\centering
\subfloat[Initialization of APCM]{\includegraphics[width=0.32\textwidth]{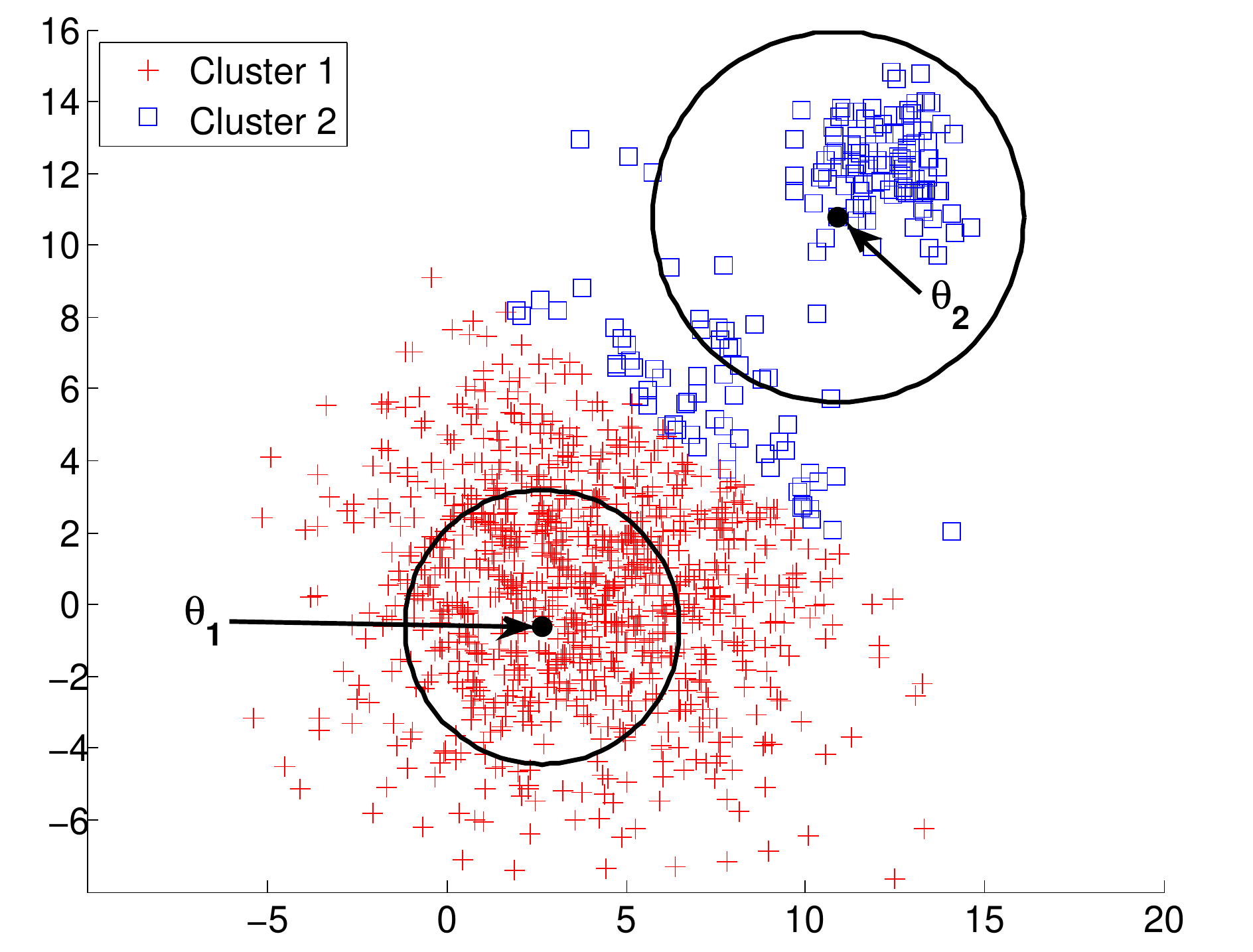} 
\label{cex1d}}
\hfil
\centering
\subfloat[3rd iteration of APCM]{\includegraphics[width=0.32\textwidth]{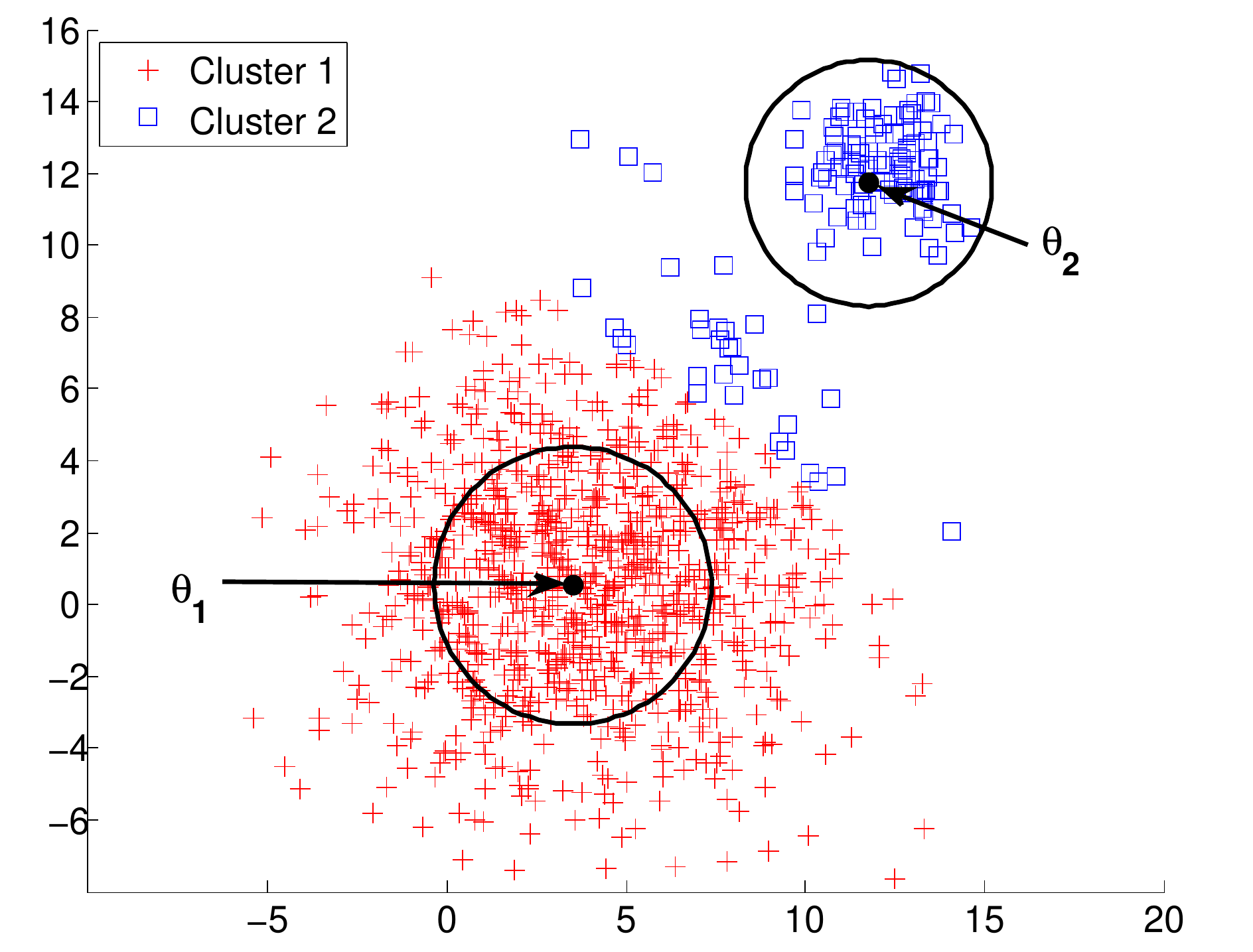} 
\label{cex1e}}
\hfil
\centering
\subfloat[Final iteration of APCM]{\includegraphics[width=0.32\textwidth]{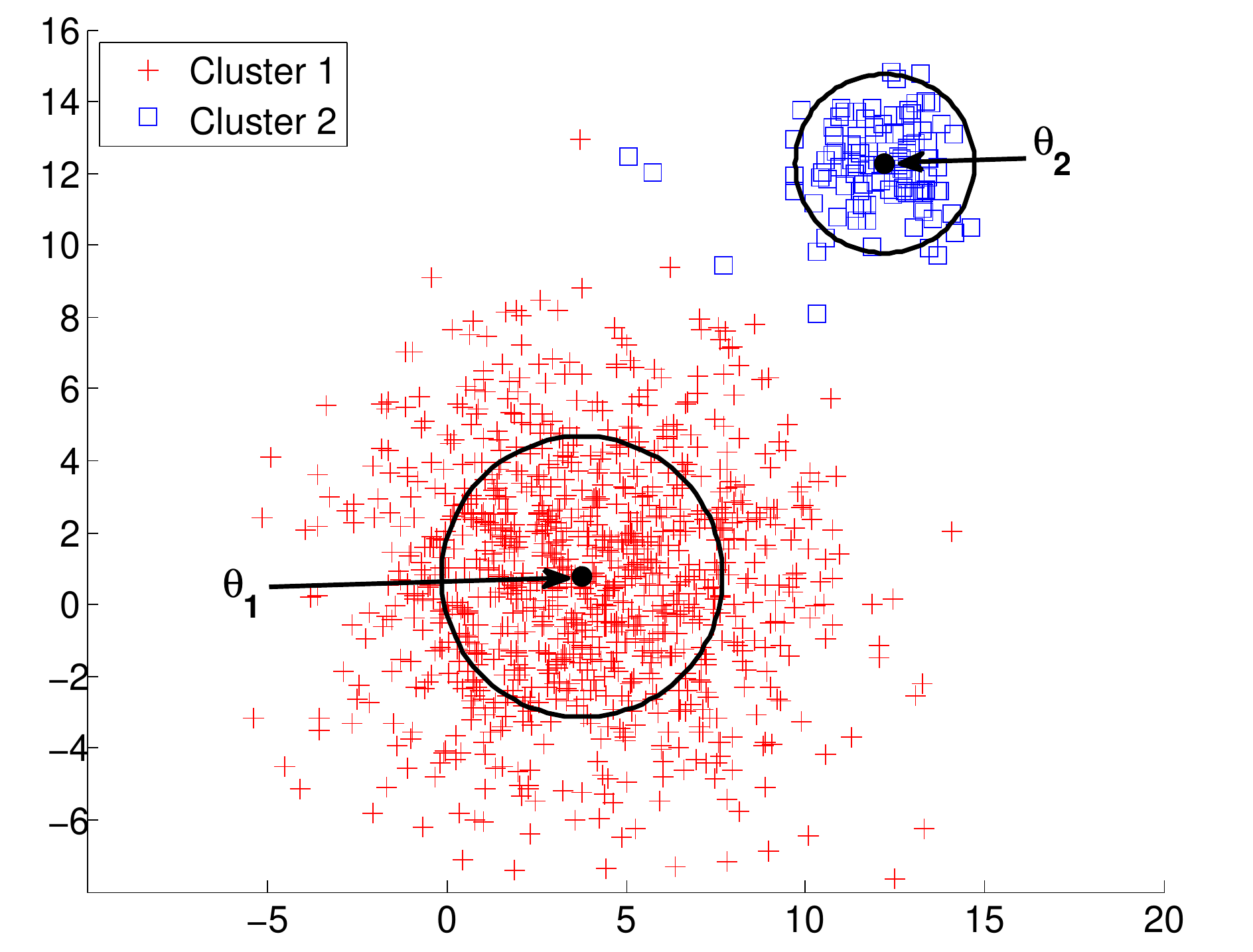} 
\label{cex1f}}
\hfil
\centering{\caption{An example of a two dimensional data set consisting of two physical clusters that have big difference in their variances and are located very close to each other. (a) The data set, (b) the initial stage of PCM, (c) the 3rd iteration of PCM, (d) the initial stage of APCM, (e) the 3rd iteration of APCM and (f) the final stage of APCM. The circles are centered at $\boldsymbol{\theta}_j$'s and have radius $\sqrt{\gamma_j}$'s.}
\label{cex11}}
\end{figure*}


As mentioned in the previous section, the modifications made in the original PCM leading to APCM aim at a) making the algorithm capable to handling stringent clustering situations and b) allowing for cluster elimination. In the following we describe in more detail the hidden mechanisms of APCM that render these two goals feasible.


First, we consider the case where we have two physical clusters of very different variances that are located very close to each other (Fig.~\ref{cex1a}). This is a difficult clustering problem, in which most state-of-the-art clustering techniques fail. We assume that after initialization with FCM, PCM has two representatives in the areas of the physical clusters, as shown in Fig.~\ref{cex1b}, with $\boldsymbol{\theta}_1$ lying in the high variance physical cluster and $\boldsymbol{\theta}_2$ in the low variance physical cluster. Then, from eq.~\eqref{Ketaj} and due to the proximity of the two physical clusters, it turns out that $\gamma_2$ will be much larger than the actual variance of physical cluster 2. This is so because, besides the points of physical cluster 2, the numerous, yet more distant, points of physical cluster 1, will contribute to the computation of $\gamma_2$ from eq.~\eqref{Ketaj}. 
%
%
This means that the representative of the small variance cluster ($\boldsymbol{\theta}_2$) is affected by the data points of its nearby cluster ($C_1$), according to eqs.~\eqref{uij},~\eqref{theta}. As a result, PCM is likely to end up with all representatives converging erroneously in the center of the large variance physical cluster 1 (Fig.~\ref{cex1c}).

This issue of PCM is alleviated in APCM, by taking care for each representative to stay in the region of the physical cluster where it was first placed. To this end, APCM reduces (compared to PCM) the range of influence arround each $\boldsymbol{\theta}_j$ that has $\gamma_j$ larger than the variance of the smallest physical cluster formed in the data set. In this way, the probability of the movement of a representative which is initialized in the region of a specific physical cluster with a given variance towards the center of a nearby physical cluster with a larger variance, is reduced. In particular, the larger (smaller) the $\gamma_j$ than the variance of the smallest physical cluster is, the more it is reduced (enhanced). On the other hand, a $\gamma_j$ that is equal to the variance of the smallest physical cluster is not affected at all. Focusing on a given iteration (dropping the index $t$), this is achieved in APCM by defining $\gamma_j$ as in eq.~\eqref{gamma}. This definition results from the $\gamma_j$'s as defined in the original PCM via the following transformations.

\begin{align*} 
\gamma_j^{PCM}=\frac{\sum_{i=1}^N u^{FCM}_{ij} \|\mathbf{x}_i-\boldsymbol{\theta}_j\|^2}{\sum_{i=1}^N u^{FCM}_{ij}}
\text{ $\overset{\circled{\scriptsize{1}}}{\leadsto}$ } \\ \gamma_j'=\frac{\sum\nolimits_{\mathbf{x}_i:u_{ij}=\max_{r=1,...,m} u_{ir}} \|\mathbf{x}_i-\boldsymbol{\mu}_j\|^2}{n_j}
\text{ $\overset{\circled{\scriptsize{2}}}{\leadsto}$ }
\end{align*}

\begin{equation}
\label{transformations2}
\eta_j^2=\left(\frac{\sum\nolimits_{\mathbf{x}_i:u_{ij}=\max_{r=1,...,m} u_{ir}}^{} \|\mathbf{x}_i-\boldsymbol{\mu}_j\|}{n_j}\right)^2 \text{ $\overset{\circled{\scriptsize{3}}}{\leadsto}$ } \eta_j\frac{\hat{\eta}}{\alpha}
\end{equation}

\begin{figure}[htpb!]
\centering
{\includegraphics[width=0.49\textwidth]{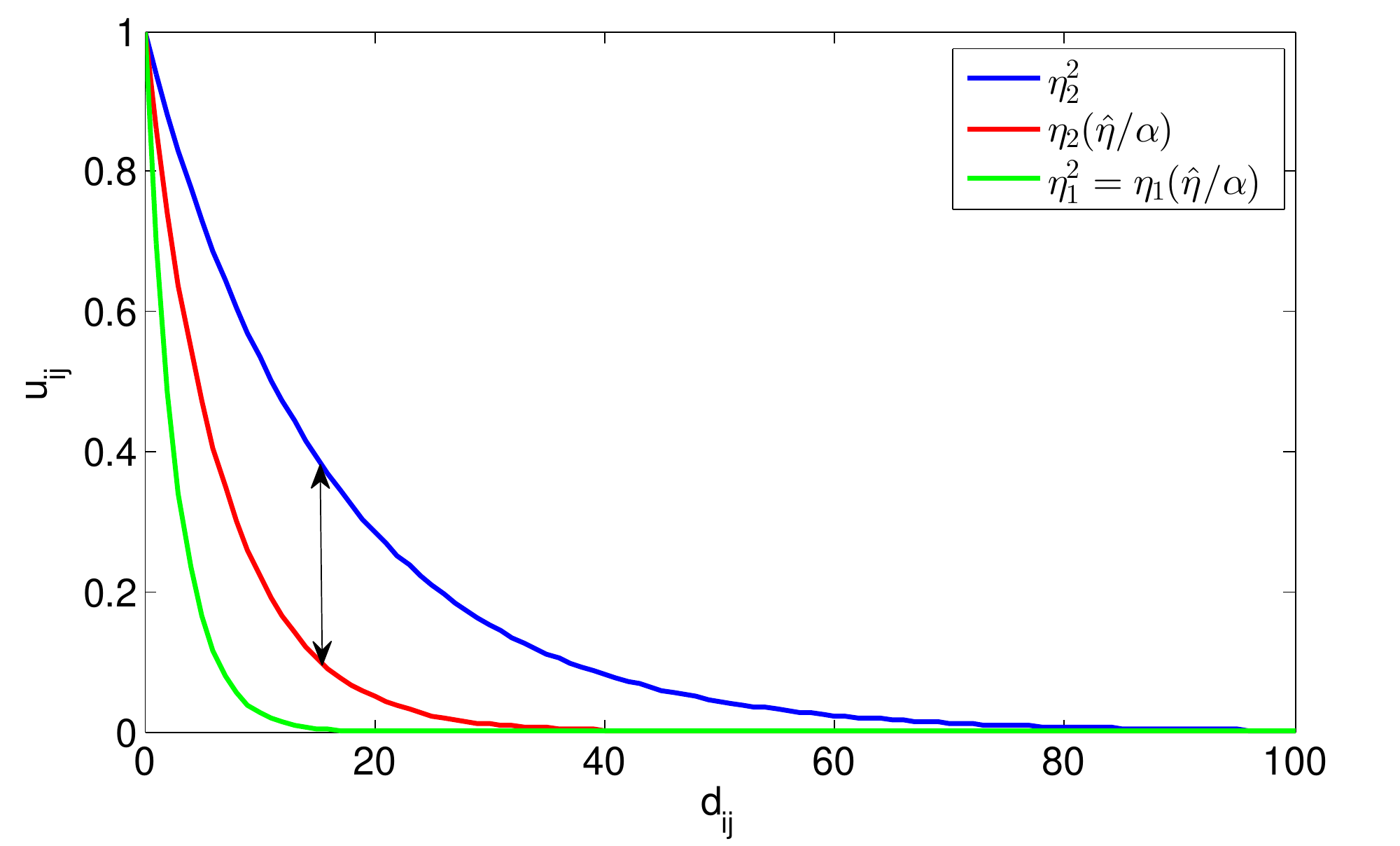}}
\centering{\caption{The degree of compatibility $u_{ij}$ with respect to distance $d_{ij}$ ($\eta_2>\eta_1=\hat{\eta}/\alpha$).}
\label{cdiagramu}}
\end{figure}

Under transformation \circled{\small{1}}, $\gamma_j^{PCM}$ is transformed to $\gamma_j'$, where (a) only the $\mathbf{x}_i$'s that are most compatible with $\boldsymbol{\theta}_j$ are taken into account and (b) $\boldsymbol{\theta}_j$ is replaced by $\boldsymbol{\mu}_j$. The adoption of the above hard computation of $\gamma_j'$'s is necessary for the cluster elimination procedure, as it will be further explained in the sequel. Transformation \circled{\small{2}} leads $\gamma_j'$ to $\eta_j^2$, which carries the same ``quality of information" with its predecessor and moreover, $\eta_j^2$ is upper bounded by $\gamma_j'$, $j=1,\ldots,m$ (see Proposition~\ref{prop1} in Appendix A). This intermediate step on the one hand reduces the influence of clusters arround their representatives while, on the other hand, is a prerequisite for transformation \circled{\small{3}}. Assuming that $\alpha$ is chosen so that the quantity $\hat{\eta}/\alpha$ equals to the mean alsolute deviation of the smallest physical cluster formed in the data set, then for each $\eta_j\geq\hat{\eta}/\alpha$ ($\eta_j\leq\hat{\eta}/\alpha$), we have that $\eta_j^2\geq\eta_j\left(\hat{\eta}/\alpha\right)$ ($\eta_j^2\leq\eta_j\left(\hat{\eta}/\alpha\right)$). That is, by substituting $\eta_j^2$ with $\eta_j(\hat{\eta}/\alpha)$, the greater (smaller) the $\eta_j$ of a cluster $C_j$ than $\hat{\eta}/\alpha$, the more the range of influence arround its $\boldsymbol{\theta}_j$ is reduced (enhanced) (see Fig.~\ref{cdiagramu} and Figs.~\ref{cex1d},~\ref{cex1e}). This justifies our choice for the $\gamma_j$'s given in eq.~\eqref{gamma}. 

\begin{figure*}[htb!]
\centering
\subfloat[1st iteration of APCM]{\includegraphics[width=0.32\textwidth]{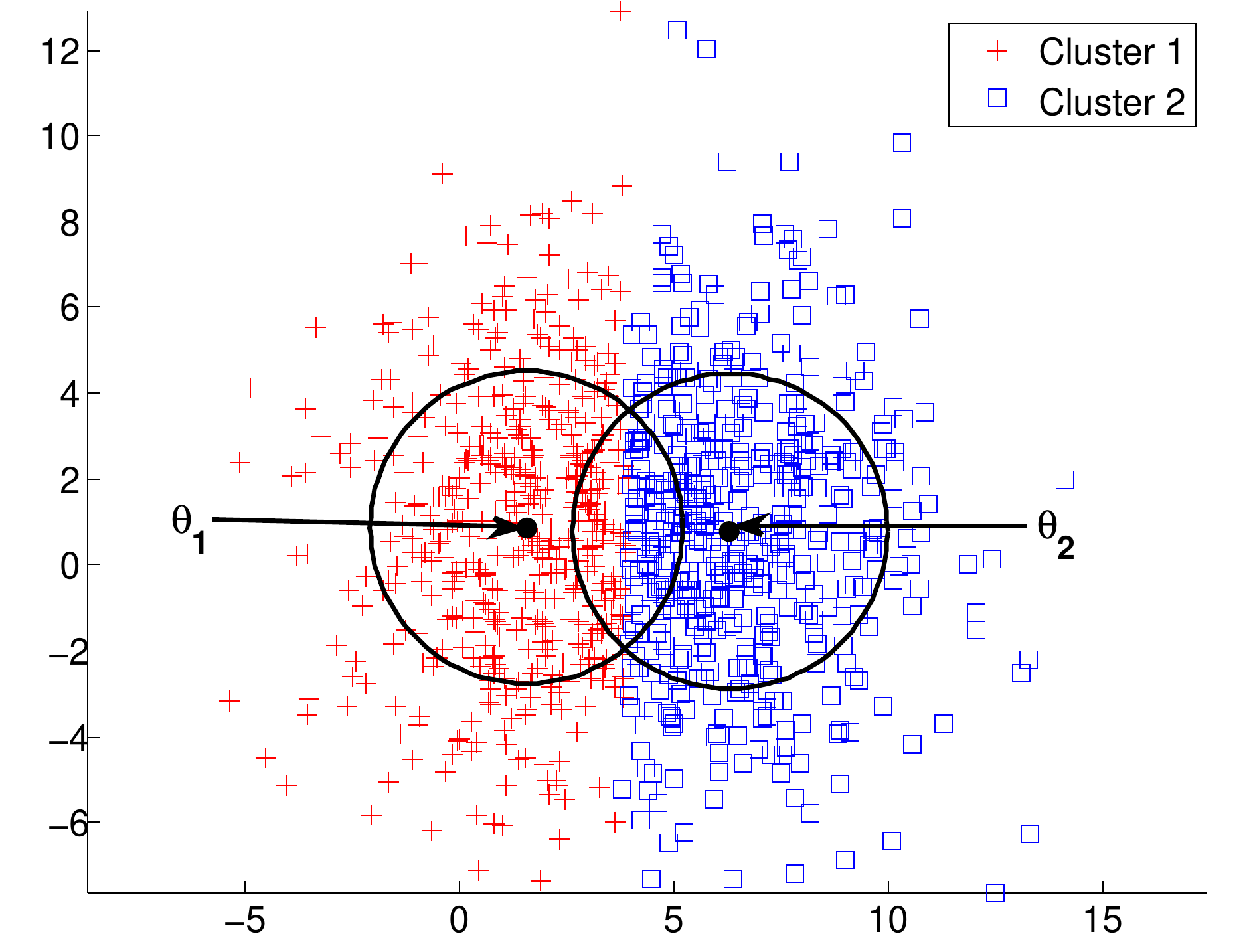}
\label{cex2a}}
\hfil
\centering
\subfloat[4th iteration of APCM]{\includegraphics[width=0.32\textwidth]{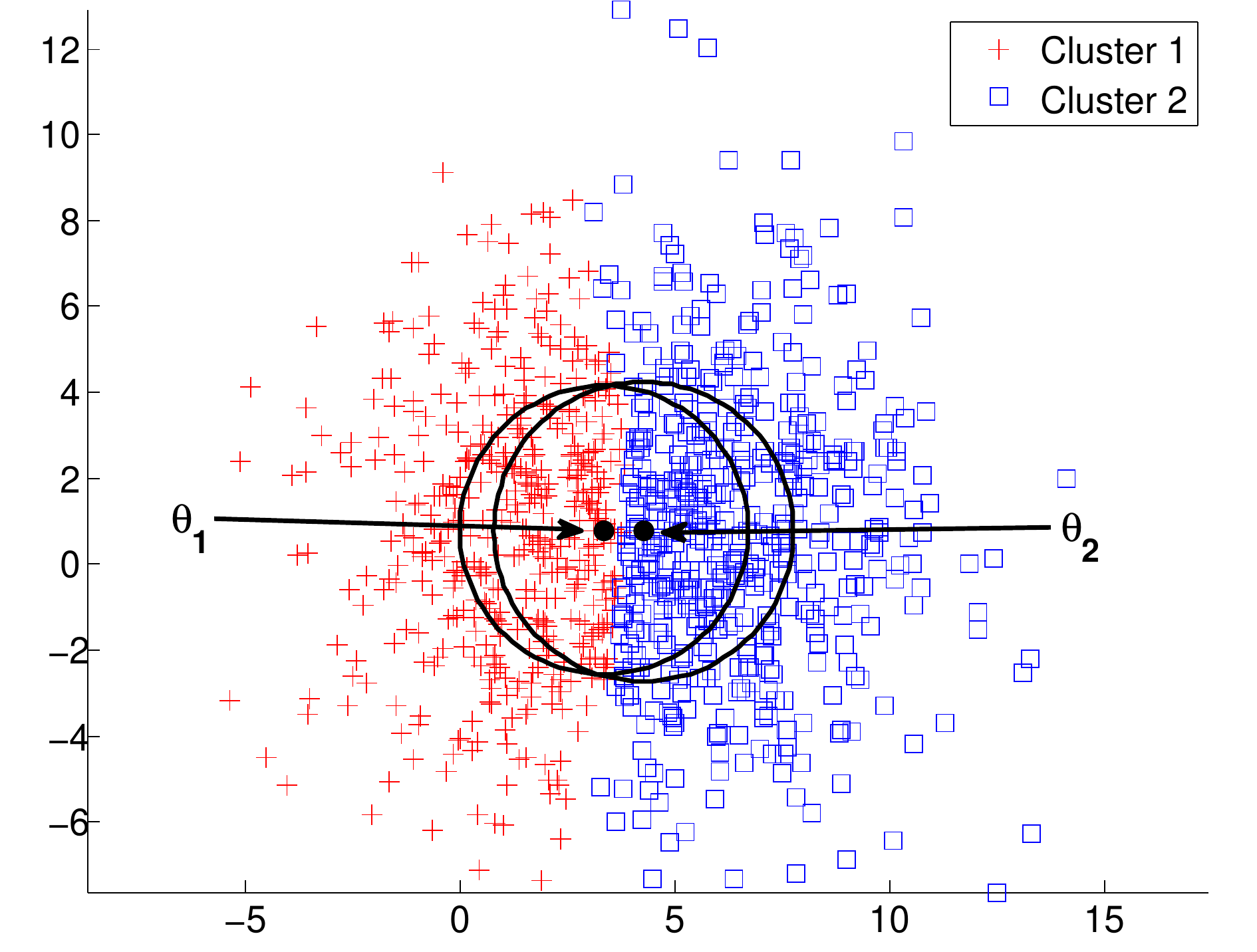} 
\label{cex2b}}
\centering
\subfloat[5th iteration of PCM]{\includegraphics[width=0.32\textwidth]{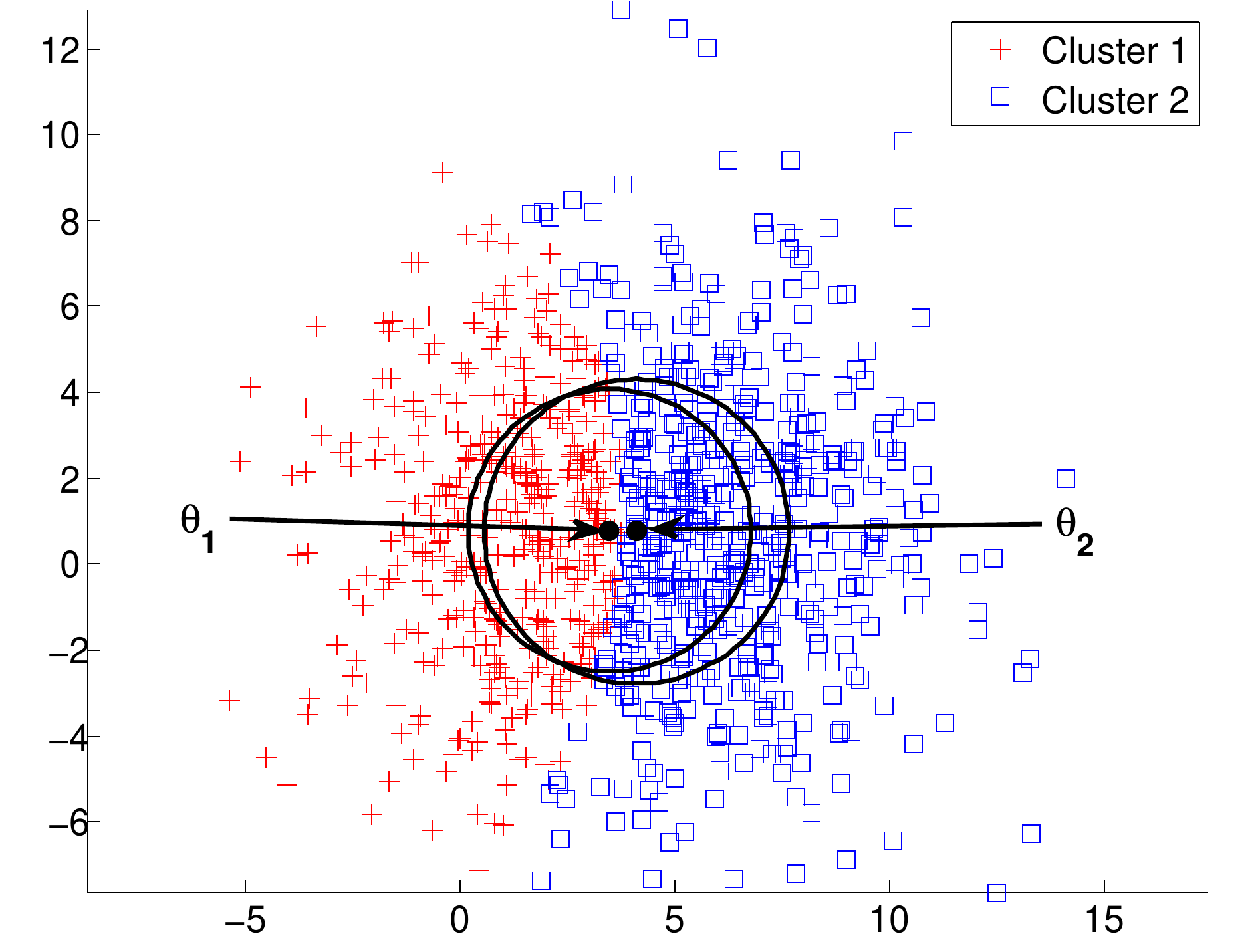} 
\label{cex2c}}
\hfil
\centering
\subfloat[6th iteration of APCM]{\includegraphics[width=0.32\textwidth]{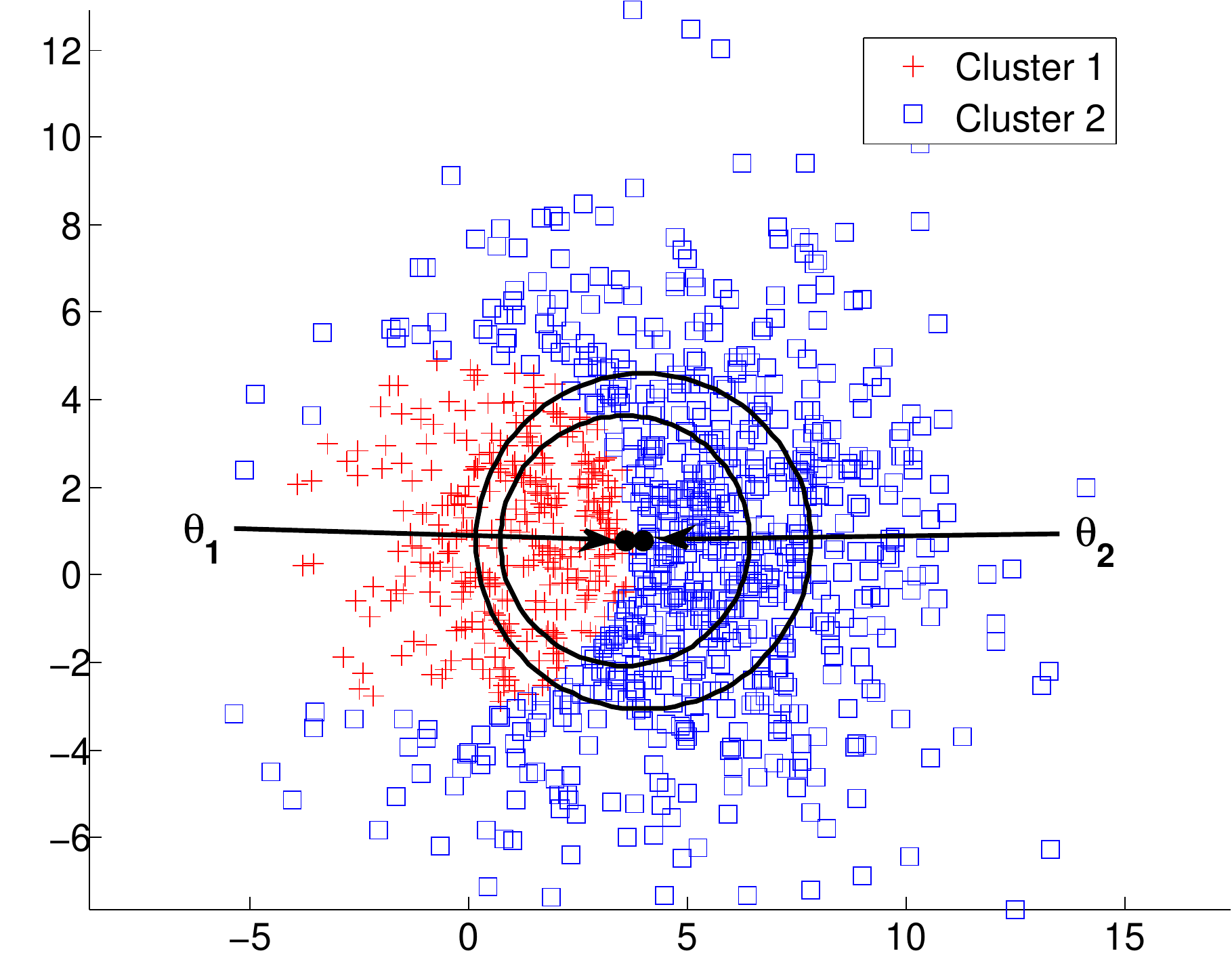} 
\label{cex2d}}
\centering
\subfloat[7th iteration of APCM]{\includegraphics[width=0.32\textwidth]{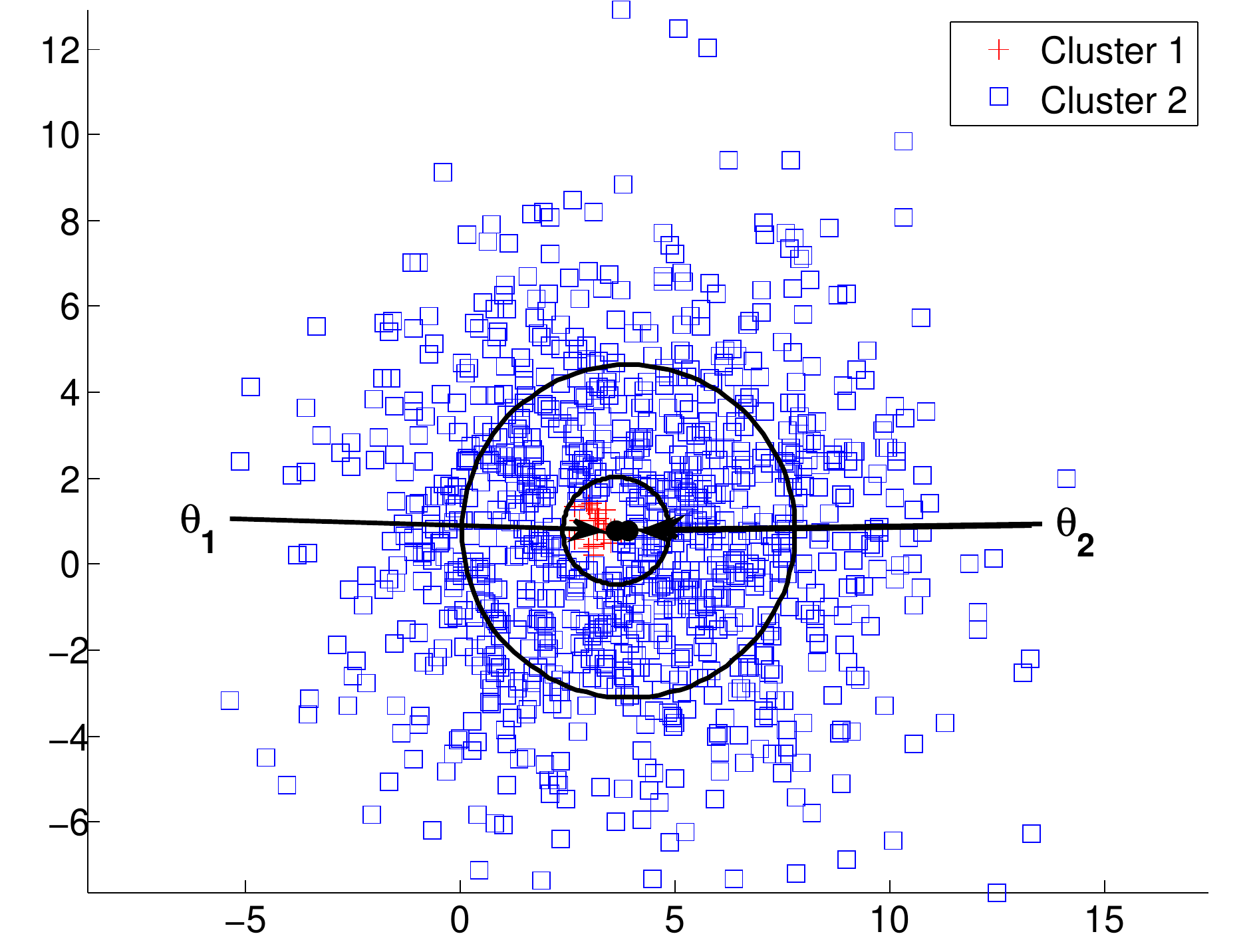} 
\label{cex2e}}
\hfil
\centering
\subfloat[Final iteration of APCM]{\includegraphics[width=0.32\textwidth]{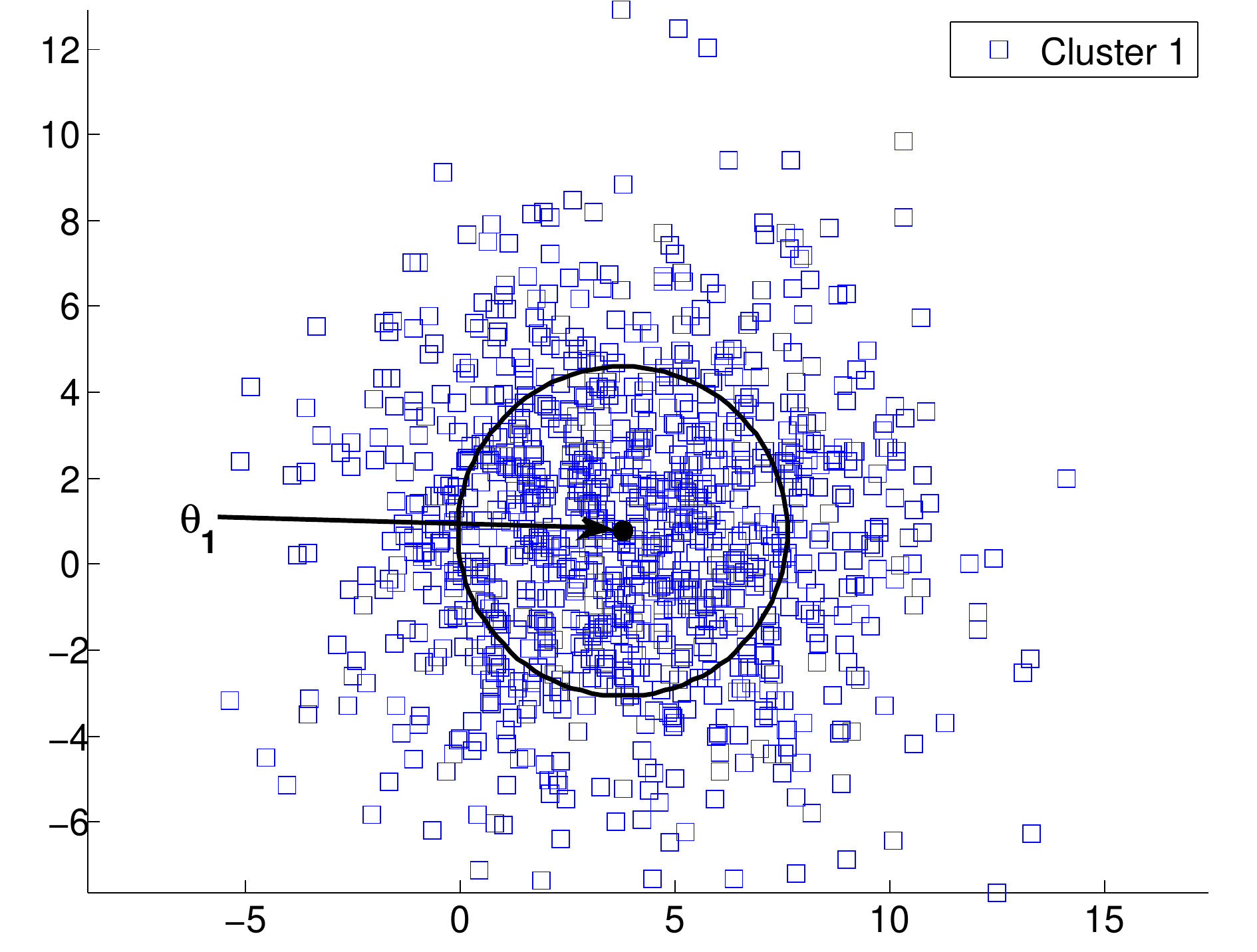} 
\label{cex2f}}
\hfil
\centering{\caption{A two dimensional data set consisting of a single physical cluster. APCM is intialized with two representatives and the cluster elimination procedure is illustrated at several time instances.}
\label{cex2}}
\end{figure*}

In the sequel, we will focus on the cluster elimination property of the APCM algorithm. To this end, consider the case where a single physical cluster is formed by the data points where $k(>1)$ representatives $\boldsymbol{\theta}_j$'s, $j=1,\dots,k$, are initialized within it (see Fig.~\ref{cex2a} for $k=2$). As eq.~\eqref{theta} suggests, each representative will move towards the center of the dense region (see also propositions \ref{prop3} and \ref{prop4} in Appendix B for a more rigorous justification). As $\boldsymbol{\theta}_j$'s move towards the center of the region, they are getting closer to each other. At a specific iteration $t_0$ ($t_0=6$ in Fig.~\ref{cex2d}) where, say $\gamma_r(t_0)=\max_{j=1,...,k} \gamma_j(t_0)$, the hypersphere centered at $\boldsymbol{\theta}_r(t_0)$ and having radius $\sqrt{\gamma_r(t_0)}$ will enclose all the hyperspheres associated with the other representatives. From this point on, the region of influence ($\gamma_j$) of all the clusters except $C_r$ shrinks to 0 as is shown in Fig.~\ref{cex2}, due to their definition (see eqs.~\eqref{gamma},~\eqref{adapteta}) (a theoretical justification for the two representatives case is given in proposition \ref{prop5} in Appentix B).


\subsection{Selection of parameter $\alpha$}
\label{subsec4}

As it was mentioned previously, $\alpha$ is a user-defined parameter that has to be fine-tuned, so that $\hat{\eta}/\alpha$ becomes equal to the mean absolute deviation of the smallest physical cluster. As it is expected, larger values of $m_{ini}$ lead to smaller initial $\eta_j$'s and thus a smaller $\hat{\eta}$. As a consequence, there exists a trade-off between $m_{ini}$ and parameter $\alpha$: large (small) values of $m_{ini}$ require small (large) values of $\alpha$, so that the ratio $\hat{\eta}/\alpha$ approximates the mean absolute deviation of the smallest physical cluster. Note that although the latter quantity is fixed for a given data set, it is unknown in practice.

In the sequel, we discuss how different choices of $\alpha$ affect the behavior of APCM, focusing on the limiting cases $\alpha\rightarrow 0$ and $\alpha\rightarrow +\infty$. Specifically, we consider a single representative $\boldsymbol{\theta}_j$ and we concentrate on its corresponding ``subcost" function\footnote{We write $J_j(\boldsymbol{\theta}_j)$ to explicitly denote the dependence of $J_j$ on $\boldsymbol{\theta}_j$.} $$J_j(\boldsymbol{\theta}_j)=\sum\nolimits_{i=1}^N  u_{ij}\|\mathbf{x}_i-\boldsymbol{\theta}_j\|^2 + \eta_j\frac{\hat{\eta}}{\alpha} \sum\nolimits_{i=1}^N (u_{ij}\ln u_{ij}-u_{ij}),$$ where we assume for the time being that $\eta_j$ is constant, while $u_{ij}$ is given as $u_{ij}=\exp\left(-\frac{\alpha}{\hat{\eta}}\frac{\|\mathbf{x}_i-\boldsymbol{\theta}_j\|^2}{\eta_j}\right)$ (see eq.~\eqref{uij2}). Utilizing the last equation and after some algebra, $J_j(\boldsymbol{\theta}_j)$ can be written as
\begin{equation}
J_j(\boldsymbol{\theta}_j)=-\eta_j\frac{\hat{\eta}}{\alpha}\sum\limits_{i=1}^N \exp\left(-\frac{\alpha}{\hat{\eta}}\frac{\|\mathbf{x}_i-\boldsymbol{\theta}_j\|^2}{\eta_j}\right)
\label{subsecD1}
\end{equation}
Taking the gradient of $J_j$ with respect to $\boldsymbol{\theta}_j$, we have:
\begin{equation}
\frac{\partial J_j(\boldsymbol{\theta}_j)}{\partial \boldsymbol{\theta}_j}=2\sum\limits_{i=1}^N \exp\left(-\frac{\alpha}{\hat{\eta}}\frac{\|\mathbf{x}_i-\boldsymbol{\theta}_j\|^2}{\eta_j}\right) (\mathbf{x}_i-\boldsymbol{\theta}_j)
\label{subsecD2}
\end{equation}

For $\alpha\rightarrow 0$, we have that $\exp\left(-\frac{\alpha}{\hat{\eta}}\frac{\|\mathbf{x}_i-\boldsymbol{\theta}_j\|^2}{\eta_j}\right)\rightarrow 1$. Thus, $\frac{\partial J_j(\boldsymbol{\theta}_j)}{\partial \boldsymbol{\theta}_j}$ tends to $2\sum\nolimits_{i=1}^N (\mathbf{x}_i-\boldsymbol{\theta}_j)$ and equating the latter to zero, we end up with $\boldsymbol{\theta}_j=\frac{1}{N}\sum\nolimits_{i=1}^N \mathbf{x}_i$. Thus, in this case there exists a single minimum; the mean of the data set.

For $\alpha\rightarrow +\infty$, it is clear from eq.~\eqref{subsecD1} that, identically, $J_j(\boldsymbol{\theta}_j)=0$. Thus, all possible choices for $\boldsymbol{\theta}_j$ are (trivially) local minima of $J_j(\boldsymbol{\theta}_j)$. As $\alpha$ gradually increases from 0, the number of minima of $J_j(\boldsymbol{\theta}_j)$ increases and it is expected that, for a specific range of $\alpha$ values, the minima of $J_j(\boldsymbol{\theta}_j)$ will correspond to the centers of the physical clusters. Of course, this cease to hold as we move outside this range towards $+\infty$.

The above are illustrated via a simple clustering example. Specifically, we consider an one-dimensional data set consisting of two Gaussian clusters with 50 points each, shown on the $x$-axis in Fig.~\ref{original_1d}. The centers of the clusters are at locations 28 and 67 and their variances are 100 and 121, respectively. We consider two cases: in the first, the number of initial representatives is $m_{ini}=3$ while in the second, $m_{ini}=10$. We run first the FCM algorithm for each case and we obtain the resulting $u_{ij}^{FCM}$'s and $\boldsymbol{\theta}_j$'s, from which the initial $\gamma_j$'s are computed using eqs.~\eqref{initetaj} and \eqref{gamma}. Note that for $m_{ini}=3$ and $m_{ini}=10$, the corresponding $\hat{\eta}$ values are 7.0094 and 2.3213. 

In order to investigate further the relation between $\alpha$ and $m_{ini}$, we focus on $J_j$ that corresponds to the minimum initial $\gamma_j$ and we drop time dependence. Thus, in this case, $\gamma_j$ is fixed to $\hat{\eta}^2/\alpha$. The ``subcost" function $J_j(\theta_j)=\sum\nolimits_{i=1}^N  u_{ij}\|x_i-\theta_j\|^2 + \frac{\hat{\eta}^2}{\alpha} \sum\nolimits_{i=1}^N (u_{ij}\ln u_{ij}-u_{ij})$ is plotted with respect to $\theta_j$, for various values of $\alpha$. We consider first $m_{ini}=3$, i.e., $m_{ini}$ is very close to the number of actual clusters ($m=2$). Thus, in this case, FCM algorithm is more likely to give good initial estimations for $\eta_j$'s (through eq.~\eqref{initetaj}), i.e. the minimum initial $\eta_j(\equiv\hat{\eta})$ approximates the mean absolute deviation of the smallest physical cluster. We consider the following indicative cases:
\begin{itemize}
  \item $\alpha=0.05$: In this case the ratio $\hat{\eta}/\alpha$ becomes much larger than the mean absolute deviation of the smallest physical cluster, leading all data points to have significant $u_{ij}$'s for all representatives (through eq.~\eqref{uij2}). This justifies the plot of Fig.~\ref{costfun_m3a005}, where $J_j$ exhibits just a single valley centered at the mean of the data set. Clearly, the minimization of $J_j$ will lead $\theta_j$ to this position, which means that in this case the algorithm will fail to detect any of the two true clusters. 
  \item $\alpha=1\text{ or }2$: In this case the ratio $\hat{\eta}/\alpha$ approximates the mean absolute deviation of the smallest physical cluster and as we can see in Figs.~\ref{costfun_m3a1},~\ref{costfun_m3a2}, two well formed valleys are centered at the means of the two natural clusters (although a bit disturbed in the $\alpha=2$ case). Thus, minimization of $J_j$ will lead $\theta_j$ to the center of a true cluster. 
\end{itemize}
In conclusion, when $m_{ini}$ is close to actual $m$ and provided that at least one representative is placed at each dense region, the minimum $\eta_j$ value ($\hat{\eta}$) that is derived using the FCM algorithm (eq.~\eqref{initetaj}) is {\it a good estimate} of the mean absolute deviation of the smallest physical cluster, thus values of $\alpha$ around $1$ allow the algorithm to work properly.


In case where $m_{ini}=10$ (that is $m_{ini}\gg m$) the situation changes. In this case, all initial $\eta_j$'s and thus $\hat{\eta}$ are much smaller than the mean absolute deviation of the smallest physical cluster. We consider the following indicative cases:
\begin{itemize}
  \item $\alpha=0.05$: In this case the ratio $\hat{\eta}/\alpha$ approximates the mean absolute deviation of the smallest physical cluster. Thus, two well formed valleys are centered at the means of the two natural clusters (see in Fig.~\ref{costfun_m10a005}) and the APCM will lead a $\theta_j$ to the center of a true cluster. 
  \item $\alpha=1\text{ or }2$: In this case $J_j$ exhibits many local minima (see Figs.~\ref{costfun_m10a1},~\ref{costfun_m10a2}), as the ratio $\hat{\eta}/\alpha$ is significantly smaller than the mean absolute deviation of the smallest physical cluster, leading all data points to have negligible $u_{ij}$'s values, even with $\theta_j$'s that are placed very close to them (through eq.~\eqref{uij2}). As a consequence, $J_j$ exhibits several local minima that do not correspond to any of the two true clusters and APCM is most likely to end up with clusters that do not correspond to the underlying data set structure.
\end{itemize}


\begin{figure*}[htb!]
\centering
\subfloat[The data set]{\hspace{55pt}\includegraphics[width=0.50\textwidth]{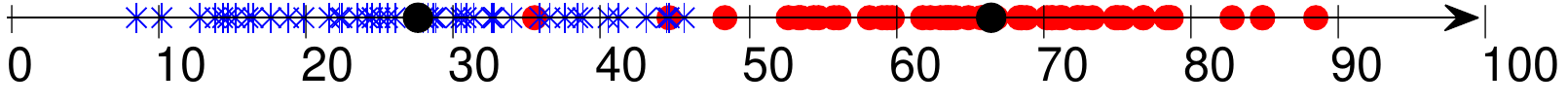}\vspace{20pt} 
\label{original_1d}\hspace{40pt}}
\hfil
\centering
\subfloat[$m_{ini}=3$, $\alpha=0.05$]{\includegraphics[width=0.32\textwidth]{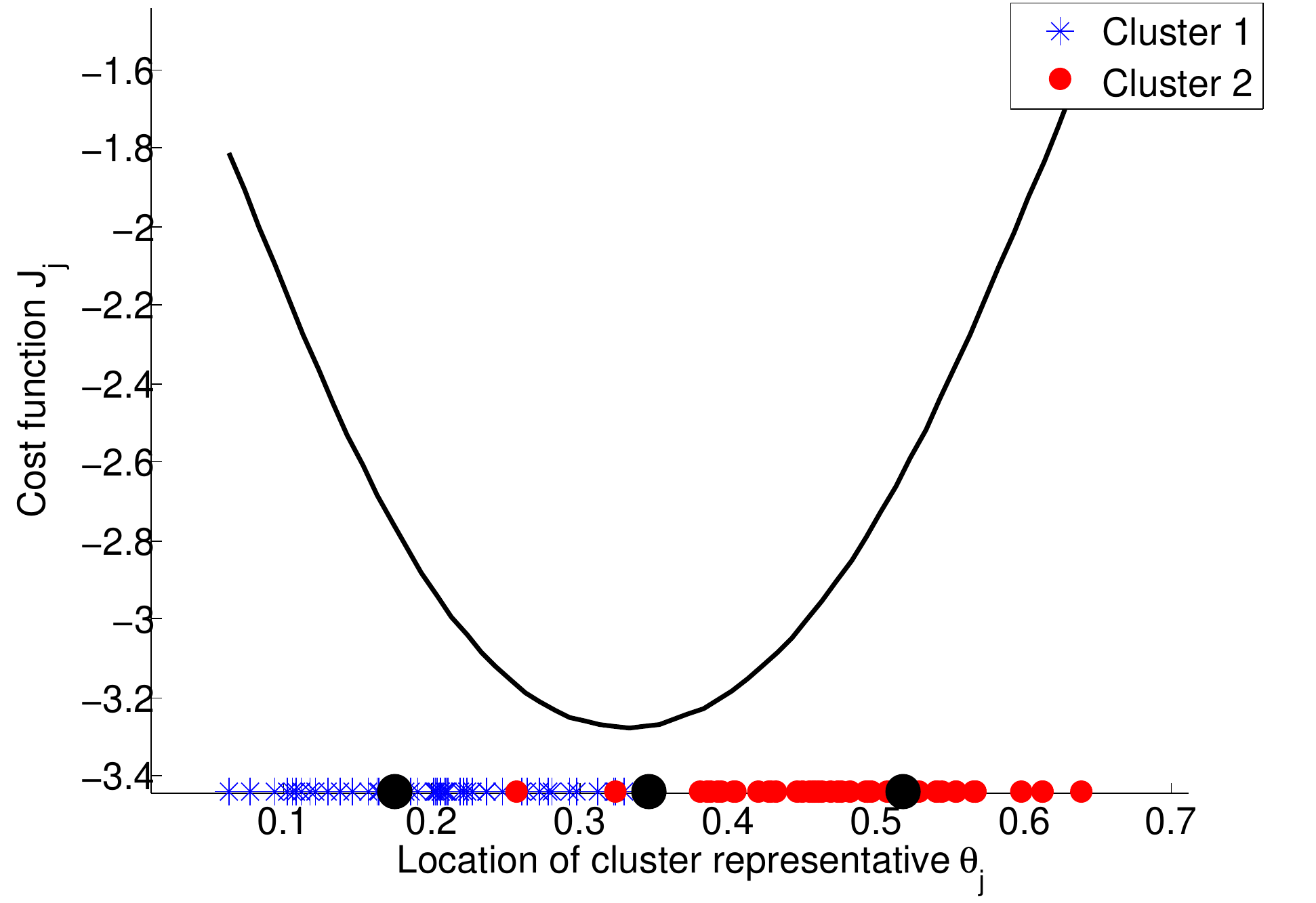}\vspace{20pt} 
\label{costfun_m3a005}}
\hfil
\centering
\subfloat[$m_{ini}=3$, $\alpha=1$]{\includegraphics[width=0.32\textwidth]{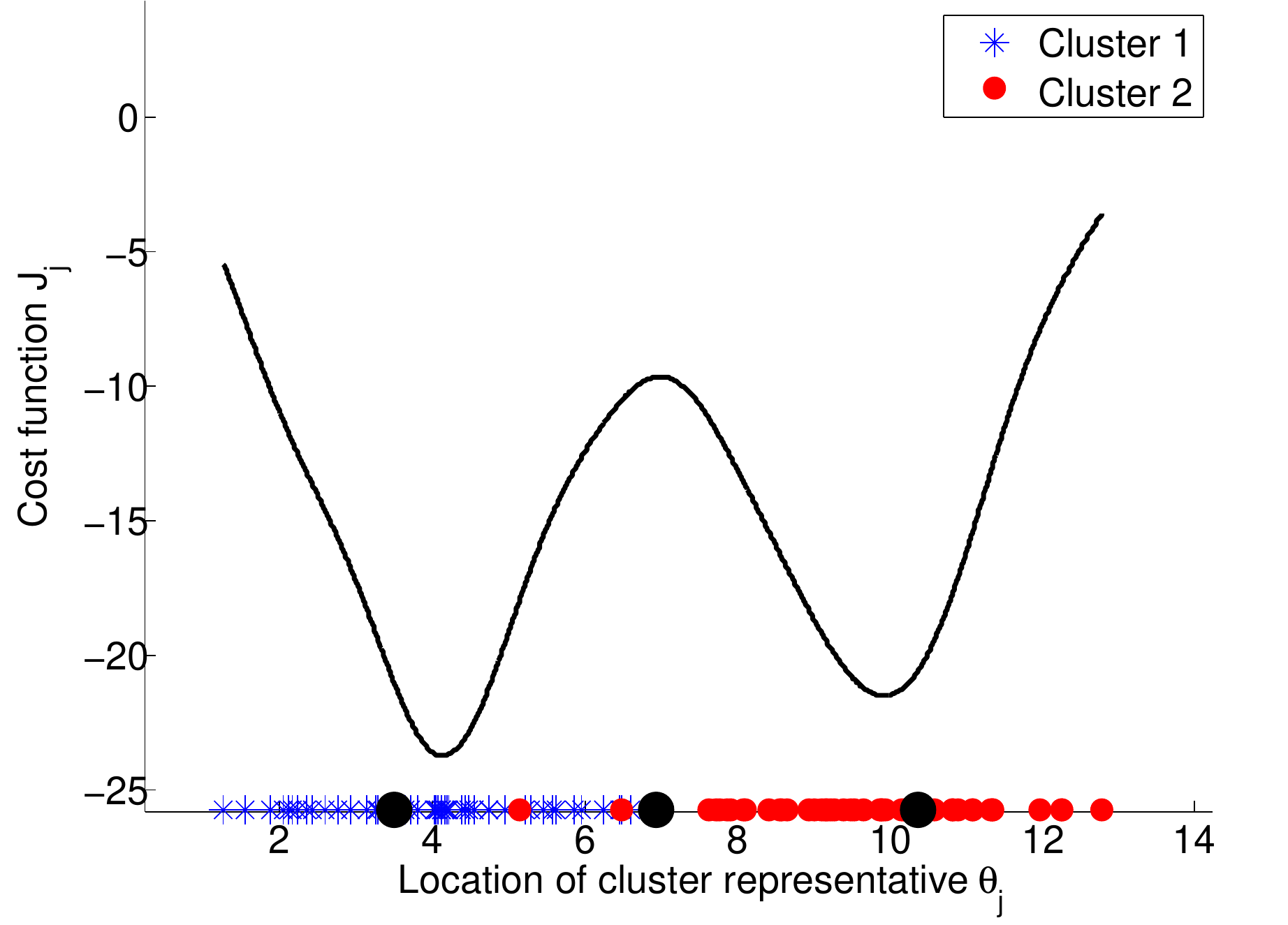} 
\label{costfun_m3a1}}
\hfil
\centering
\subfloat[$m_{ini}=3$, $\alpha=2$]{\includegraphics[width=0.32\textwidth]{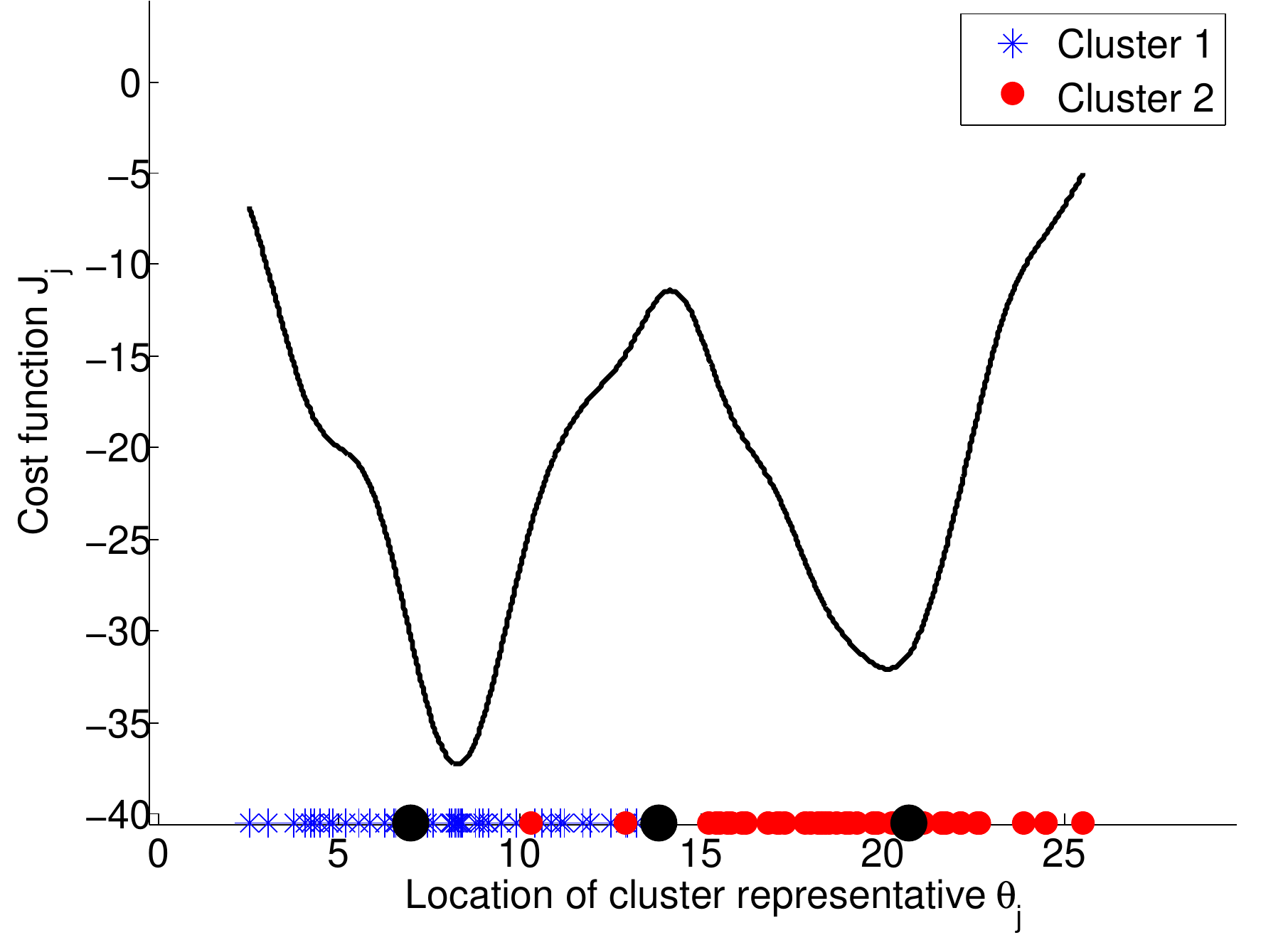}\vspace{20pt} 
\label{costfun_m3a2}}
\hfil
\centering
\subfloat[$m_{ini}=10$, $\alpha=0.05$]{\includegraphics[width=0.32\textwidth]{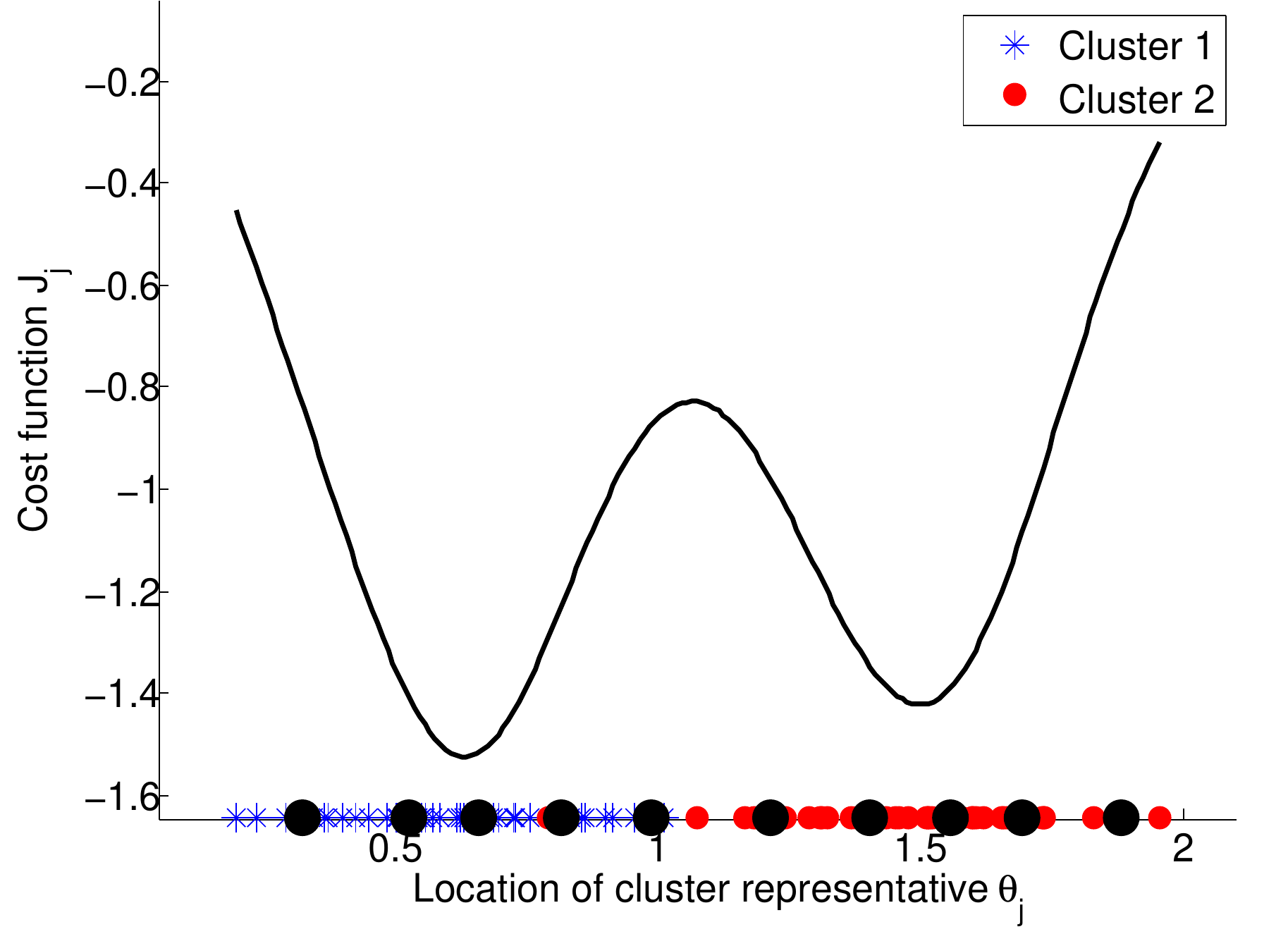} 
\label{costfun_m10a005}}
\hfil
\centering
\subfloat[$m_{ini}=10$, $\alpha=1$]{\includegraphics[width=0.32\textwidth]{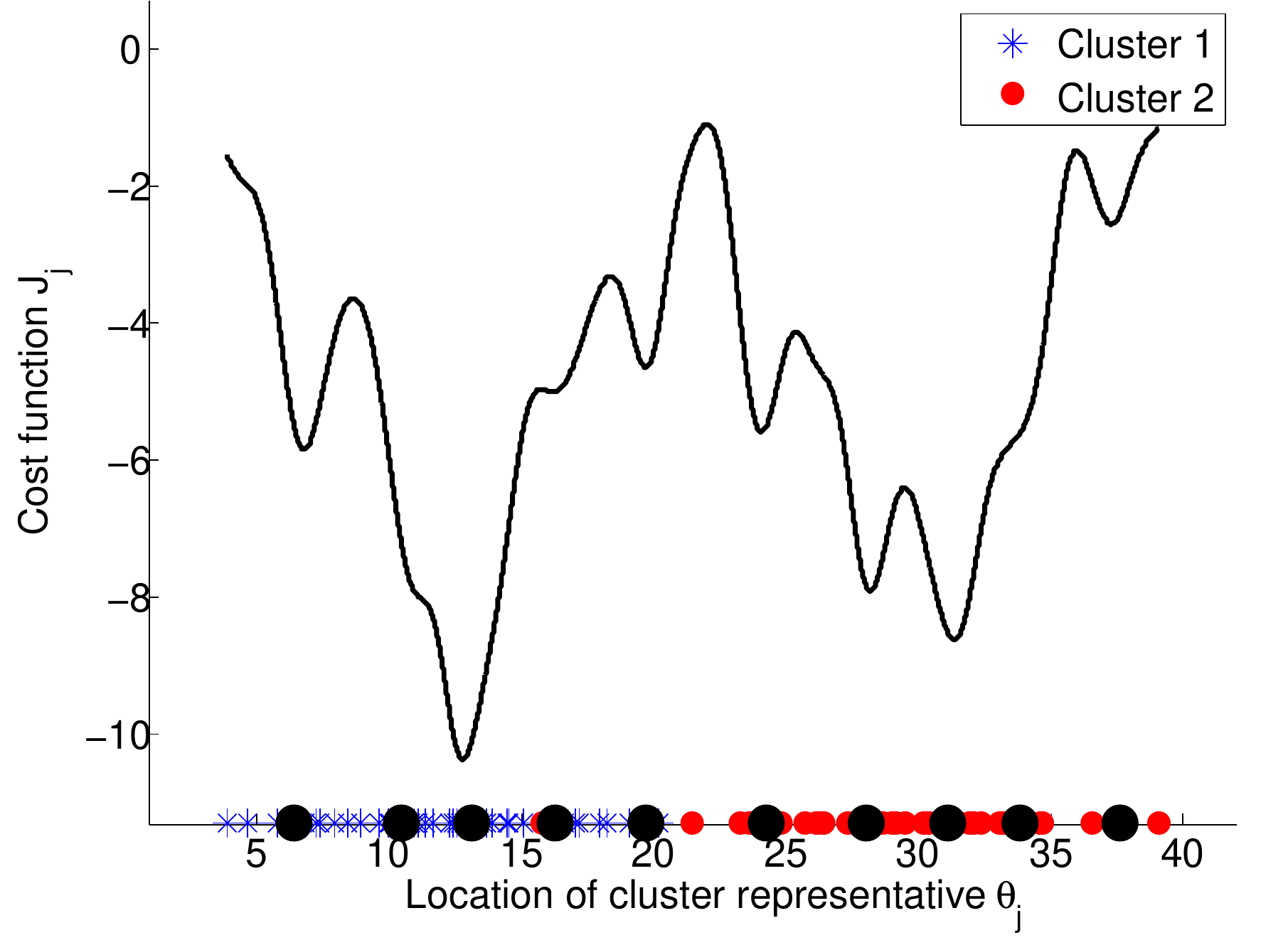} 
\label{costfun_m10a1}}
\hfil
\centering
\subfloat[$m_{ini}=10$, $\alpha=2$]{\includegraphics[width=0.32\textwidth]{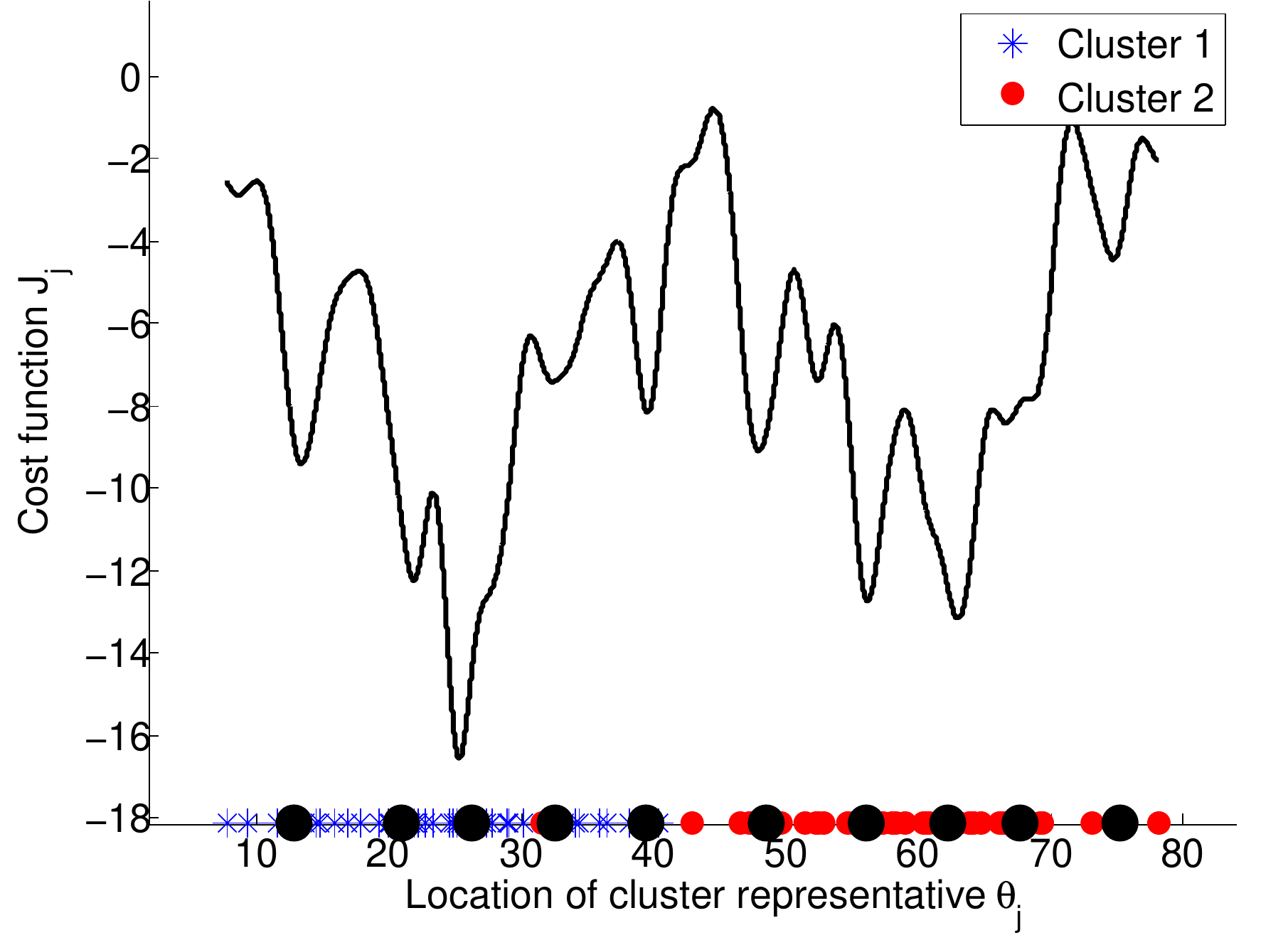} 
\label{costfun_m10a2}}
\hfil
\centering{\caption{Plot of the APCM cost function in case of a two-class 1-dim data set. Plot shows (a) the data set. Data points are denoted by stars on the $x$-axis and representatives by black dots. Results for (b) $m_{ini}=3$, $\alpha=0.05$, (c) $m_{ini}=3$, $\alpha=1$, (d) $m_{ini}=3$, $\alpha=2$, (e) $m_{ini}=10$, $\alpha=0.05$, (f) $m_{ini}=10$, $\alpha=1$ and (g) $m_{ini}=10$, $\alpha=2$.
}\label{costfun}}
\end{figure*}

This example indicates that in cases where $m_{ini}$ is chosen not to be very larger than the actual number of clusters $m$, appropriate values for the parameter $\alpha$ are around 1. On the other hand, when $m_{ini}$ is chosen much larger than $m$, parameter $\alpha$ should be taken much less than $1$. However, in more demanding data sets, which contain very closely located natural clusters and for a fixed value of $m_{ini}$, larger values for the parameter $\alpha$ should be chosen, compared to cases of less closely located clusters, in order to discourage the movement of a representative from one dense region to another. Experiment showed that values of $\alpha$ around 1 and up to 3 suffice for almost any data set, provided that $m_{ini}$ is not extremely larger than $m$ (about 3-4 times larger).

\section{Experimental results}
\label{sec4}
In this section, we assess the performance of the proposed method in several experimental settings and illustrate the obtained results. More specifically, we consider two series of experiments. In the first one, we use two-dimensional simulated data sets in order to exhibit more clearly certain aspects of the behavior of the APCM itself. In the second one, we use both simulated and real-world data sets (Iris \cite{UCILib}, New Thyroid \cite{UCILib}, and a hyperspectral image data set \cite{HsiSal}) of both low and high dimensionality to evaluate the performance of APCM in comparison with several other related algorithms.

\subsection{Behavior of the APCM}
\label{subsec41}

{\bf Experiment 1}: Let us consider a two-dimensional data set consisting of $N=17$ points, which form two natural clusters $C_1$ and $C_2$ with 12 and 5 data points, respectively (see Fig.~\ref{example1}). The means of the clusters are $\mathbf{c}_1=[1.75, 2.75]$ and $\mathbf{c}_2=[4.25, 2.75]$. In this experiment, we consider only the PCM (with $m=2$) and the APCM (with $m_{ini}=2$, $\alpha=1$) algorithms. Figs.~\ref{ex1initPCM} and \ref{ex1initAPCM} show the initial positions of the cluster representatives that are taken from the FCM clustering algorithm and the circles with radius equal to $\sqrt{\gamma_j}$'s resulting from eq.~(\ref{Ketaj}) (for $K=1$) for PCM and from eq.~(\ref{initetaj}) for APCM. Similarly, Figs.~\ref{ex1PCM1st} and \ref{ex1APCM1st} show the new locations of $\boldsymbol{\theta}_j$'s after the first iteration of the algorithms and Figs.~\ref{ex1PCM13th},~\ref{ex1APCM8th} show the locations of $\boldsymbol{\theta}_j$'s at a later iteration of them. Table~\ref{table:synth1} shows the degrees of compatibility $u_{ij}$'s of all data points $\mathbf{x}_i$ with the cluster representatives $\boldsymbol{\theta}_j$'s at the three specific iterations depicted in Fig.~\ref{example1} (initial, $1^{st}$ for both algorithms, $13^{th}$ for PCM and $10^{th}$ (final) for APCM). 

\begin{figure*}[htb!]
\centering
\subfloat[Initial step of PCM]{\includegraphics[width=0.32\textwidth]{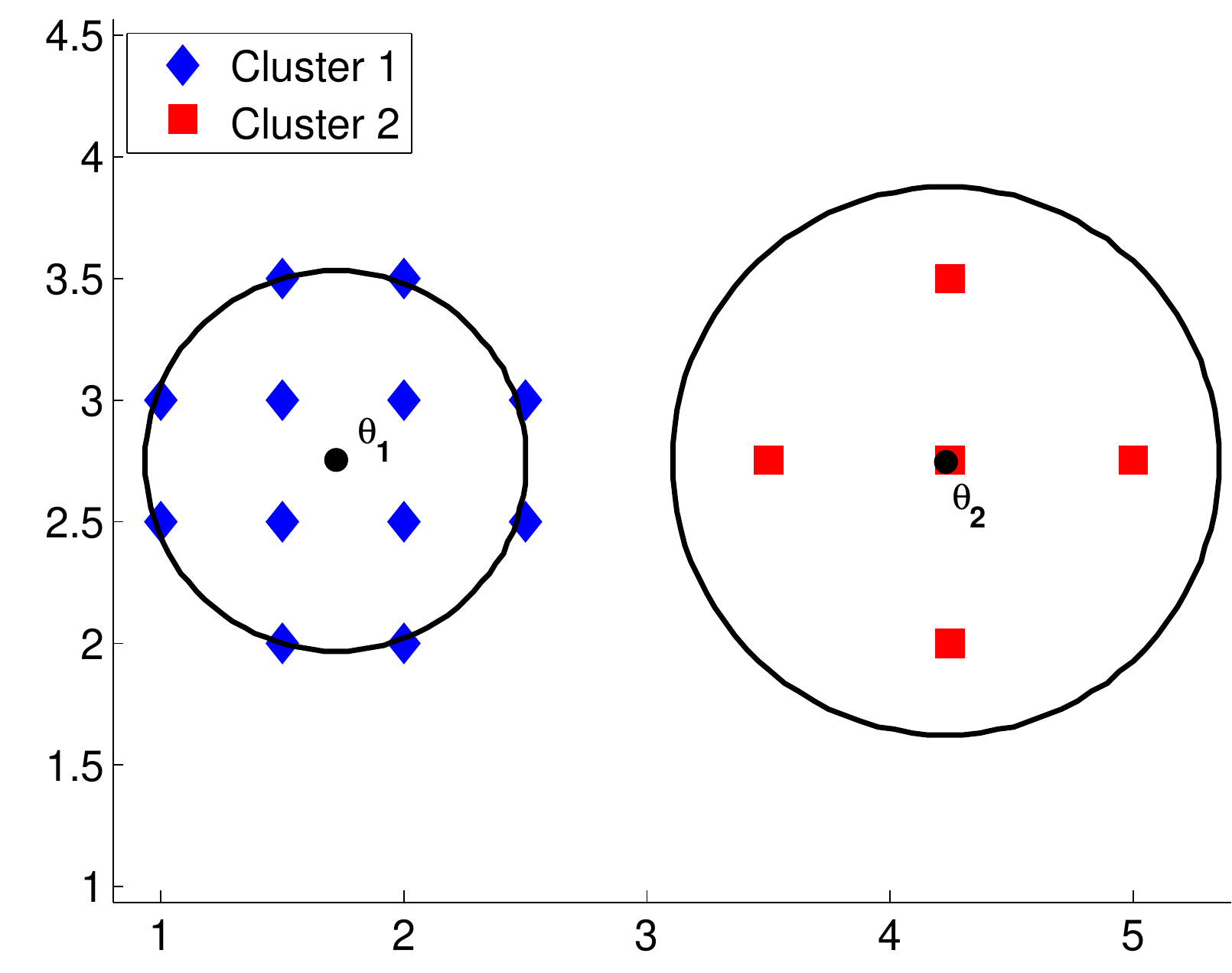}\hspace{6pt}\label{ex1initPCM}}
\hfil
\centering
\subfloat[$1^{st}$ iteration of PCM]{\includegraphics[width=0.32\textwidth]{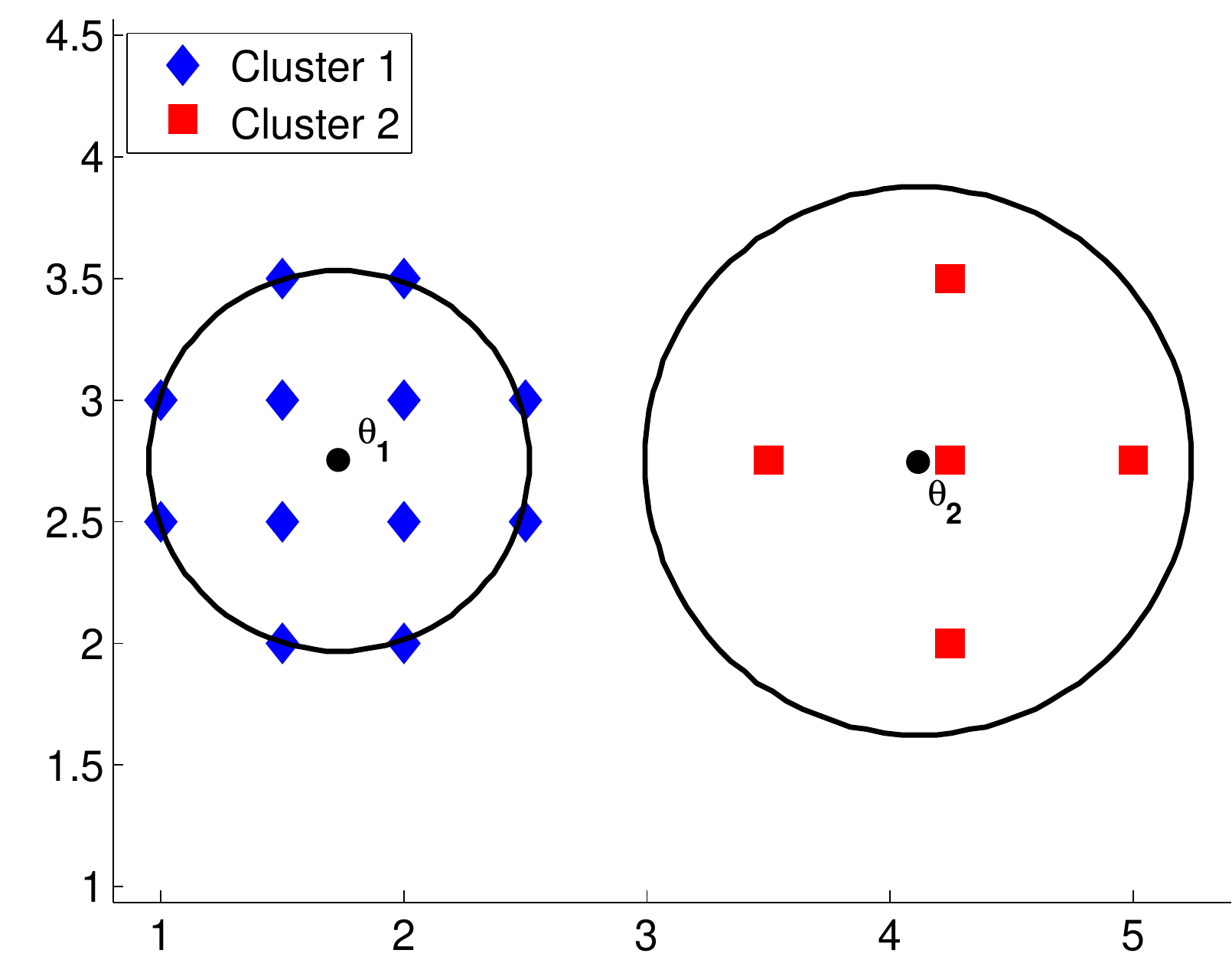}\hspace{6pt}\label{ex1PCM1st}}
\hfil
\centering
\subfloat[$13^{th}$ iteration of PCM]{\includegraphics[width=0.32\textwidth]{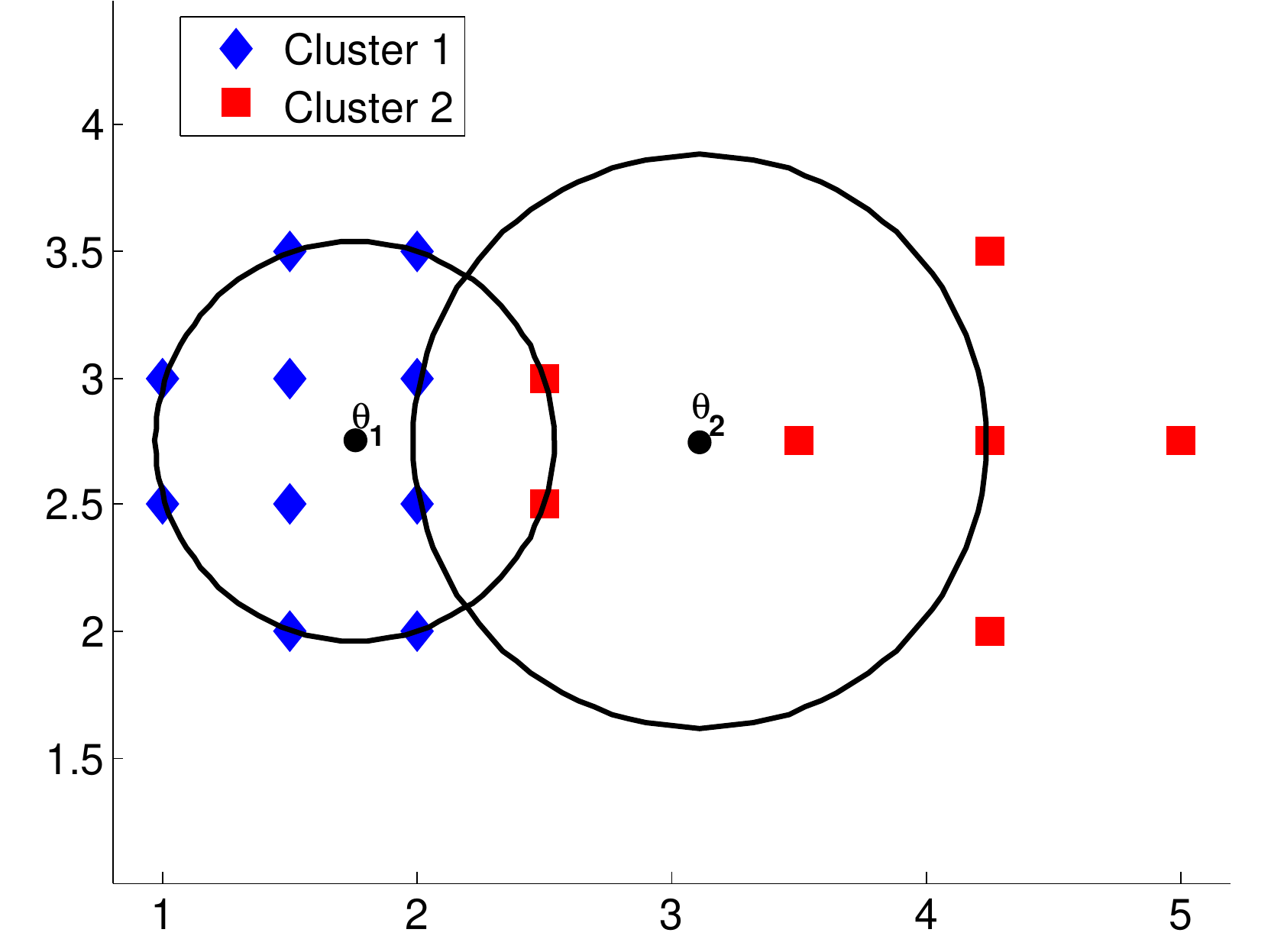}\hspace{6pt}\label{ex1PCM13th}}
\hfil
\centering
\subfloat[Initial step of APCM]{\includegraphics[width=0.33\textwidth]{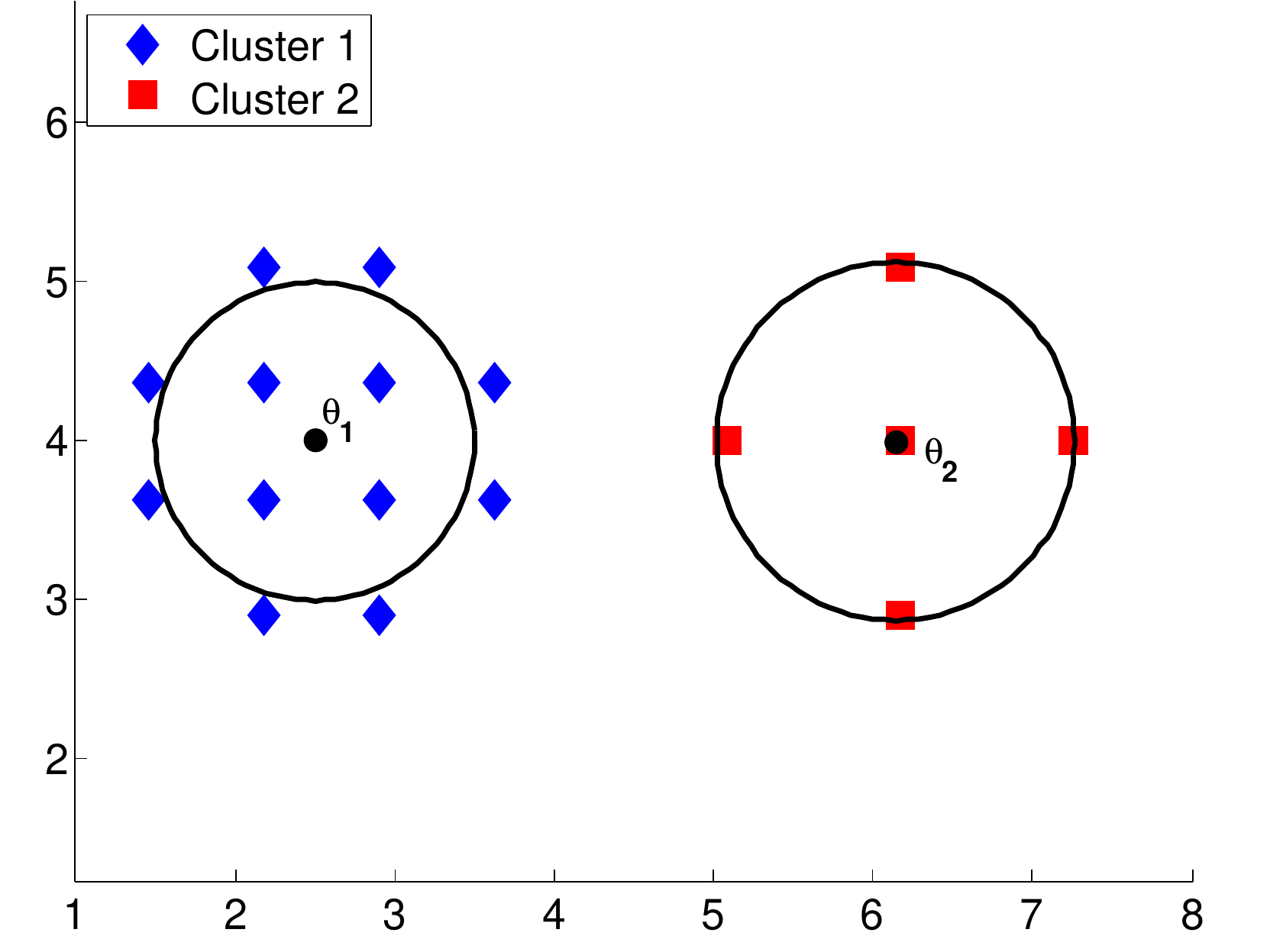}\hspace{-6pt}\label{ex1initAPCM}}
\hfil
\centering
\subfloat[$1^{st}$ iteration of APCM]{\includegraphics[width=0.33\textwidth]{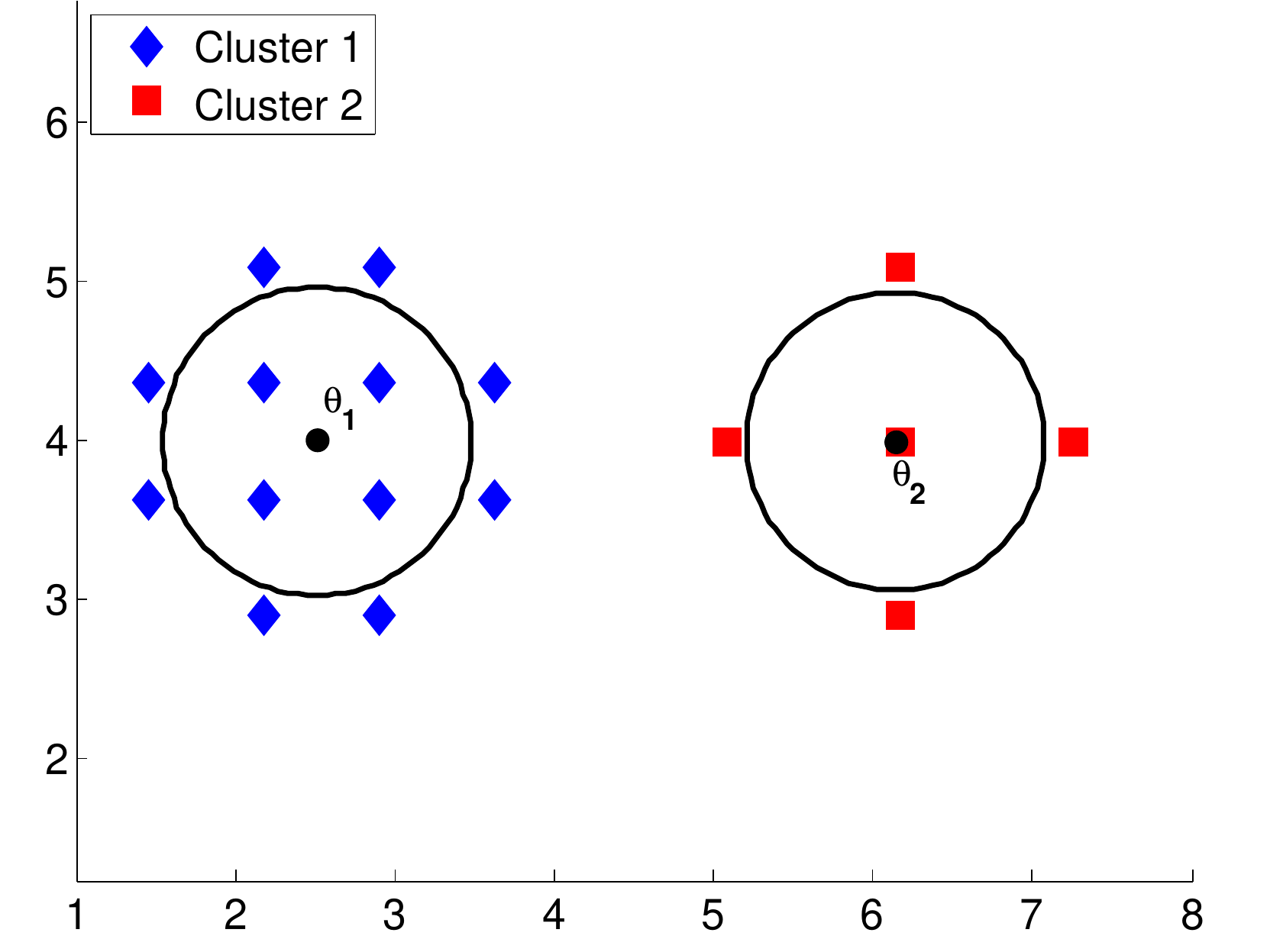}\hspace{-7pt}\label{ex1APCM1st}}
\hfil
\centering
\subfloat[$10^{th}$ (final) iter. of APCM]{\includegraphics[width=0.33\textwidth]{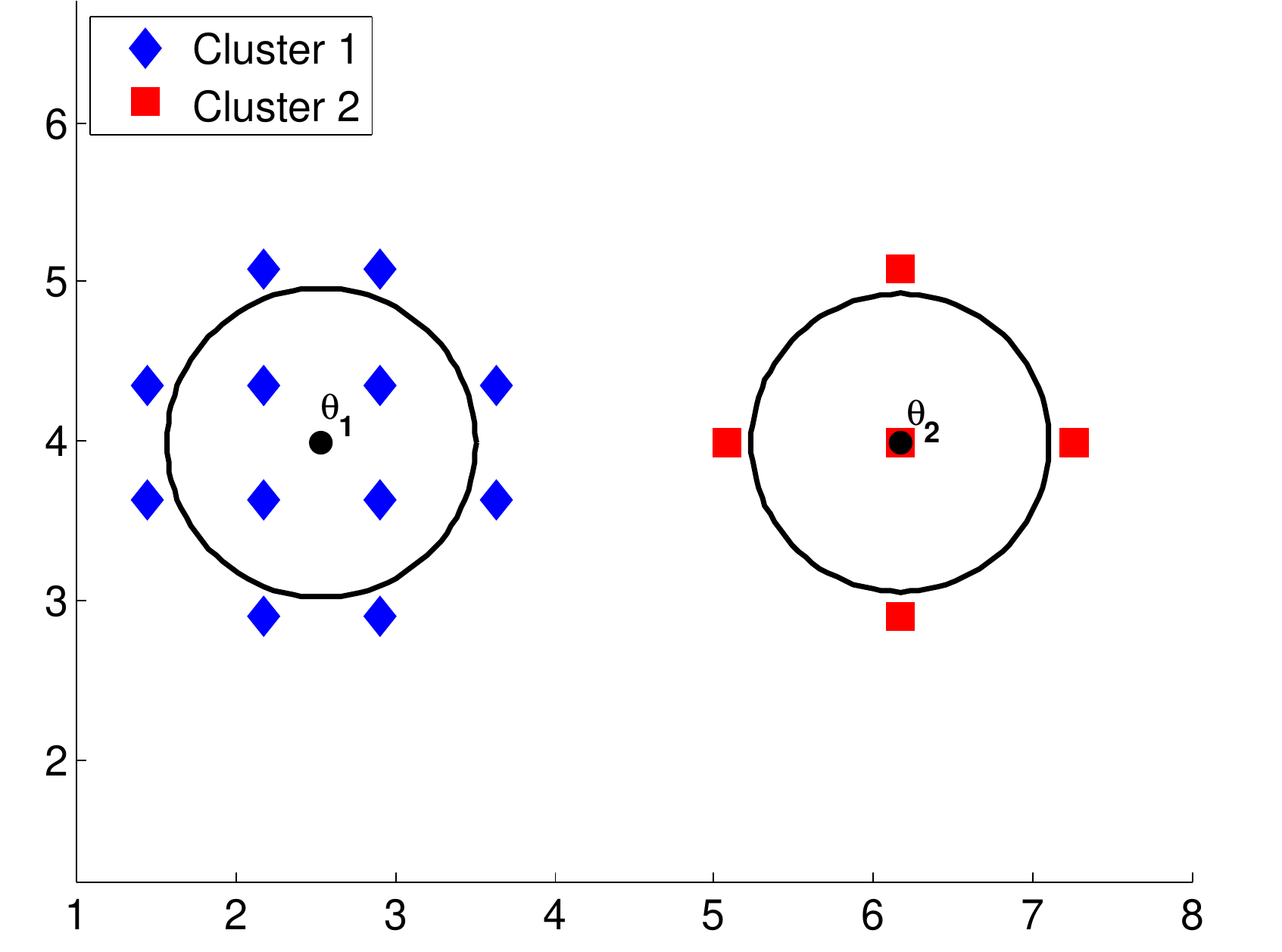}\hspace{-8pt}\label{ex1APCM8th}}
\hfil
\centering{\caption{PCM and APCM snapshots at their initialization step, 1st iteration and 13th iteration for PCM and 10th (final) iteration for APCM (experiment 1).}\label{example1}}
\end{figure*}

As it can be deduced from Table~\ref{table:synth1} and Fig.~\ref{example1}, the degrees of compatibility of the data points of $C_1$ with the cluster representative $\boldsymbol{\theta}_2$ increase as PCM evolves, leading gradually $\boldsymbol{\theta}_2$ towards the region of the cluster $C_1$ and thus, ending up with two coincident clusters, although $\boldsymbol{\theta}_1$ and $\boldsymbol{\theta}_2$ are initialized properly through the FCM algorithm (see Fig.~\ref{ex1initPCM}). However, this is not the case in APCM algorithm, as both the cluster representatives remain in the centers of the actual clusters. Obviously, this differentation on the behavior of the two algorithms is due to the different definition of the parameters $\gamma_j$'s, which affect the degrees of compatibility of the data points with each cluster (see eqs.~(\ref{Ketaj}),~(\ref{adapteta}) and~(\ref{uij})). This experiment indicates that, in principle, APCM can handle successfully cases where relatively closely located clusters with different densities are involved.

\begin{table*}[htb!]
\centering
\caption{The degrees of compatibility of the data points of experiment 1 for PCM and APCM algorithms, after: (a) initialization (common to both algorithms), (b) first iteration and (c) 13th iteration for PCM and 10th (final) iteration for APCM.}
{\footnotesize
\begin{tabular}{>{\centering\arraybackslash}m{0.09\textwidth} | >{\centering\arraybackslash}m{0.065\textwidth} | >{\centering\arraybackslash}m{0.05\textwidth} ||>{\centering\arraybackslash}m{0.065\textwidth} | >{\centering\arraybackslash}m{0.065\textwidth} || >{\centering\arraybackslash}m{0.065\textwidth} |>{\centering\arraybackslash}m{0.065\textwidth}  || >{\centering\arraybackslash}m{0.065\textwidth} |>{\centering\arraybackslash}m{0.05\textwidth}  || >{\centering\arraybackslash}m{0.065\textwidth} |>{\centering\arraybackslash}m{0.065\textwidth}}
\hline  
	& \multicolumn{2}{c||}{Initialization}  & \multicolumn{4}{c||}{$1^{st}$ iteration} & \multicolumn{2}{c||}{$13^{th}$ iteration} & \multicolumn{2}{c}{$10^{th}$ iteration}\\
\cline{2-11}
	& \multicolumn{2}{c||}{PCM/APCM} &  \multicolumn{2}{c||}{PCM} & \multicolumn{2}{c||}{APCM} & \multicolumn{2}{c||}{PCM} & \multicolumn{2}{c}{APCM} \\
\cline{2-11}
	\centering $\mathbf{x}_i$ & \centering $C_1$ & \centering $C_2$ & \centering $C_1$ & \centering $C_2$ & \centering $C_1$ & {\centering $C_2$} & \centering $C_1$ & \centering $C_2$ & \centering $C_1$ & {\centering $C_2$}\\
\hline
{\footnotesize$(1.5, 3.5)$}  	 & \centering 0.9292 & \centering 0.0708 & \centering 0.3701 & \centering 0.0018 & \centering 0.2757 & {\centering 1.6e-06} & \centering 0.3604 & \centering 0.0831 & \centering 0.2449 & {\centering 3.0e-09} \\
{\footnotesize$(2.0, 3.5)$}    & \centering 0.8963 & \centering 0.1037 & \centering 0.3526 & \centering 0.0127   & \centering 0.2590 & {\centering 9.6e-05} & \centering 0.3632 & \centering 0.2428 & \centering 0.2447 & {\centering 1.3e-06} \\
{\footnotesize$(1.0, 3.0)$}    & \centering 0.9475 & \centering 0.0525 & \centering 0.3884 & \centering 2.6e-04 & \centering 0.2936 & {\centering 2.4e-08} & \centering 0.3575 & \centering 0.0284 & \centering 0.2451 & {\centering 7.2e-12} \\
{\footnotesize$(1.5, 3.0)$}    & \centering 0.9854 & \centering 0.0146 & \centering 0.8348 & \centering 0.0027   & \centering 0.7913 & {\centering 3.4e-06} & \centering 0.8178 & \centering 0.1232 & \centering 0.7550 & {\centering 1.0e-08}\\
{\footnotesize$(2.0, 3.0)$}    & \centering 0.9728 & \centering 0.0272 & \centering 0.7954 & \centering 0.0188   & \centering 0.7432 & {\centering 2.2e-04} & \centering 0.8192 & \centering 0.3602 & \centering 0.7544 & {\centering 4.3e-06}\\
{\footnotesize$(2.5, 3.0)$}    & \centering 0.8201 & \centering 0.1799 & \centering 0.3360 & \centering 0.0897   & \centering 0.2433 & {\centering 0.0060} & \centering 0.3661 & \centering 0.7098 & \centering 0.2445 & {\centering 5.4e-04}\\
{\footnotesize$(1.0, 2.5)$}    & \centering 0.9475 & \centering 0.0525 & \centering 0.3884 & \centering 2.6e-04 & \centering 0.2936 & {\centering 2.4e-08} & \centering 0.3575 & \centering 0.0284 & \centering 0.2451 & {\centering 7.2e-12}\\
{\footnotesize$(1.5, 2.5)$}    & \centering 0.9854 & \centering 0.0146 & \centering 0.8348 & \centering 0.0027   & \centering 0.7913 & {\centering 3.4e-06} & \centering 0.8128 & \centering 0.1232 & \centering 0.7550 & {\centering 1.0e-08}\\
{\footnotesize$(2.0, 2.5)$}    & \centering 0.9728 & \centering 0.0272 & \centering 0.7954 & \centering 0.0188   & \centering 0.7432 & {\centering 2.2e-04} & \centering 0.8192 & \centering 0.3602 & \centering 0.7544 & {\centering 4.3e-06}\\
{\footnotesize$(2.5, 2.5)$} & \centering 0.8201 & \centering 0.1799 & \centering 0.3360 & \centering 0.0897   & \centering 0.2433 & {\centering 0.0060} & \centering 0.3661 & \centering 0.7098 & \centering 0.2445 & {\centering 5.4e-04}\\
{\footnotesize$(1.5, 2.0)$} & \centering 0.9292 & \centering 0.0708 & \centering 0.3701 & \centering 0.0018   & \centering 0.2757 & {\centering 1.6e-06} & \centering 0.3604 & \centering 0.0831 & \centering 0.2449 & {\centering 3.0e-09}\\
{\footnotesize$(2.0, 2.0)$} & \centering 0.8963 & \centering 0.1037 & \centering 0.3526 & \centering 0.0127   & \centering 0.2590 & {\centering 9.6e-05} & \centering 0.3632 & \centering 0.2428 & \centering 0.2447 & {\centering 1.3e-06}\\
\hline
{\footnotesize$(4.25, 3.5)$} & \centering 0.0748   & \centering 0.9252 & \centering 1.2e-05 & \centering 0.6415 & \centering 4.2e-07 & {\centering 0.3903} & \centering 1.6e-05 & \centering 0.2302 & \centering 2.2e-07 & {\centering 0.2563}\\
{\footnotesize$(3.5, 2.75)$} & \centering 0.1441   & \centering 0.8559 & \centering 0.0058   & \centering 0.6566 & \centering 0.0013   & {\centering 0.4101} & \centering 0.0071 & \centering 0.8869 & \centering 0.0010 & {\centering 0.2600}\\
{\footnotesize$(4.25, 2.75)$} & \centering 6.0e-05 & \centering 0.9999 & \centering 3.0e-05 & \centering 0.9997 & \centering 1.3e-06 & {\centering 0.9994} & \centering 4.0e-05 & \centering 0.3587 & \centering 7.7e-07 & {\centering 1.0000}\\
{\footnotesize$(5.0, 2.75)$} & \centering 0.0522   & \centering 0.9478 & \centering 2.6e-08 & \centering 0.6267 & \centering 1.4e-10 & {\centering 0.3715} & \centering 3.6e-08 & \centering 0.0597 & \centering 4.7e-11 & {\centering 0.2527}\\
{\footnotesize$(4.25, 2.0)$} & \centering 0.0748   & \centering 0.9252 & \centering 1.2e-05 & \centering 0.6415 & \centering 4.2e-07 & {\centering 0.3903} & \centering 1.6e-05 & \centering 0.2302 & \centering 2.2e-07 & {\centering 0.2563}\\
\hline
\end{tabular}}
\label{table:synth1}
\end{table*}

In the next experiment, we investigate on the relation between $m_{ini}$ and parameter $\alpha$.

\begin{figure*}[htb!]
\centering
\subfloat[$\alpha=0.5$]{\includegraphics[width=0.33\textwidth]{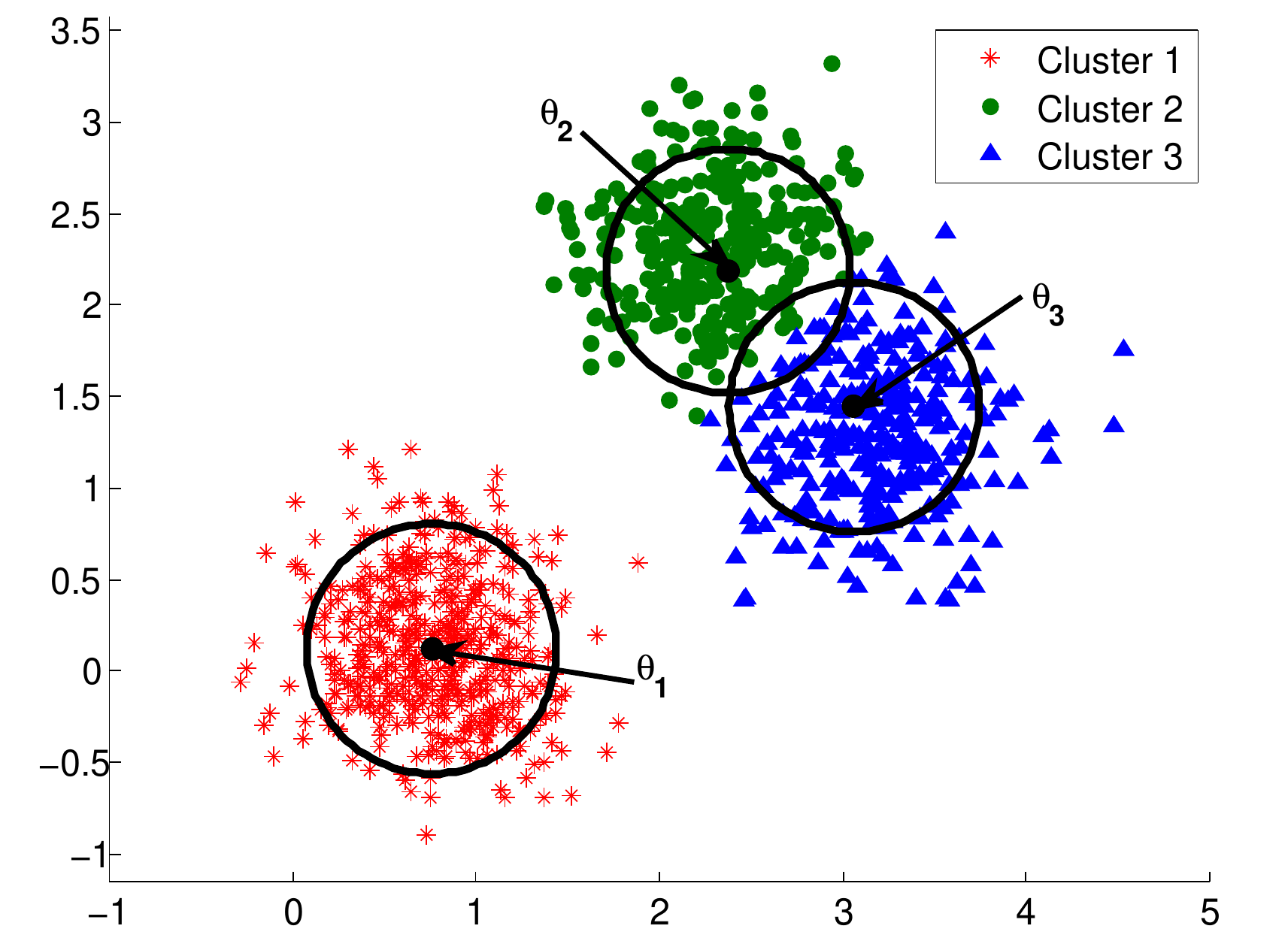}\vspace{20pt}}
\hfil
\centering
\subfloat[$\alpha=1.0$]{\includegraphics[width=0.33\textwidth]{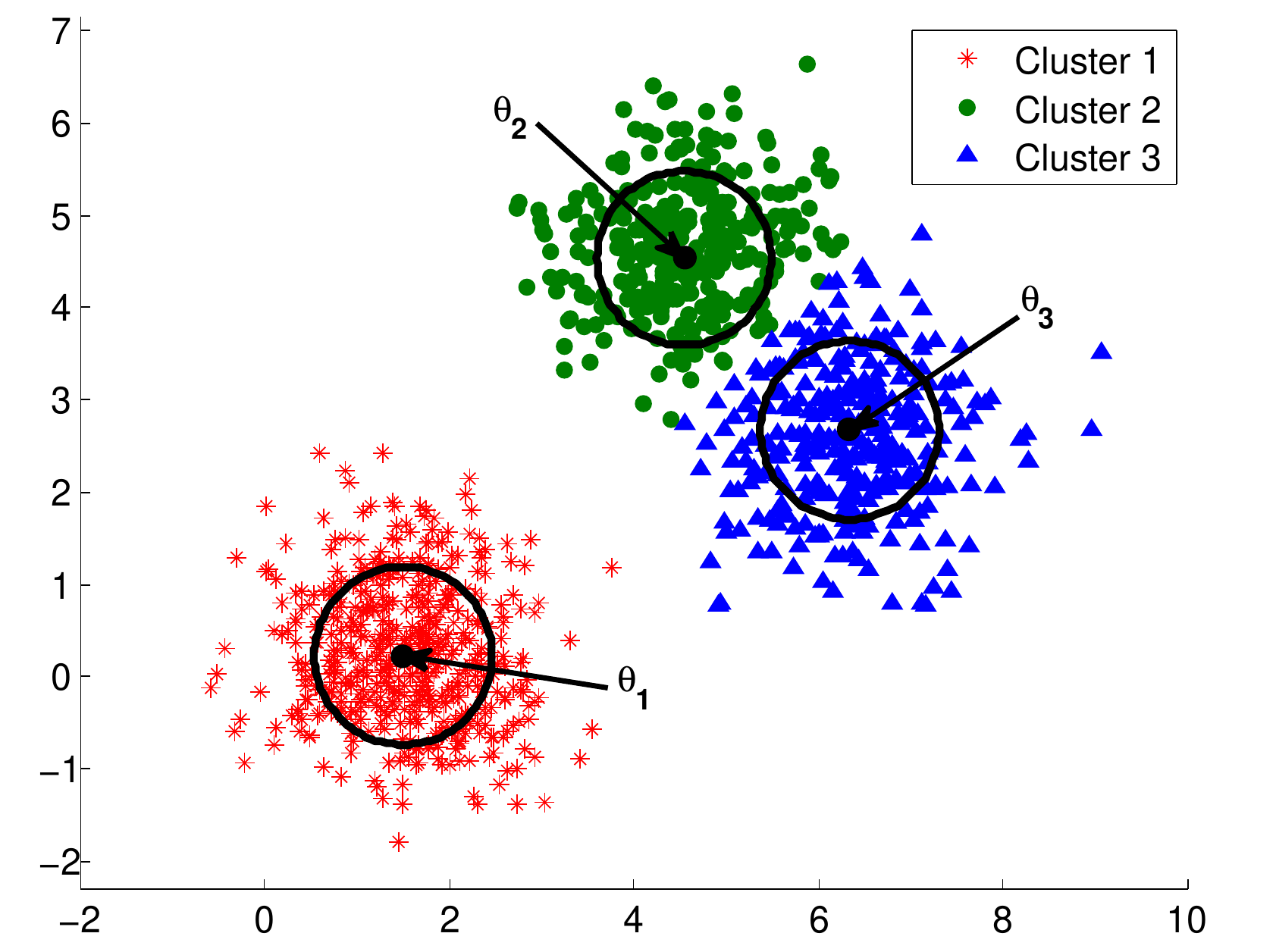}}
\hfil
\centering
\subfloat[$\alpha=3.0$]{\includegraphics[width=0.33\textwidth]{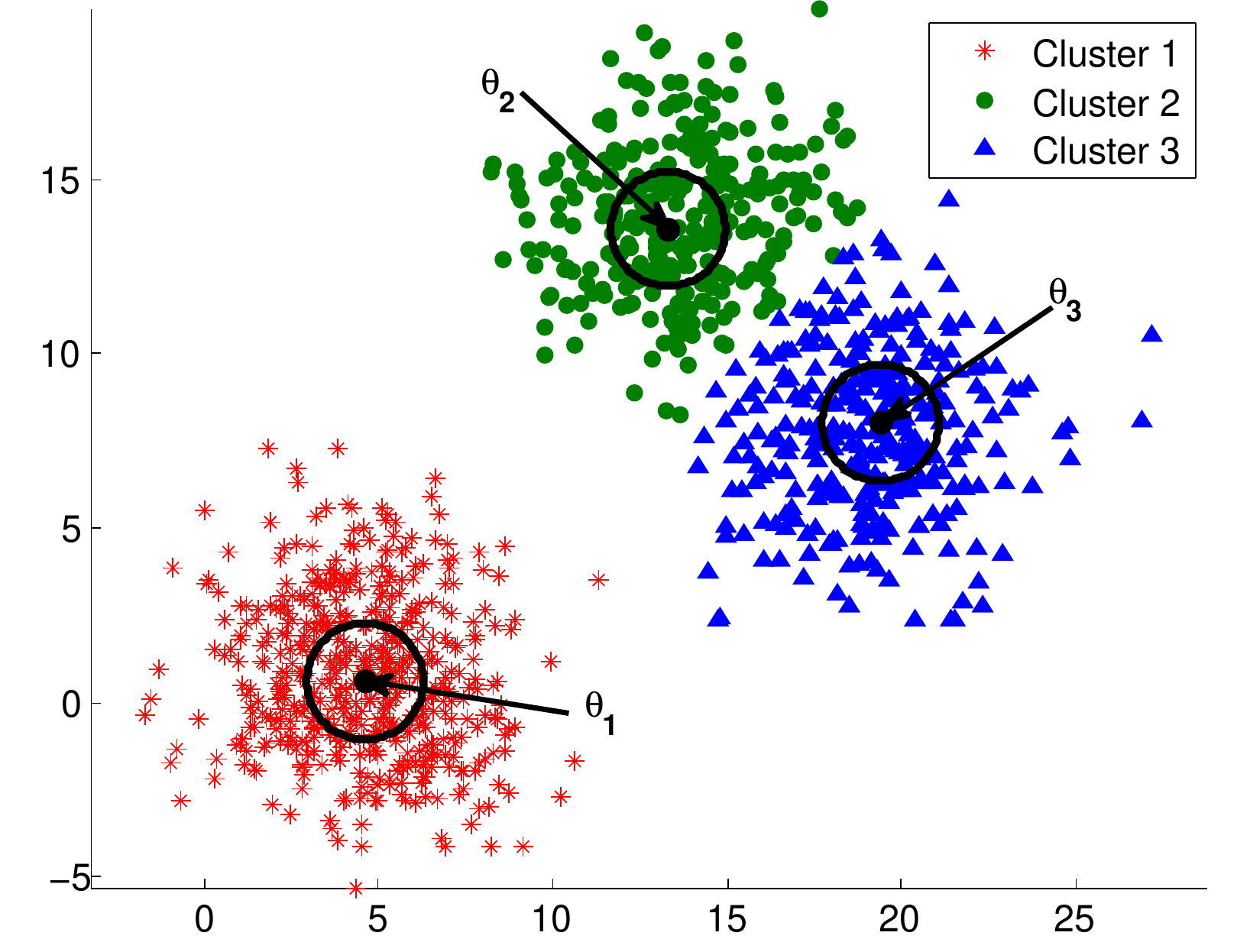}}
\hfil
\centering{\caption{The clustering results of APCM for experiment 2, when it is initialized with $m_{ini}=5$, for several values of parameter $\alpha$.}\label{example2}}
\end{figure*}

{\bf Experiment 2}: Let us consider now a two-dimensional data set consisting of $N=1100$ points, which form three natural clusters $C_1$, $C_2$ and $C_3$ (see Fig.~\ref{example2}). Each such cluster is modelled by a normal distribution. The (randomly generated) means of the distributions are $\mathbf{c}_1=[1.35, 0.23]^T$, $\mathbf{c}_2=[4.03, 4.09]^T$ and $\mathbf{c}_3=[5.64, 2.28]^T$, respectively, while their (common) covariance matrix is set equal to $0.4\cdot I_2$, where $I_2$ is the $2\times 2$ identity matrix. A number of 500 points is generated by the first distribution and 300 points are generated by each one of the other two distributions. Note that clusters $C_2$ and $C_3$ lie very close to each other and, therefore, their discrimination is considered as a difficult task for a clustering algorithm. Table~\ref{table:synth2} shows the ranges of values of the parameter $\alpha$, for which APCM manages to identify correctly the naturally formed $m=3$ clusters, for various values of $m_{ini}$. Fig.~\ref{example2} shows the clustering results of the APCM algorithm, when it is initialized with $m_{ini}=5$, in cases where (a) $\alpha=0.5$, (b) $\alpha=1.0$ and (c) $\alpha=3.0$, respectively. Note from Table~\ref{table:synth2}, that these values of parameter $\alpha$ belong to the range where APCM identifies correctly the actual clusters, when $m_{ini}=5$. Also, in Fig.~\ref{example2}, it is shown how $\gamma_j$'s are affected when varying the parameter $\alpha$, after APCM is initialized with $m_{ini}=5$.

\begin{table}[htb!]
\centering
\caption{Range of values of the parameter $\alpha$, in which APCM concludes correctly to $m_{final}=3$ clusters, for specific values of $m_{ini}$ for experiment 2.}
{\small
\begin{tabular}{>{\centering\arraybackslash}m{0.10\linewidth} | >{\centering\arraybackslash}m{0.10\linewidth} |>{\centering\arraybackslash}m{0.10\linewidth}}
\hline  
	\centering $m_{ini}$  & \centering $\alpha_{min}$ & {\centering $\alpha_{max}$}\\
\hline
\centering 3 	& \centering 0.35 & {\centering 5.00} \\
\centering 5 	& \centering 0.33 & {\centering 3.08} \\
\centering 10  & \centering 0.28 & {\centering 1.38} \\
\centering 20  & \centering 0.23 & {\centering 0.90} \\
\centering 50  & \centering 0.17 & {\centering 0.36} \\
\centering 100 & \centering 0.15 & {\centering 0.29} \\
\hline
\end{tabular}}
\label{table:synth2}
\end{table}

\begin{figure}[htb!]
\centering
{\includegraphics[width=0.48\textwidth]{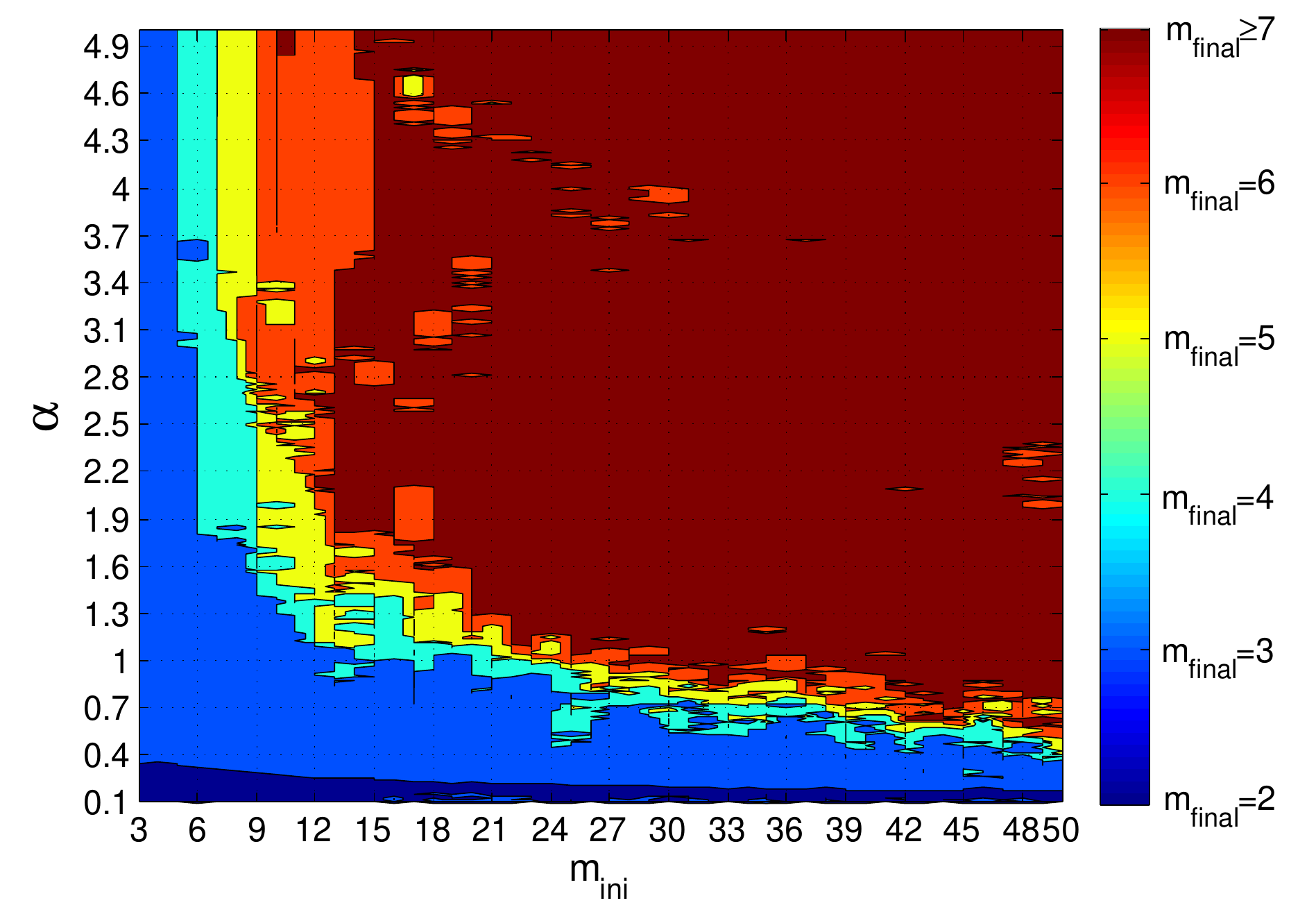}}
\hfil
\centering{\caption[]{Graphical representation of the number of final clusters, $m_{final}$, returned by APCM for experiment 2, for various combinations of $\alpha$ and $m_{ini}$ \footnotemark.}\label{example2b}}
\end{figure}

Executing APCM on the previous data set, for various values of $m_{ini}$ and $\alpha$, we end up with the figure shown in Fig.~\ref{example2b}, where regions in the $\alpha-m_{ini}$ plot are drawn with different colors, each one corresponding to a different number of final clusters, $m_{final}$. The light-blue colored region corresponds to the case where $m_{final}=3$, i.e., when APCM identifies correctly the underlying clusters. From the shape of this region, we can verify the ``rule of thumb" stated already in Section~\ref{subsec4}, that is, $\alpha$ is inversely related to $m_{ini}$. Moreover, from Fig.~\ref{example2b}, we deduce that by fixing $\alpha$ to a value arround $1$ and taking $m_{ini}$ $3-4$ times greater than the actual number of clusters, APCM will identify correctly the underlying physical clusters. Interestingly, the situation depicted in Fig.~\ref{example2b} has also been observed for several other data sets. Thus, the above rule of thumb seems to hold more generally.
\footnotetext{Note that for each value of $m_{ini}$ the same initial representatives (produced by FCM) have been used, for all values of $\alpha$. Results may differ slightly for different initializations of APCM.}

\subsection{Comparison of APCM with other algorithms}

\begin{table*}[htb!]
\centering
\caption{Performance of clustering algorithms for the experiment 3 data set.}
{\small
\begin{tabular}{>{\arraybackslash}m{0.41\linewidth} | >{\centering\arraybackslash}m{0.03\linewidth} |>{\centering\arraybackslash}m{0.05\linewidth}| >{\centering\arraybackslash}m{0.06\linewidth} |>{\centering\arraybackslash}m{0.06\linewidth} |>{\centering\arraybackslash}m{0.07\linewidth} |>{\centering\arraybackslash}m{0.05\linewidth}
|>{\centering\arraybackslash}m{0.05\linewidth}}
\hline  
	& \centering $m_{ini}$ & \centering $m_{final}$ & \centering $RM$ & \centering $SR$ & \centering $MD$ & {\centering $Iter$} & {\centering $Time$}\\
\hline
k-means & 3  & 3  & 91.02 & 86.74 & 6.8509 & 45 & 0.13\\
k-means & 8  & 8  & 73.83 & 42.22 & 2.5267 & 60 & 0.33\\
k-means & 10 & 10 & 71.41 & 34.52 & 2.3544 & 48 & 0.41\\
k-means & 15 & 15 & 68.35 & 27.00 & 0.8074 & 31 & 0.46\\
\hline
FCM & 3  & 3  & 82.05 & 65.39 & 4.2089 & 66 & 0.04\\
FCM & 8  & 8  & 71.88 & 36.91 & 2.5468 & 100 & 0.34\\
FCM & 10 & 10 & 69.67 & 28.74 & 2.3466 & 100 & 0.48\\
FCM & 15 & 15 & 67.18 & 21.96 & 0.8593 & 100 & 0.50\\
\hline
FCM \& XB & -  & 2  & 87.62 & 86.74 & 0.6346 & - & 3.60\\
\hline
PCM & 3  & 2 & 87.62 & 86.78 & 0.4778 & 10 & 0.14\\
PCM & 8  & 3 & 75.14 & 67.87 & 0.2138 & 26 & 0.59\\
PCM & 10 & 3 & 75.64 & 68.35 & 0.1918 & 23 & 0.74\\
PCM & 15 & 3 & 78.64 & 70.04 & 0.1877 & 41 & 1.06\\
\hline
APCM ($\alpha=1$)   & 3  & 2 & 87.73 & 86.83 & 0.0655 & 12 & 0.07\\
APCM ($\alpha=1.5$) & 8  & 3 & 90.83 & 90.04 & 0.2268 & 38 & 0.40\\
APCM ($\alpha=1$)   & 10 & 3 & 90.80 & 90.00 & 0.2131 & 28 & 0.52\\
APCM ($\alpha=1$)   & 15 & 3 & 90.83 & 90.04 & 0.2157 & 35 & 0.55\\
\hline
UPC ($q=2$) & 3  & 2 & 87.69 & 86.78 & 0.1331 & 20  & 0.07\\
UPC ($q=3$) & 8  & 4 & 90.04 & 85.96 & 0.5517 & 76  & 0.54\\
UPC ($q=3$) & 10 & 4 & 89.92 & 85.78 & 0.5829 & 89  & 0.57\\
UPC ($q=3$) & 15 & 4 & 89.79 & 85.61 & 0.6618 & 111 & 0.80\\
\hline
PFCM ($K=1$, $a=1$, $b=1$, $q=2$,   $n=2$) & 3  & 2 & 87.62 & 86.78 & 1.2927 & 25 & 0.07 \\
PFCM ($K=1$, $a=1$, $b=1$, $q=4$,   $n=2$) & 8  & 3 & 83.11 & 84.65 & 0.5595 & 55 & 0.47 \\
PFCM ($K=1$, $a=1$, $b=2$, $q=3$,   $n=2$) & 10 & 3 & 84.30 & 85.78 & 0.7517 & 119 & 0.74 \\
PFCM ($K=1$, $a=1$, $b=3$, $q=2.5$, $n=2$) & 15 & 3 & 86.74 & 87.70 & 0.8414 & 201 & 1.83\\
\hline
UPFC ($a=1$, $b=1$,   $q=4$, $n=2$) & 3  & 2 & 87.76 & 86.83 & 0.4588 & 20 & 0.08 \\
UPFC ($a=1$, $b=3$,   $q=3$, $n=2$) & 8  & 3 & 87.39 & 85.43 & 0.7260 & 85 & 0.49 \\
UPFC ($a=1$, $b=3$,   $q=3$, $n=2$) & 10 & 3 & 87.40 & 85.43 & 0.7364 & 101 & 0.68 \\
UPFC ($a=1$, $b=1.5$, $q=3$, $n=2$) & 15 & 3 & 87.64 & 85.91 & 0.5555 & 94 & 0.83 \\
\hline
GRPCM & - & 2 & 87.54 & 86.74 & 0.3611 & 90 & 148.03 \\
\hline
AMPCM & - & 2 & 87.54 & 86.74 & 0.3189 & 87 & 151.64 \\
\hline
\end{tabular}}
\label{table:synth3}
\end{table*}

In the sequel, we compare the clustering performance of APCM with that of the k-means, the FCM, the FCM with the XB validity index \cite{Xie91}, the PCM, the UPC \cite{Yang06}, the PFCM \cite{Pal05}, the UPFC \cite{Wu10}, the GRPCM \cite{Liao11} and the AMPCM \cite{Yang11} algorithms, which all result from cost optimization schemes. For a fair comparison, the representatives $\boldsymbol{\theta}_j$'s of all algorithms, except for GRPCM and AMPCM, are initialized based on the FCM scheme and the parameters of each algorithm are first fine-tuned. In order to compare a clustering with the true data label information, we use (a) the Rand Measure (RM) (e.g. \cite{Theo09}), which measures the degree of agreement between the obtained clustering and the true data classification and can handle clusterings whose number of clusters may differ from the number of true data labels, and (b) the Success Rate (SR), which measures the percentage of the points that have been correctly labeled by each algorithm. Moreover, the mean of the Euclidean distances (MD) between the true mean of each physical cluster $c_j$ and its closest cluster representative ($\boldsymbol{\theta}_j$) obtained by each algorithm, is given. In cases where a clustering algorithm ends up with a higher number of clusters than the actual one ($m_{final}>m$), only the $m$ cluster representatives that are closest to the true $m$ centers of the physical clusters, are taken into account in the determination of MD. On the other hand, in cases where $m_{final}<m$, the MD measure refers to the distances of all cluster representatives from their nearest actual center; thus some actual centers are ignored. It is noted that lower MD values indicate more accurate determination of the cluster center locations. Finally, the number of iterations and the time (in seconds) required for the convergence of each algorithm, are provided\footnote{In the FCM \& XB validity index only the total time required for the execution of FCM 19-times (for $m_{ini}=2,\ldots,20$) is given.}. Note that in all reported results for the UPC, the PFCM and the UPFC algorithms, clusters that coincide are considered as a single one. Moreover, for the FCM with XB validity index case, only the clustering obtained for the $m_{ini}$ that minimizes the XB index is given and discussed.

We begin with a demanding simulated data set with classes exhibiting significant differences with respect to their variance.

{\bf Experiment 3}: 
Consider a two-dimensional data set consisting of $N=2100$ points, where three natural clusters $C_1$, $C_2$ and $C_3$ are formed. Each such cluster is modelled by a normal distribution. The means of the distributions are $\mathbf{c}_1=[6.53, 1.39]^T$, $\mathbf{c}_2=[20.32, 20.39]^T$ and $\mathbf{c}_3=[28.09, 11.38]^T$, respectively, while their covariance matrices are set to $10\cdot I_2$, $20\cdot I_2$ and $1\cdot I_2$, respectively. A number of 1000 points are generated by each one of the first two distributions and 100 points are generated by the last one. Moreover, 200 data points are added randomly as noise in the region where data live (see Fig.~\ref{ex3dataset}). 

\begin{figure*}[htb!]
\centering
\subfloat[The data set]{\includegraphics[width=0.32\textwidth]{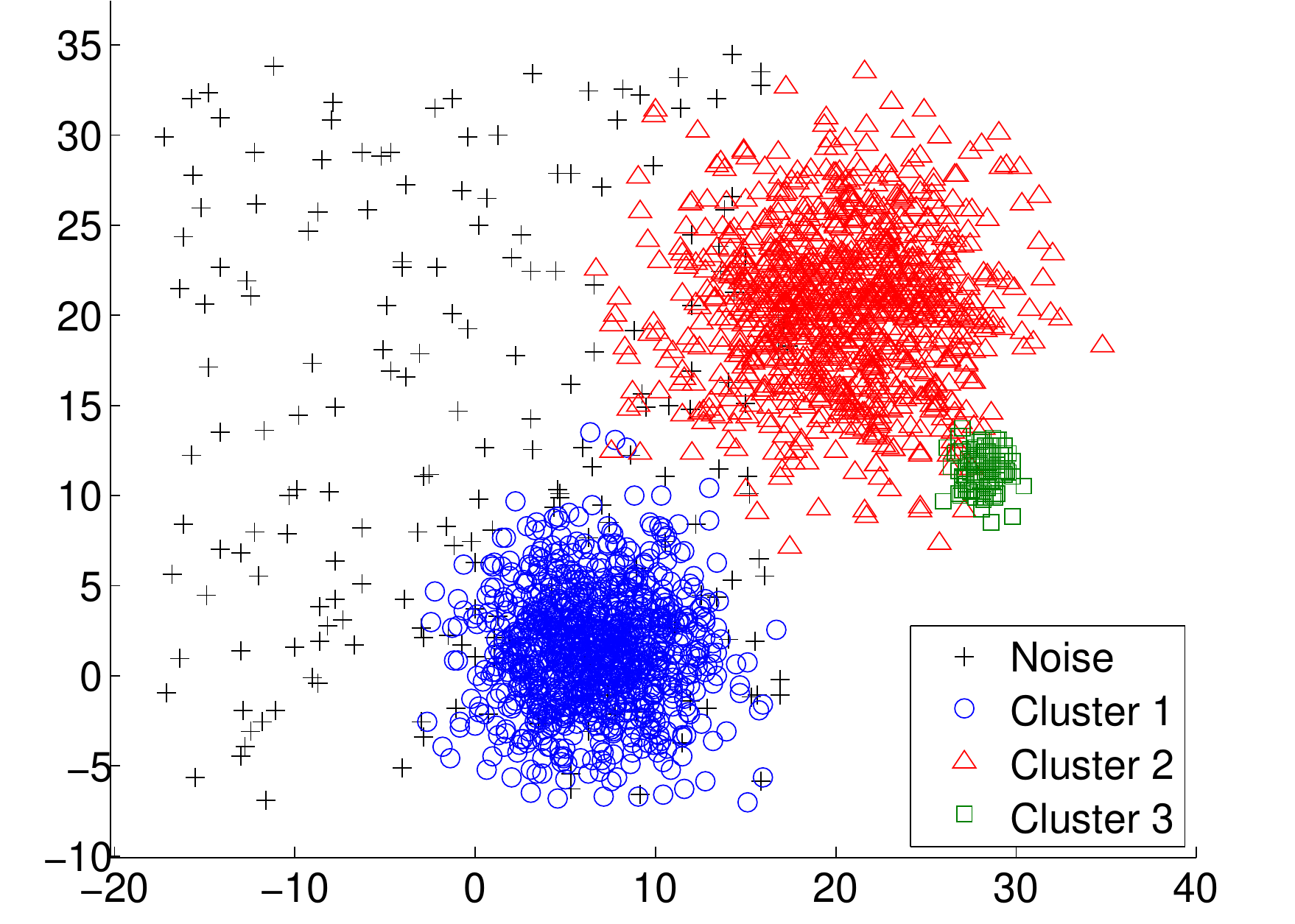}\vspace{20pt}\label{ex3dataset}}
\hfil
\centering
\subfloat[k-means]{\includegraphics[width=0.32\textwidth]{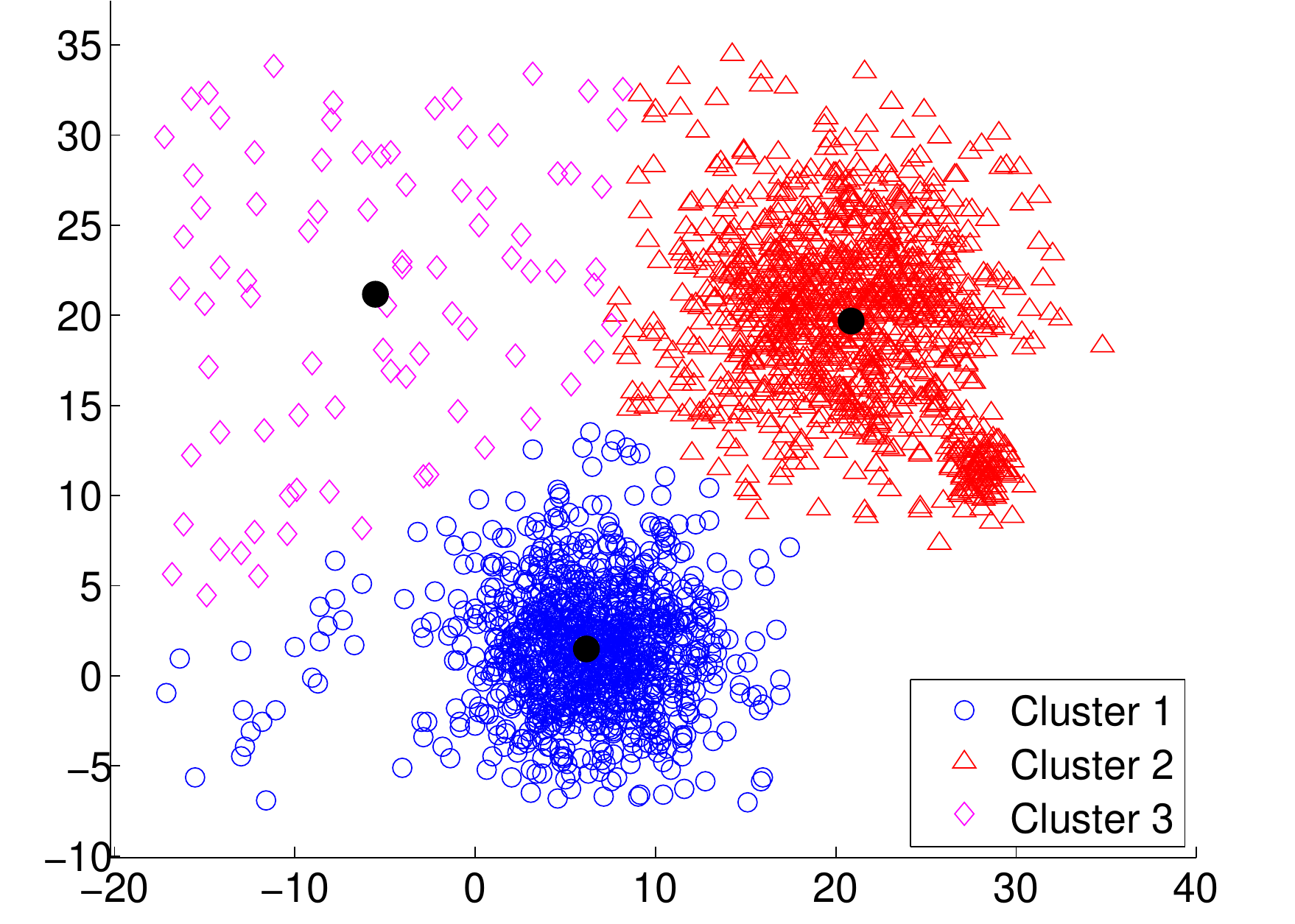}\hspace{6pt}\label{ex3kmeans}}
\hfil
\centering
\subfloat[FCM]{\includegraphics[width=0.32\textwidth]{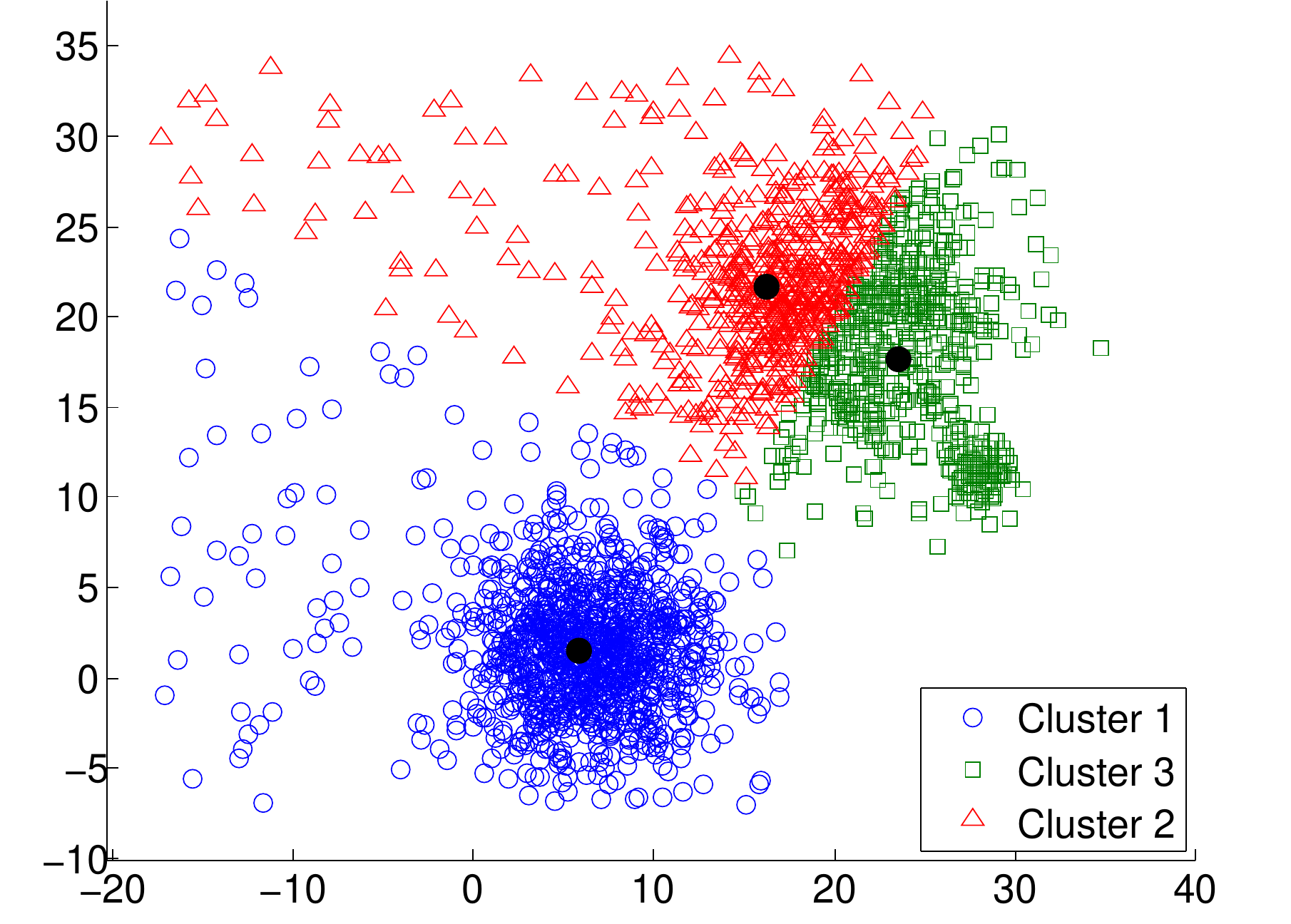}\hspace{6pt}\label{ex3fcm}}
\hfil
\centering
\subfloat[FCM \& XB]{\includegraphics[width=0.32\textwidth]{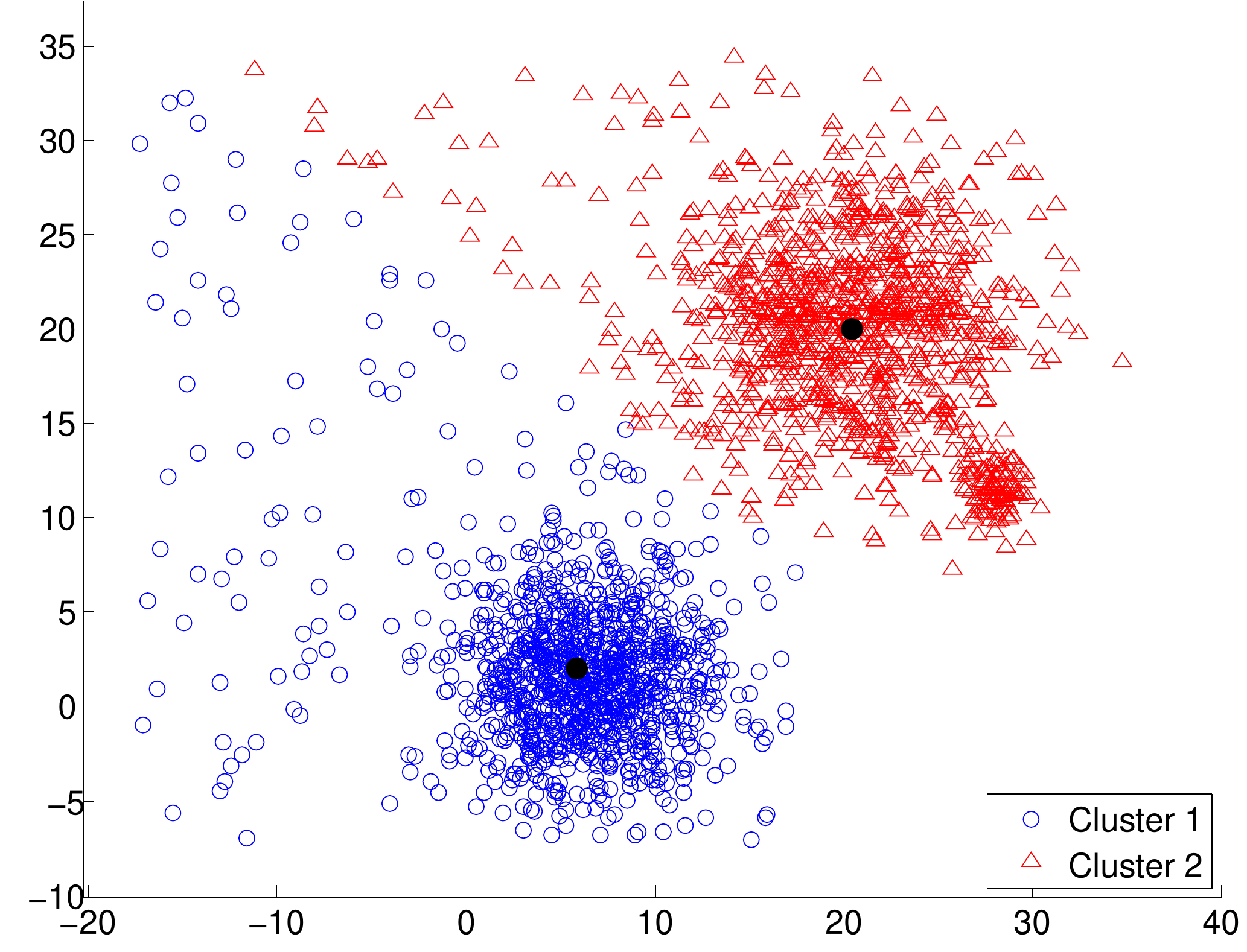}\hspace{6pt}\label{ex3fcmXB}}
\hfil
\centering
\subfloat[PCM]{\includegraphics[width=0.33\textwidth]{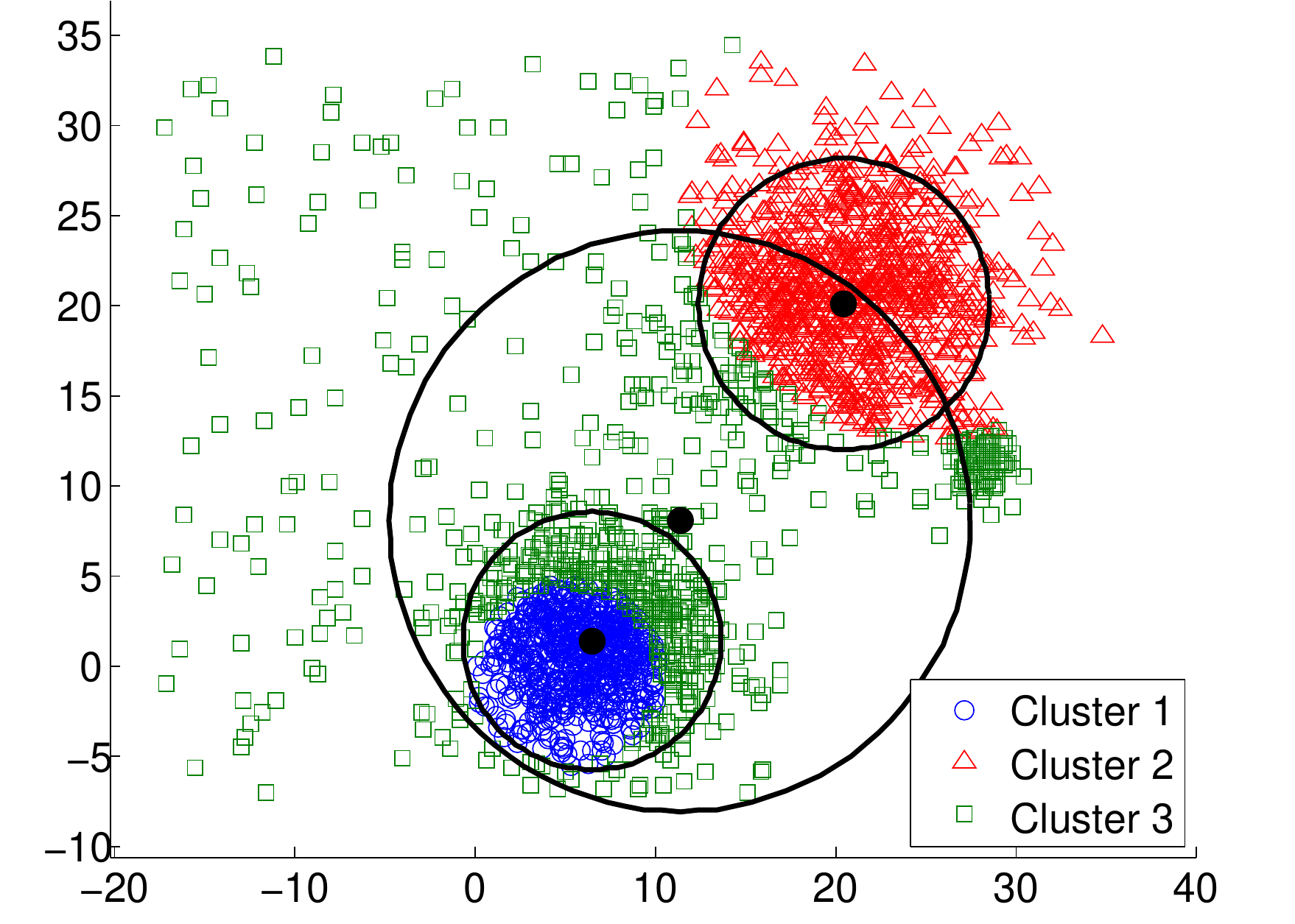}\hspace{-8pt}\label{ex3pcm}}
\hfil
\centering
\subfloat[APCM]{\includegraphics[width=0.33\textwidth]{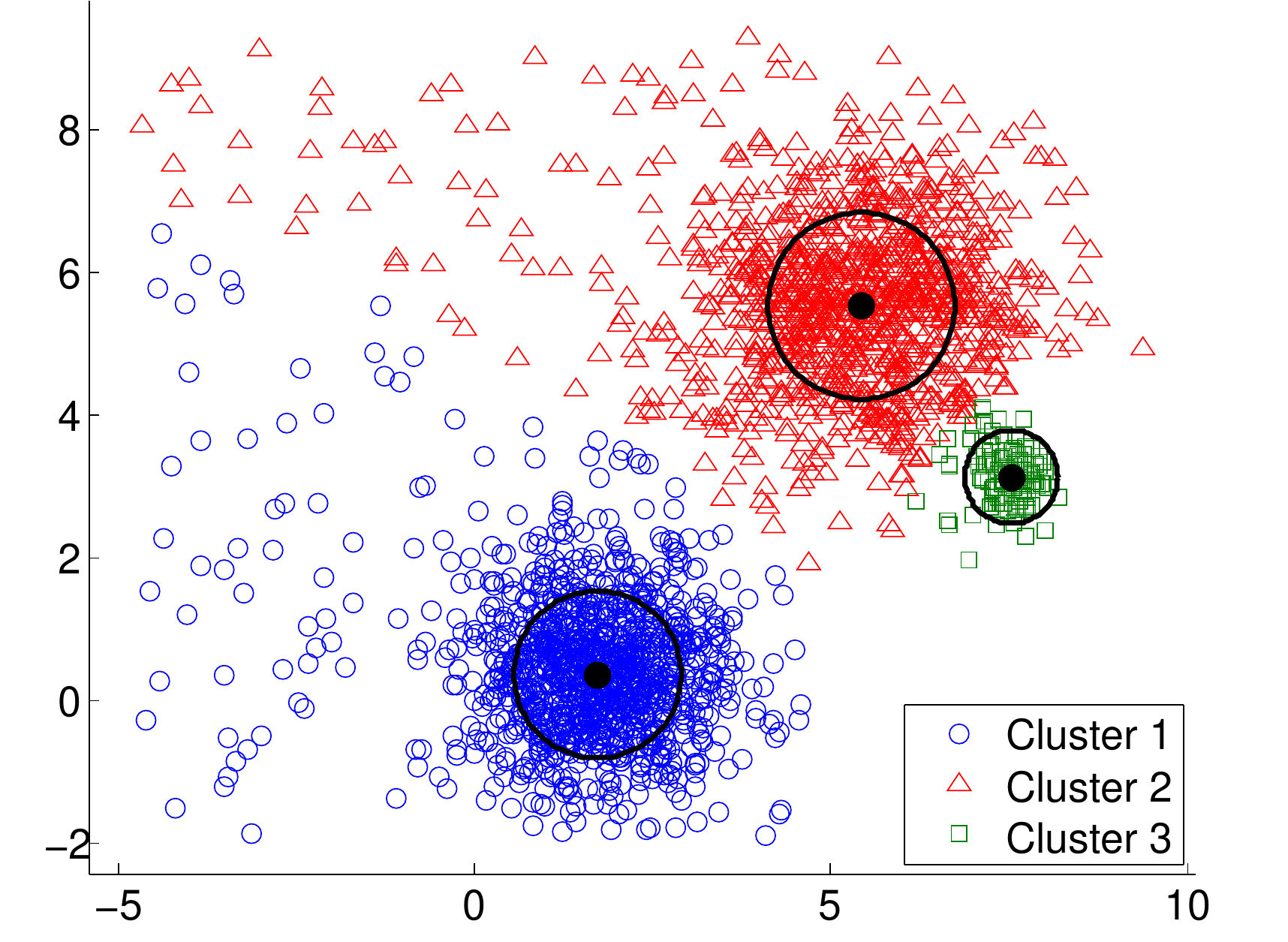}\hspace{6pt}\label{ex3apcm}}
\hfil
\centering
\subfloat[UPC]{\includegraphics[width=0.33\textwidth]{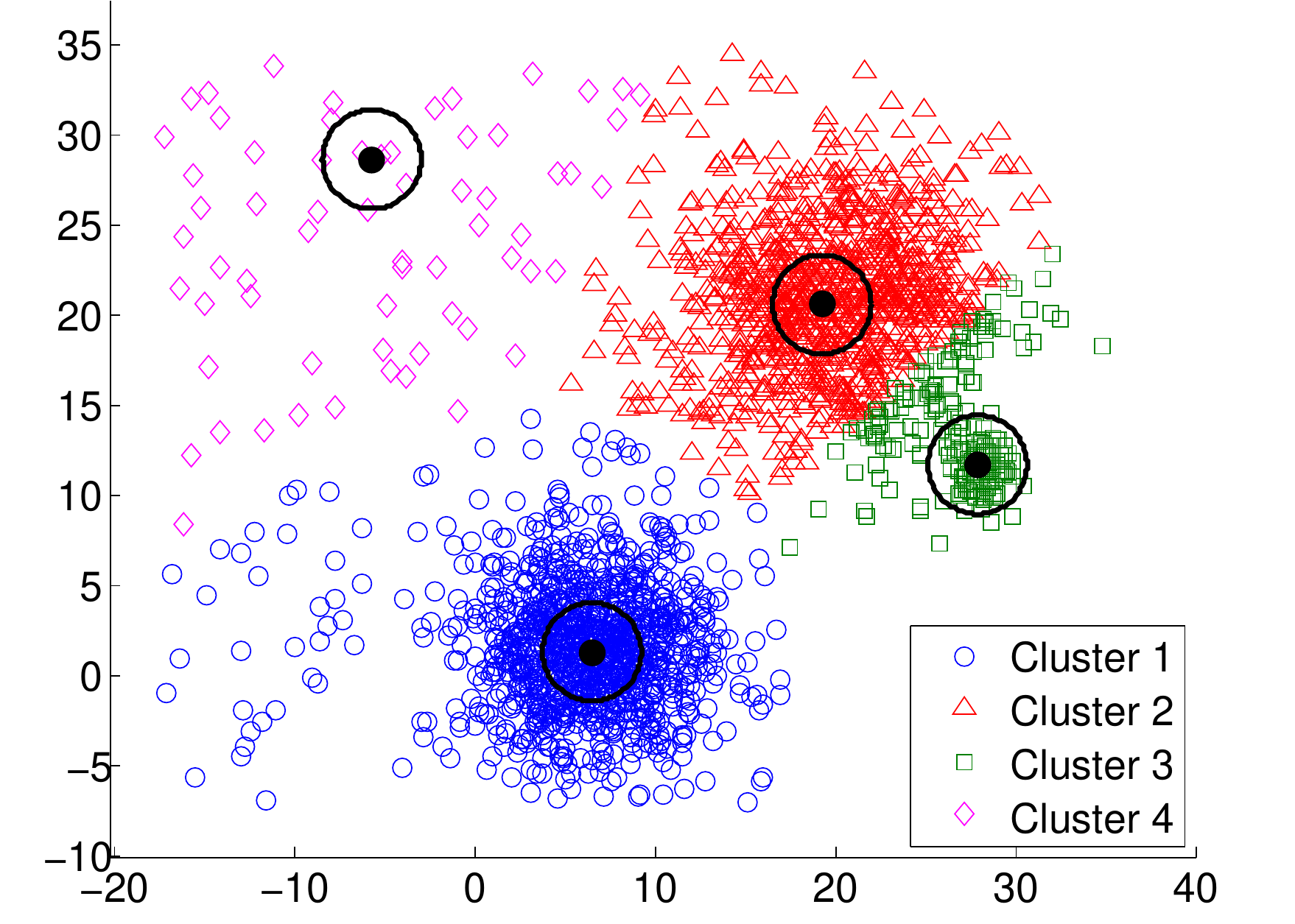}\hspace{-6pt}\label{ex3upc}}
\hfil
\centering
\subfloat[PFCM]{\includegraphics[width=0.33\textwidth]{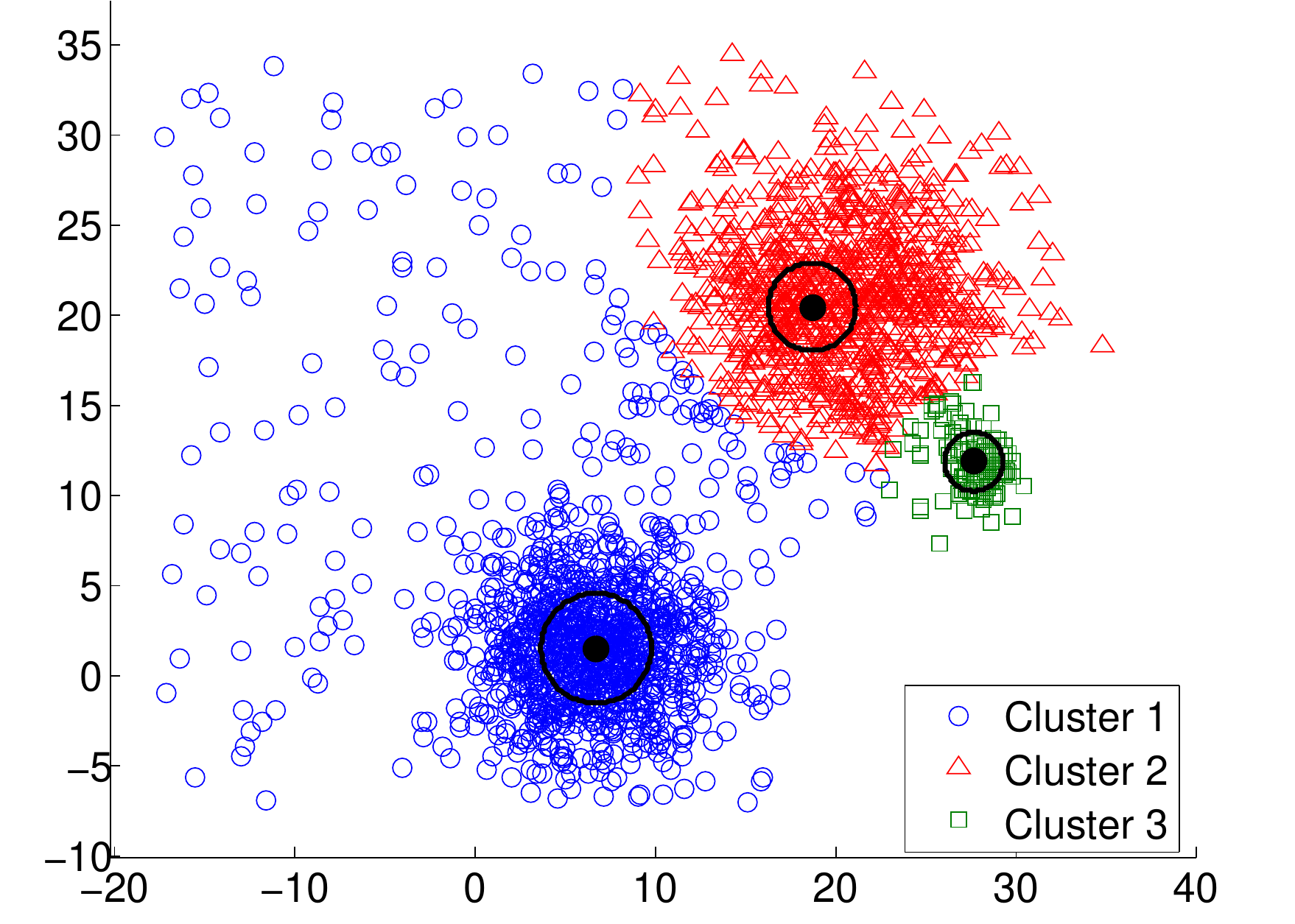}\hspace{-7pt}\label{ex3pfcm}}
\hfil
\centering
\subfloat[UPFC]{\includegraphics[width=0.33\textwidth]{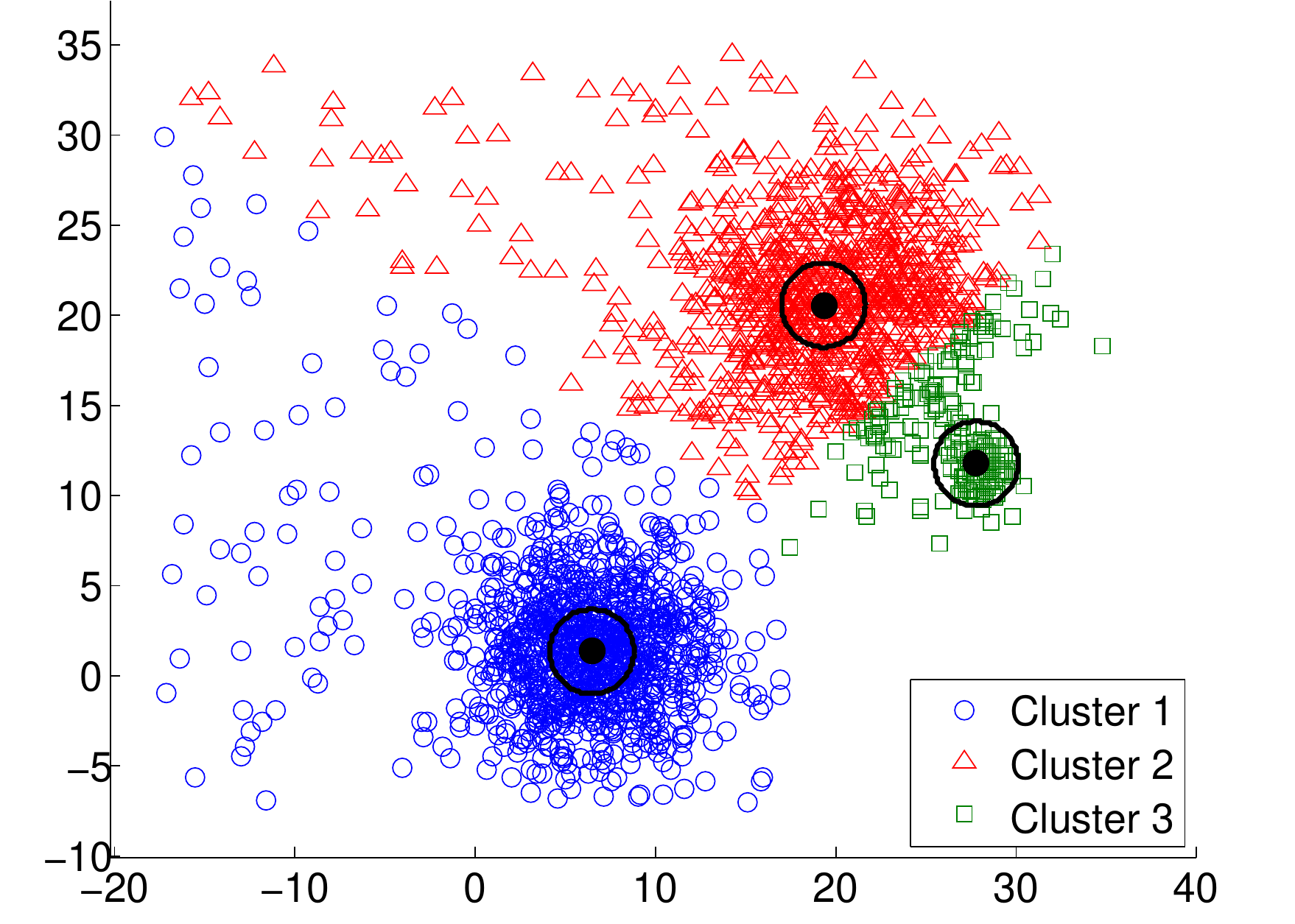}\hspace{-8pt}\label{ex3upfc}}
\hfil
\centering
\subfloat[GRPCM]{\includegraphics[width=0.33\textwidth]{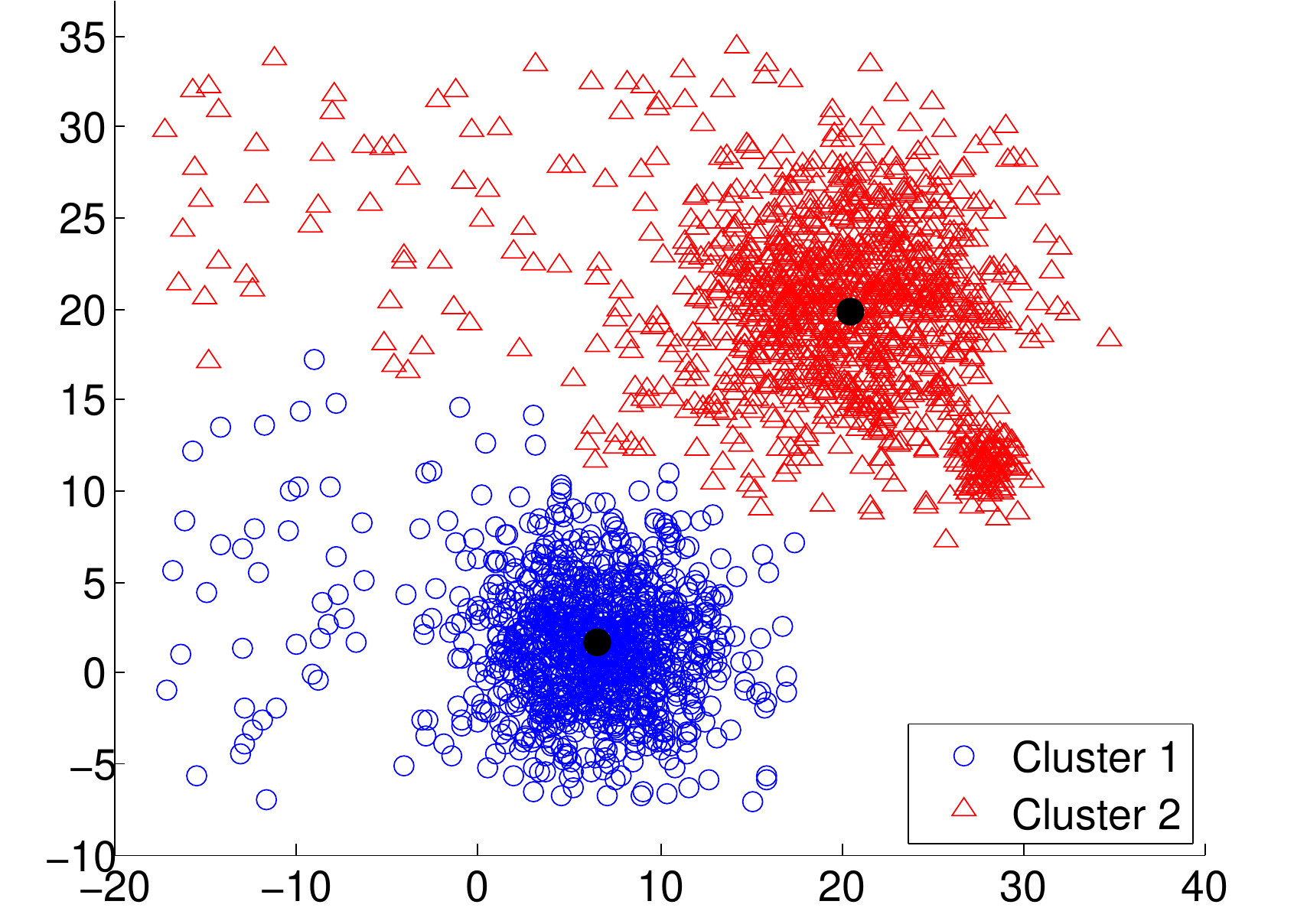}\hspace{-8pt}\label{ex3grpcm}}
\hfil
\centering
\subfloat[AMPCM]{\includegraphics[width=0.33\textwidth]{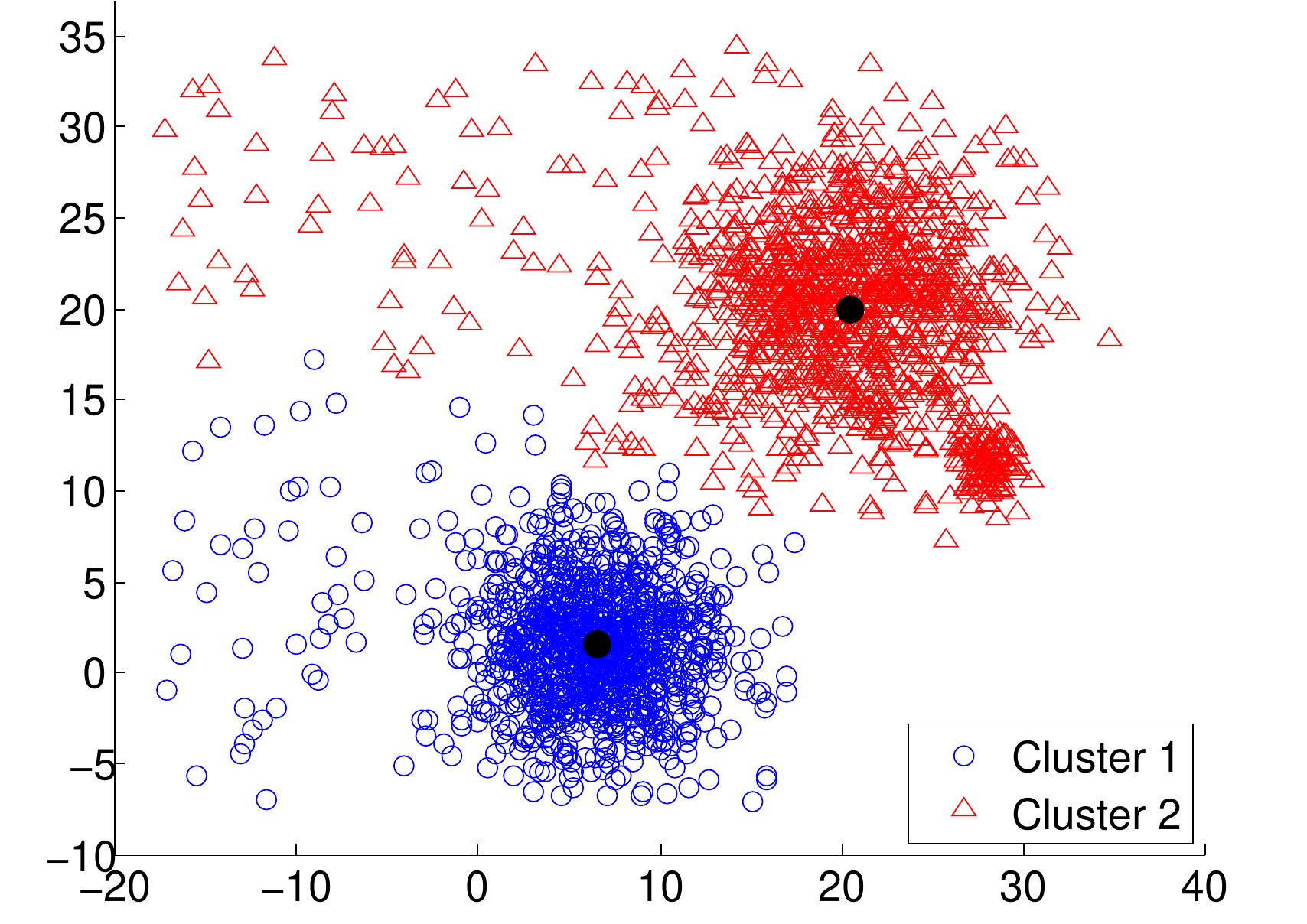}\hspace{-8pt}\label{ex3ampcm}}
\hfil
\centering{\caption{ (a) Data set of experiment 3. Clustering results for (b) k-means, $m_{ini}=3$, (c) FCM, $m_{ini}=3$, (d) FCM \& XB, (e) PCM, $m_{ini}=15$, (f) APCM, $m_{ini}=15$ and $\alpha=1$, (g) UPC, $m_{ini}=8$ and $q=3$, (h) PFCM, $m_{ini}=15$, $K=1$, $\alpha=1$, $\beta=3$, $q=2.5$ and $n=2$, (i) UPFC, $m_{ini}=15$, $\alpha=1$, $\beta=1.5$, $q=3$ and $n=2$, (j) GRPCM and (k) AMPCM.}\label{example3}}
\end{figure*}

Table~\ref{table:synth3} shows the clustering results of all algorithms, where $m_{ini}$ and $m_{final}$ denote the initial and the final number of the obtained clusters, respectively. Fig.~\ref{ex3kmeans} and Fig.~\ref{ex3fcm} show the clustering result obtained using the k-means and FCM algorithms, respectively, for $m_{ini}=3$. Figs.~\ref{ex3fcmXB},~\ref{ex3pcm},~\ref{ex3apcm},~\ref{ex3upc},~\ref{ex3pfcm},~\ref{ex3upfc},~\ref{ex3grpcm} and \ref{ex3ampcm} depict the performance of FCM \& XB, PCM, APCM, UPC, PFCM, UPFC, GRPCM and AMPCM respectively, with their parameters chosen as stated in the figure caption. In addition, the circles, centered at each $\boldsymbol{\theta}_j$ and having radius $\sqrt{\gamma_j}$ (as they have been computed after the convergence of the algorithms), are also drawn.

\begin{table*}[htb!]
\centering
\caption{Performance of clustering algorithms for the Iris data set.}
{\small
\begin{tabular}{>{\arraybackslash}m{0.41\linewidth} | >{\centering\arraybackslash}m{0.03\linewidth} |>{\centering\arraybackslash}m{0.05\linewidth}| >{\centering\arraybackslash}m{0.06\linewidth} |>{\centering\arraybackslash}m{0.06\linewidth} |>{\centering\arraybackslash}m{0.07\linewidth} |>{\centering\arraybackslash}m{0.05\linewidth}
|>{\centering\arraybackslash}m{0.05\linewidth}}
\hline  
	& \centering $m_{ini}$ & \centering $m_{final}$ & \centering $RM$ & \centering $SR$ & \centering $MD$ & {\centering $Iter$} & {\centering $Time$}\\
\hline
k-means & 3   & 3   & 87.97 & 89.33 & 0.1271 & 3 & 0.30\\
k-means & 10  & 10  & 76.64 & 40.00 & 0.7785 & 4 & 0.13\\
\hline
FCM & 3  & 3   & 87.97 & 89.33 & 0.1287 & 19 & 0.02\\
FCM & 10 & 10  & 76.16 & 36.00 & 0.7793 & 35 & 0.02\\
\hline
FCM \& XB & -  & 2  & 76.37 & 66.67 & 0.3986 & - & 0.16\\
\hline
PCM & 3  & 2 & 77.19 & 66.67 & 0.3563 & 19 & 0.11\\
PCM & 10 & 2 & 77.63 & 66.67 & 0.3488 & 28 & 0.11\\
\hline
APCM ($\alpha=3$) & 3   & 3 & 91.24 & 92.67 & 0.1406 & 26 & 0.06 \\
APCM ($\alpha=1$) & 10  & 3 & 84.15 & 84.67 & 0.4030 & 67 & 0.09\\
\hline
UPC ($q=4$)   & 3   & 3 & 91.24 & 92.67 & 0.1438 & 26 & 0.03 \\
UPC ($q=2.4$) & 10  & 3 & 81.96 & 81.33 & 0.5569 & 150 & 0.11\\
\hline
PFCM ($K=1$, $a=1$, $b=10$,  $q=7$, $n=2$) & 3  & 3 & 90.55 & 92.00 & 0.1833 & 17 & 0.03\\
PFCM ($K=1$, $a=1$, $b=1.5$, $q=2$, $n=2$) & 10 & 3 & 84.64 & 85.33 & 0.5411 & 92 & 0.05 \\
\hline
UPFC ($a=1$, $b=5$,  $q=4$,   $n=2$) & 3  & 3 & 91.24 & 92.67 & 0.1642 & 32 & 0.03 \\
UPFC ($a=1$, $b=1.5$, $q=2.5$, $n=2$) & 10 & 3 & 81.96 & 81.33 & 0.5566 & 180 & 0.16\\
\hline
GRPCM  & -  & 2 & 77.63 & 66.67 & 0.3675 & 26 & 0.47\\
\hline
AMPCM  & -  & 2 & 77.63 & 66.67 & 0.3643 & 28 & 0.47\\
\hline
\end{tabular}}
\label{table:real1}
\end{table*}

\begin{table*}[htb!]
\centering
\caption{Performance of clustering algorithms for the New Thyroid data set.}
{\small
\begin{tabular}{>{\arraybackslash}m{0.41\linewidth} | >{\centering\arraybackslash}m{0.03\linewidth} |>{\centering\arraybackslash}m{0.05\linewidth}| >{\centering\arraybackslash}m{0.06\linewidth} |>{\centering\arraybackslash}m{0.06\linewidth} |>{\centering\arraybackslash}m{0.07\linewidth} |>{\centering\arraybackslash}m{0.05\linewidth}
|>{\centering\arraybackslash}m{0.05\linewidth}}
\hline  
	& \centering $m_{ini}$ & \centering $m_{final}$ & \centering $RM$ & \centering $SR$ & \centering $MD$ & {\centering $Iter$} & {\centering $Time$}\\
\hline
k-means & 3  & 3  & 79.65 & 87.44 & 0.8949 & 3 & 0.16\\
k-means & 5  & 5  & 70.78 & 63.72 & 0.8548 & 12 & 0.14\\
k-means & 15 & 15 & 55.01 & 25.12 & 0.7159 & 16 & 0.17\\
\hline
FCM & 3  & 3  & 83.29 & 89.77 & 0.4385 & 53 & 0.02\\
FCM & 5  & 5  & 60.32 & 46.98 & 1.0785 & 55 & 0.02\\
FCM & 15 & 15 & 52.83 & 21.86 & 0.8816 & 91 & 0.11\\
\hline
FCM \& XB & -  & 3  & 83.29 & 89.77 & 0.4385 & - & 0.44\\
\hline
PCM & 3  & 1 & 53.05 & 69.77 & 0.1177 & 7 & 0.06\\
PCM & 5  & 1 & 53.05 & 69.77 & 0.0559 & 7 & 0.06\\
PCM & 15 & 1 & 53.05 & 69.77 & 0.0577 & 8 & 0.16\\
\hline
APCM ($\alpha=8$)   & 3  & 3 & 94.58 & 96.74 & 0.7231 & 30 & 0.08\\
APCM ($\alpha=3$)   & 5  & 3 & 87.59 & 92.56 & 1.0026 & 21 & 0.06\\
APCM ($\alpha=1.2$) & 15 & 3 & 73.73 & 83.72 & 2.7123 & 54 & 0.16\\
\hline
UPC ($q=3$) & 3  & 3 & 83.85 & 90.23 & 0.6982 & 41 & 0.03 \\
UPC ($q=2$) & 5  & 3 & 77.94 & 86.51 & 1.0739 & 16 & 0.02\\
UPC ($q=1$) & 15 & 3 & 67.21 & 79.53 & 2.7617 & 34 & 0.05\\
\hline
PFCM ($K=1$, $a=1$, $b=5$, $q=8$, $n=2$) & 3  & 1 & 53.05 & 69.77 & 0.0507 & 15 & 0.03\\
PFCM ($K=1$, $a=1$, $b=5$, $q=8$, $n=2$) & 5  & 2 & 64.95 & 77.21 & 1.3855 & 41  & 0.05\\
PFCM ($K=1$, $a=1$, $b=8$, $q=2$, $n=2$) & 15 & 3 & 66.64 & 79.07 & 1.8381 & 28 & 0.09\\
\hline
UPFC ($a=1$, $b=5$,   $q=8$,   $n=2$) & 3  & 2 & 68.21 & 79.53 & 0.4108 & 21 & 0.05 \\
UPFC ($a=1$, $b=3$,   $q=6$,   $n=2$) & 5  & 3 & 78.76 & 86.98 & 0.9682 & 27 & 0.05 \\
UPFC ($a=1$, $b=0.1$, $q=1.5$, $n=2$) & 15 & 3 & 72.85 & 83.26 & 1.5909 & 34 & 0.09\\
\hline
GRPCM  & -  & 1 & 53.05 & 69.77 & 0.2732 & 63 & 2.04\\
\hline
AMPCM  & - & 1 & 53.05 & 69.77 & 0.2667 & 64 & 1.98\\
\hline
\end{tabular}}
\label{table:real2}
\end{table*}

As it can be deduced from Fig.~\ref{example3} and Table.~\ref{table:synth3}, even when the k-means and the FCM are initialized with the (unknown in practice) true number of clusters ($m=3$), they fail to unravel the underlying clustering structure, most probably due to the noise encountered in the data set and the big difference in the variances between nearby clusters. The FCM \& XB validity index and the classical PCM also fail to detect the cluster with the smallest variance. On the other hand, the proposed APCM algorithm produces very accurate results for various initial values of $m_{ini}$, detecting with high accuracy the center of the actual clusters (see MD measure in Table~\ref{table:synth3}). The UPC algorithm has been exhaustively fine tuned so that the parameters $\gamma_j$'s, which remain fixed during its execution and are the same for all clusters, get small enough values, in order to identify the cluster with the smallest variance ($C_3$). However, under these circumstances, a representative that is initially placed at the region where only noisy points exist (due to bad initialization from FCM), is trapped there and cannot be moved towards a dense region (due to the small value of its $\gamma_j$). Thus, UPC concludes to 4 clusters when $q=3$, but if we set $q=2$, UPC will conclude to 2 clusters, identifying $C_1$ and $C_2$ and missing $C_3$. The PFCM and UPFC algorithms constantly produce 3 clusters, at the cost of a computationally demanding fine tuning of the (several) parameters they involve. However, even when their parameters are fine tuned, the final estimates of $\boldsymbol{\theta}_j$'s are not closely located to the true cluster centers (see MD measure in Table~\ref{table:synth3}). The GRPCM and AMPCM algorithms conclude to two clusters, failing to unravel the underlying clustering structure. It is worth noting that these two algorithms require too much time to converge, mainly due to the way they perform cluster elimination. Finally, as it is deduced from Table~\ref{table:synth3}, the APCM algorithm achieves the best RM and SR results, detecting more accurately the true centers of the clusters (minimum MD), while, in addition, it requires the fewest iterations for convergence. It is worth noting that the operation time of APCM is less than that of PCM, even when APCM requires more iterations than PCM to converge. This is because the APCM iterations become ``lighter" as the algorithm evolves, since several clusters are eliminated.

The last three experiments are conducted on the basis of real world data sets.

\begin{table*}[htb!]
\centering
\caption{Performance of clustering algorithms for the Salinas HSI data set.}
{\small
\begin{tabular}{>{\arraybackslash}m{0.41\linewidth} | >{\centering\arraybackslash}m{0.03\linewidth} |>{\centering\arraybackslash}m{0.05\linewidth}| >{\centering\arraybackslash}m{0.04\linewidth} |>{\centering\arraybackslash}m{0.04\linewidth} |>{\centering\arraybackslash}m{0.075\linewidth} |>{\centering\arraybackslash}m{0.025\linewidth}
|>{\centering\arraybackslash}m{0.07\linewidth}}
\hline  
	& \centering $m_{ini}$ & \centering $m_{final}$ & \centering $RM$ & \centering $SR$ & \centering $MD$ & {\centering $Iter$} & {\centering $Time$} \\
\hline
k-means & 8  & 8  & 93.07 & 77.12 & 0.54e+03 & 11 & 0.11e+02\\
k-means & 20 & 20 & 96.46 & 69.89 & 1.03e+03 & 23 & 1.61e+02\\
k-means & 30 & 30 & 95.94 & 63.29 & 1.17e+03 & 44 & 6.43e+02 \\
\hline
FCM & 8  & 8  & 97.39 & 85.96 & 0.56e+03 & 47 & 0.11e+02\\
FCM & 20 & 20 & 96.33 & 67.28 & 0.89e+03 & 312 & 1.88e+02\\
FCM & 30 & 30 & 95.80 & 61.14 & 0.94e+03 & 827 & 7.41e+02\\
\hline
FCM \& XB & -  & 9  & 91.95 & 70.17 & 2.47e+03 & - & 1.03e+03\\
\hline
PCM & 8  & 4 & 91.51 & 67.22 & 0.94e+03 & 90 & 0.48e+02\\
PCM & 20 & 6 & 94.70 & 74.21 & 0.71e+03 & 98 & 2.49e+02\\
PCM & 30 & 6 & 94.65 & 74.02 & 0.74e+03 & 65 & 8.04e+02\\
\hline
APCM ($\alpha=4$)   & 8  & 8 & 97.35 & 85.42 & 0.76e+03 & 108 & 0.50e+02\\
APCM ($\alpha=2$)   & 20 & 9 & 97.64 & 88.17 & 0.59e+03 & 121 & 2.45e+02\\
APCM ($\alpha=2$) & 30 & 9 & 97.64 & 88.17 & 0.60e+03 & 137 & 7.91e+02\\
\hline
UPC ($q=3$) & 8  & 5 & 95.01 & 77.83 & 0.56e+03 & 48  & 0.23e+02\\
UPC ($q=3$) & 20 & 6 & 96.32 & 81.35 & 0.45e+03 & 44 &  2.50e+02\\
UPC ($q=3$) & 30 & 6 & 96.32 & 81.34 & 0.37e+03 & 47  & 7.18e+02\\
\hline
PFCM ($K=1$, $a=1$, $b=6$, $q=2$, $n=2$) & 8  & 6  & 96.60 & 81.15 & 0.62e+03 & 193 & 0.88e+02\\
PFCM ($K=1$, $a=1$, $b=1$, $q=3$, $n=2$) & 20 & 7  & 97.69 & 88.97 & 0.49e+03 & 151 & 3.24e+02\\
PFCM ($K=1$, $a=1$, $b=1$, $q=4$, $n=2$) & 30 & 7 & 97.59 & 89.44 & 0.56e+03 & 201 & 1.02e+03\\
\hline
UPFC ($a=1$, $b=8$, $q=4$, $n=2$) & 8  & 6 & 96.31 & 81.33 & 0.34e+03 & 61 & 0.37e+02\\
UPFC ($a=1$, $b=5$, $q=5$, $n=2$) & 20 & 6 & 96.31 & 81.33 & 0.40e+03 & 61 & 2.29e+02 \\
UPFC ($a=1$, $b=5$, $q=5$, $n=2$) & 30 & 6 & 96.31 & 81.33 & 0.32e+03 & 136 & 9.18e+02\\
\hline
GRPCM & - & 6 & 90.03 & 70.97 & 0.48e+03 & 142 & 2.79e+04\\
\hline
AMPCM & - & 6 & 90.03 & 70.97 & 0.48e+02 & 145 & 2.85e+04\\
\hline
\end{tabular}}
\label{table:HSI}
\end{table*}

{\bf Experiment 4}:
Let us consider the Iris data set (\cite{UCILib}) consisting of $N=150$, $4$-dimensional data points that form three classes, each one having 50 points. In this data set, two classes are overlapped, thus one can argue whether the true number of clusters $m$ is 2 or 3. As it is shown in Table~\ref{table:real1}, k-means and FCM work well, only if they are initialized with the true number of clusters ($m_{ini}=3$). The FCM \& XB and the classical PCM fail to end up with $m_{final}=3$ clusters, independently of the initial number of clusters. The same result holds for the GRPCM and the AMPCM algorithms. On the contrary, the APCM, the UPC, the PFCM and the UPFC algorithms, after appropriate fine tuning of their parameters, produce very accurate results in terms of RM, SR and MD. However, the APCM algorithm detects more accurately the centers of the true clusters (in most cases), compared to the other algorithms. It is noted again that the main drawback of the PFCM and the UPFC algorithms is the requirement for fine tuning of several parameters, which increases excessively the computational load required for detecting the appropriate combination of parameters that achieves the best clustering performance.

{\bf Experiment 5}:
Let us consider now the so-called New Thyroid three-class data set (\cite{UCILib}) consisting of $N=215$, $5$-dimensional data points. The experimental results for all algorithms are shown in Table~\ref{table:real2}. It can be seen that both k-means and FCM provide satisfactory results, only if they are initialized with the true number of clusters ($m_{ini}=3$), and the XB validity index is correctly minimized for $m_{ini}=3$, thus FCM \& XB concludes to the same results as FCM for $m_{ini}=3$, however at the cost of increased computational time. The classical PCM exhibits degraded performance, for all choices of $m_{ini}$. Similar to PCM behavior is observed for the GRPCM and the AMPCM algorithms, which fail to distinguish any clustering structure. On the contrary, the APCM and UPC algorithms detect the actual number of clusters independently of $m_{ini}$ after appropriate fine tuning of their parameters. However, again the APCM algorithm constantly produces higher RM and SR values. Finally, the PFCM and UPFC exhibit (a) inferior performance compared to APCM and UPC and (b) superior performance with respect to k-means and FCM provided that the latter are not intialized with the correct number of clusters.

In the next experiment we assess the performance of APCM and all the algorithms considered before in a high-dimensional data set.

{\bf Experiment 6}:
In this experiment a hyperspectral image (HSI) data set is considered, which depicts a subscene of the flightline acquired by the AVIRIS sensor over Salinas Valley, California \cite{HsiSal}. The AVIRIS sensor generates 224 bands across the spectral range from 0.2 to 2.4 $\mu m$. The number of bands is reduced to 204 by removing 20 water absorption bands. The aim in this experiment is to identify homogeneous regions in the Salinas HSI. Thus, the dimensionality of the problem is 204. Fig.~\ref{salinas_5thPCA} shows the 5th principal component (PC) of this HSI. Also, for lighten the required computational load, we select a spatial region of size 150x150 from the whole image. Thus, a total size of $N=22500$ samples-pixels are used, stemming from 8 ground-truth classes: ``Corn", two types of ``Broccoli", four types of ``Lettuce" and ``Grapes", denoted by different colors in Fig.~\ref{salinas_gt}.  Note that there is no available ground truth information for the dark blue pixels in Fig.~\ref{salinas_gt}. It is also noted that Fig.~\ref{example6} depicts the best mapping obtained by each algorithm taking into account not only the ``dry" performance indices but also its physical interpretation (see \cite{Xen14}).

\begin{figure*}[htb!]
\centering
\subfloat[The {\small 5th PC} component]{\includegraphics[width=0.29\textwidth]{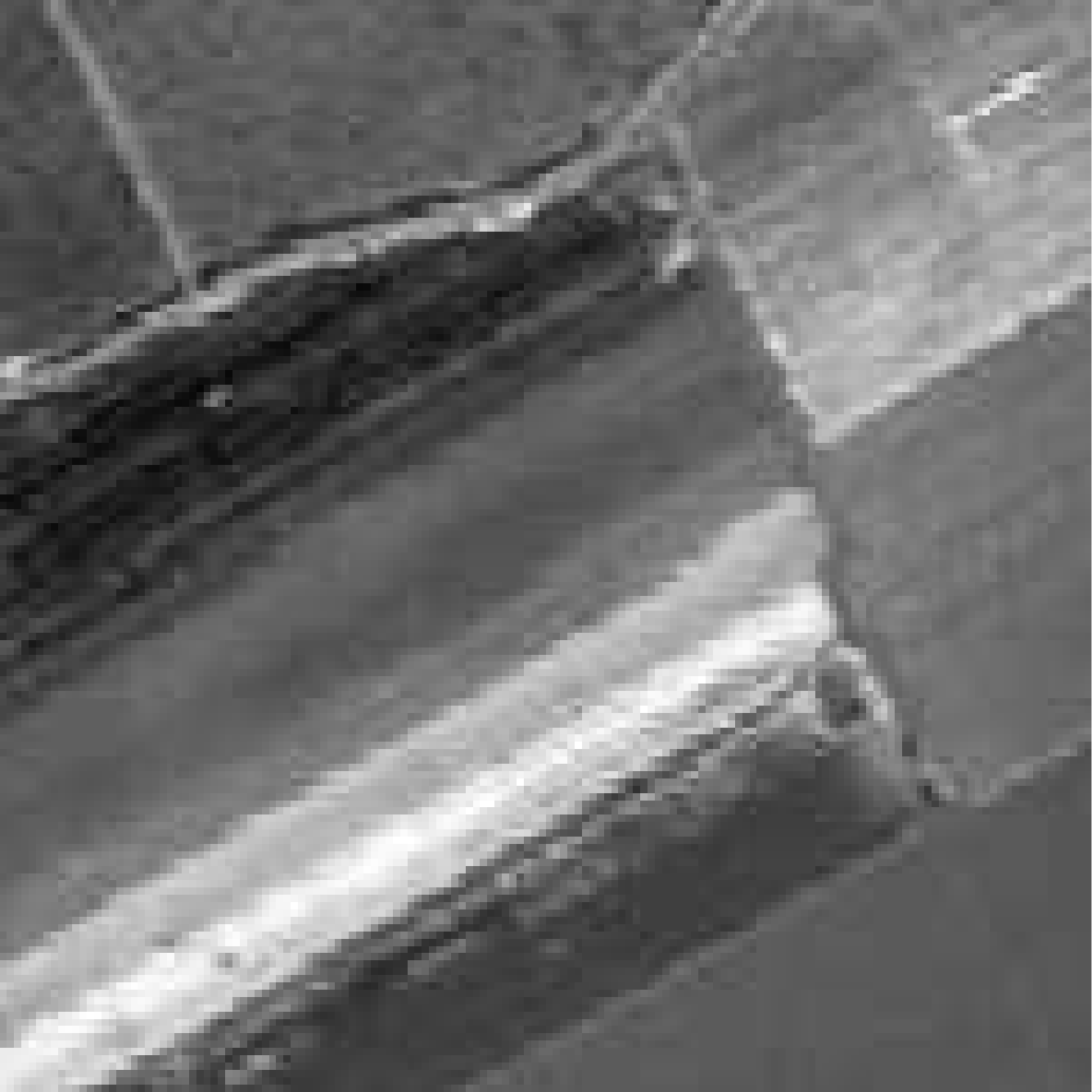}\label{salinas_5thPCA}}
\hfil
\centering
\subfloat[The ground truth]{\includegraphics[width=0.29\textwidth]{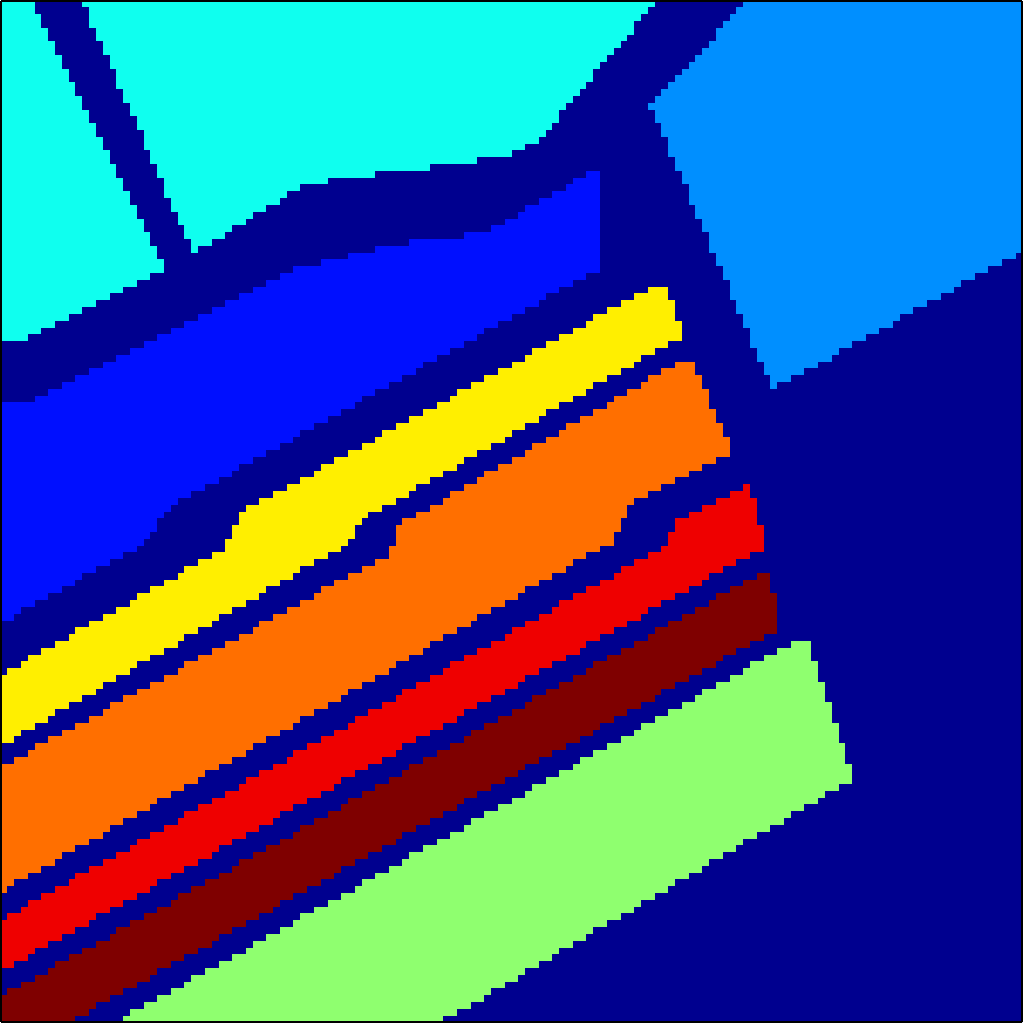}\label{salinas_gt}}
\hfil
\centering
\subfloat[k-means]{\includegraphics[width=0.29\textwidth]{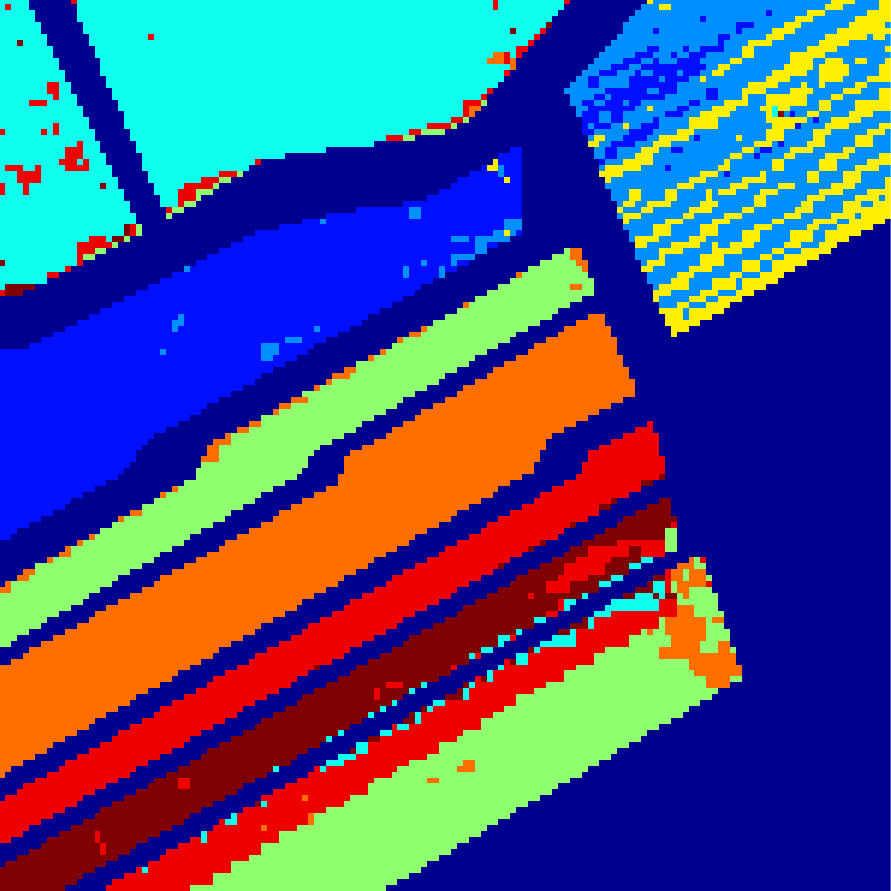}\label{salinas_kmeans_m8}}
\hfil
\centering
\subfloat[FCM]{\includegraphics[width=0.29\textwidth]{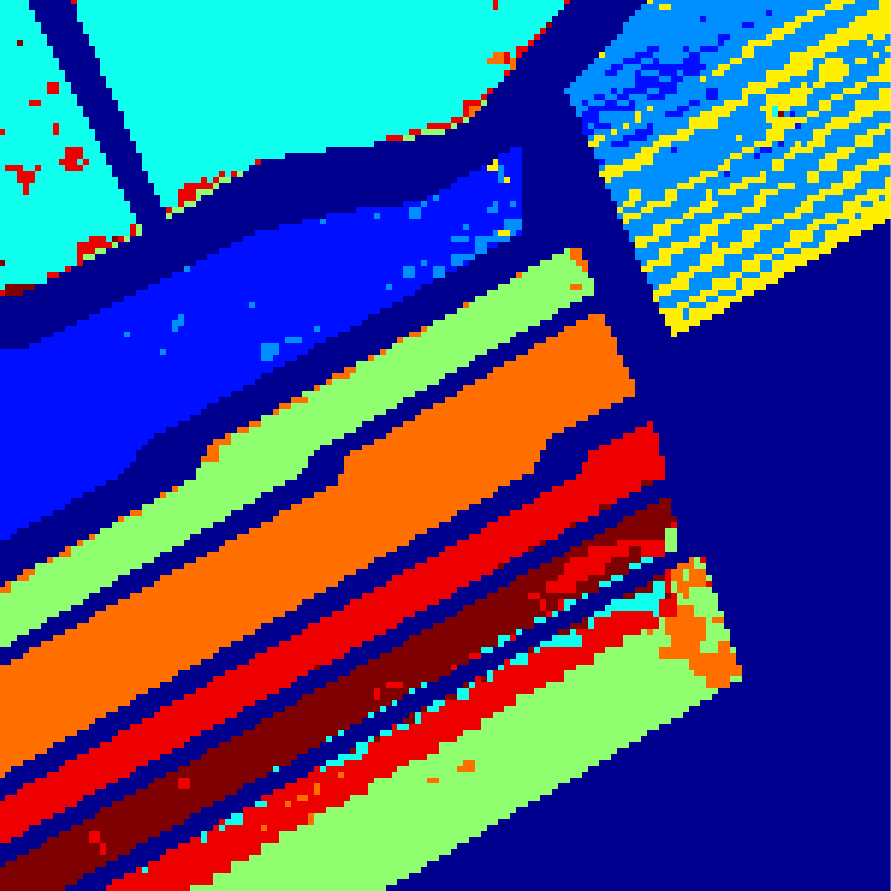}\label{salinas_FCM_m8}}
\hfil
\centering
\subfloat[FCM \& XB]{\includegraphics[width=0.29\textwidth]{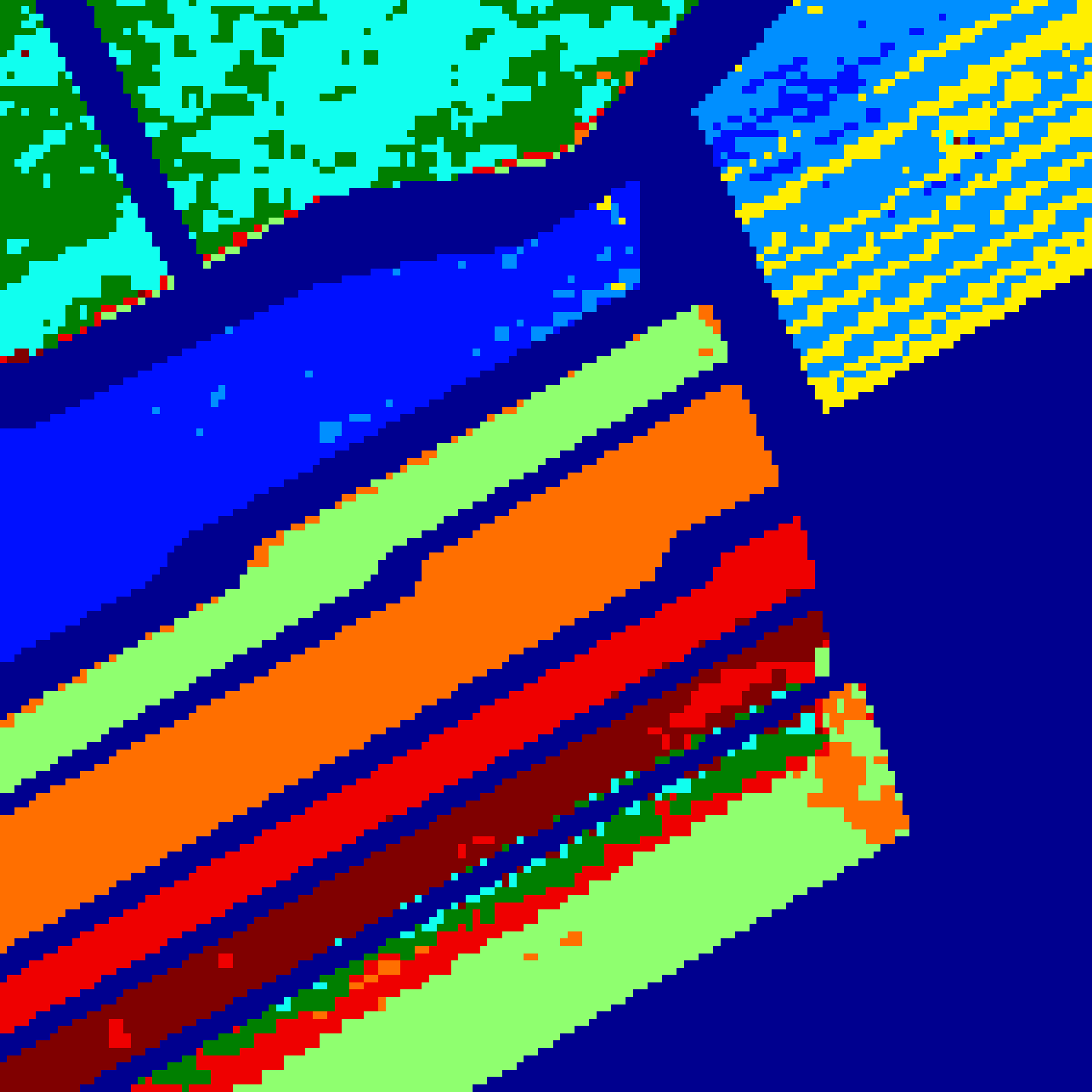}\label{salinas_FCM_XB_m9}}
\hfil
\centering
\subfloat[PCM]{\includegraphics[width=0.29\textwidth]{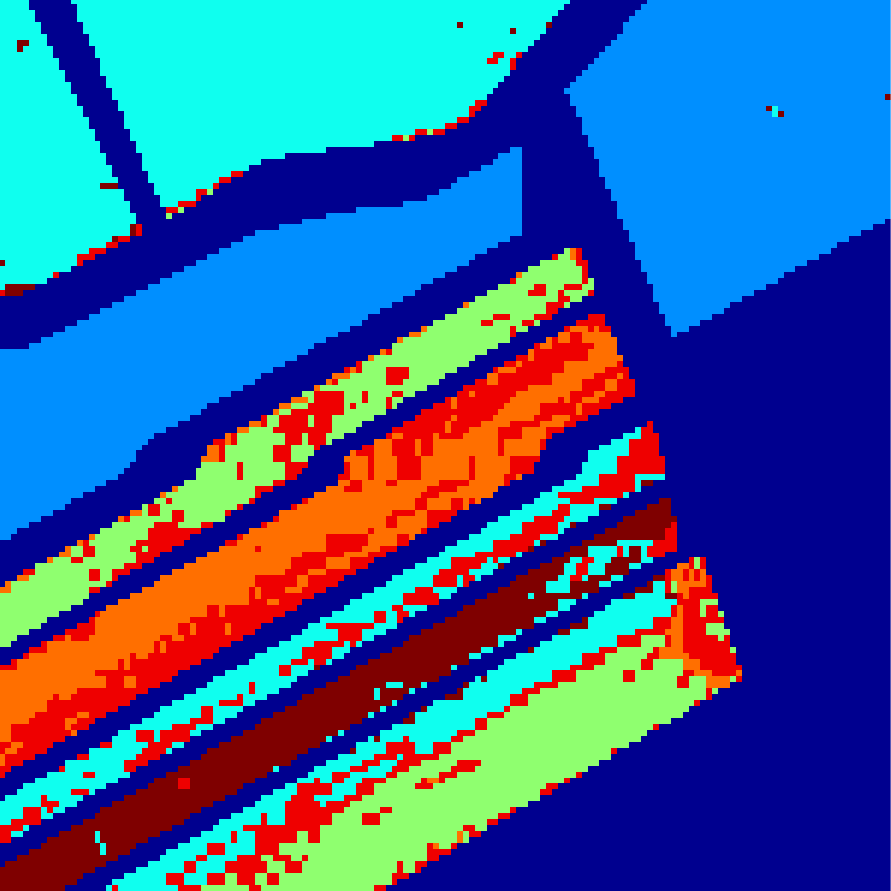}\label{salinas_PCM_m35}}
\hfil
\centering
\subfloat[APCM]{\includegraphics[width=0.29\textwidth]{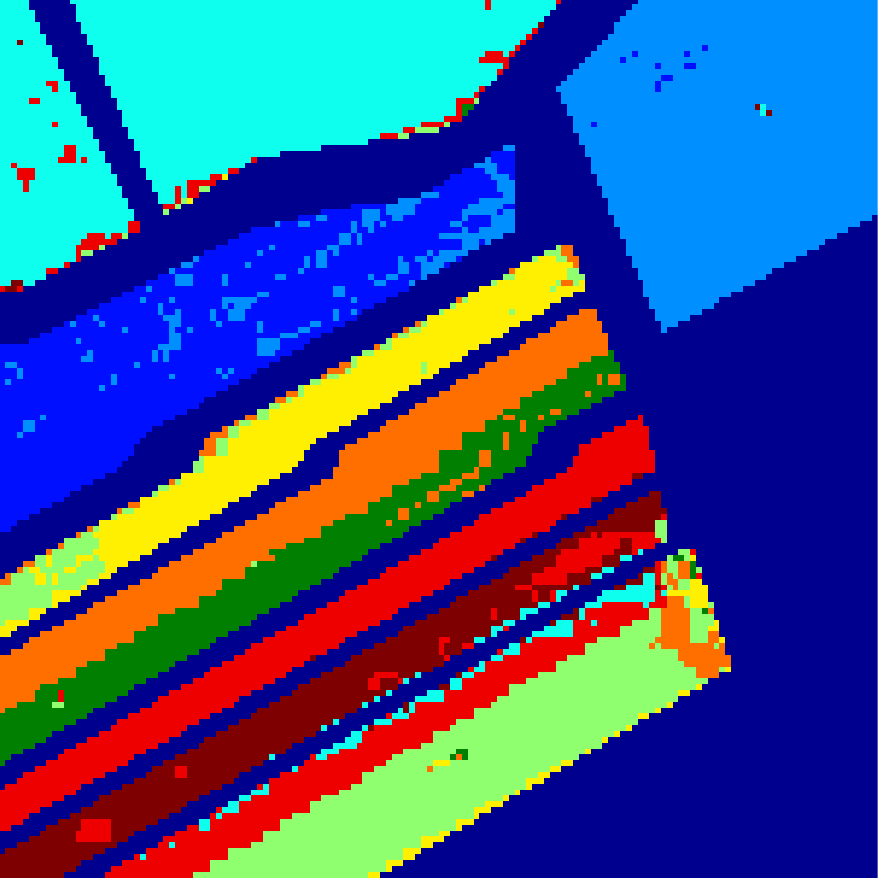}\label{salinas_APCM_m15}}
\hfil
\centering
\subfloat[UPC]{\includegraphics[width=0.29\textwidth]{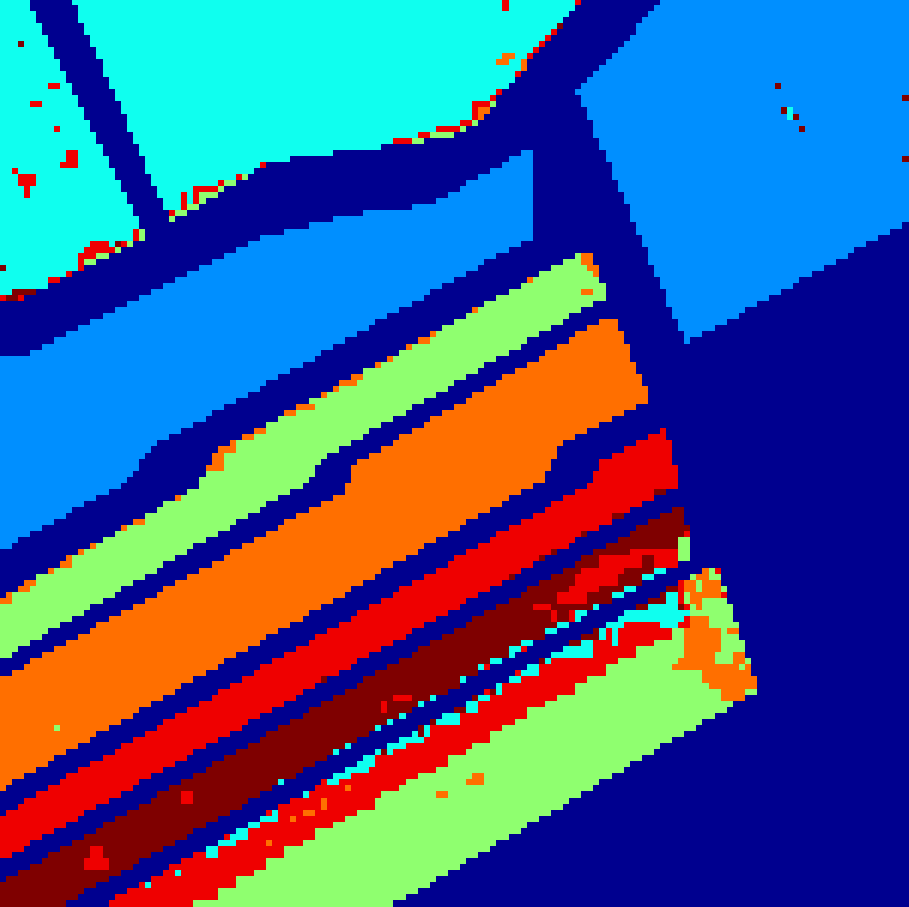}\label{salinas_UPC_m35}}
\hfil
\centering
\subfloat[PFCM]{\includegraphics[width=0.29\textwidth]{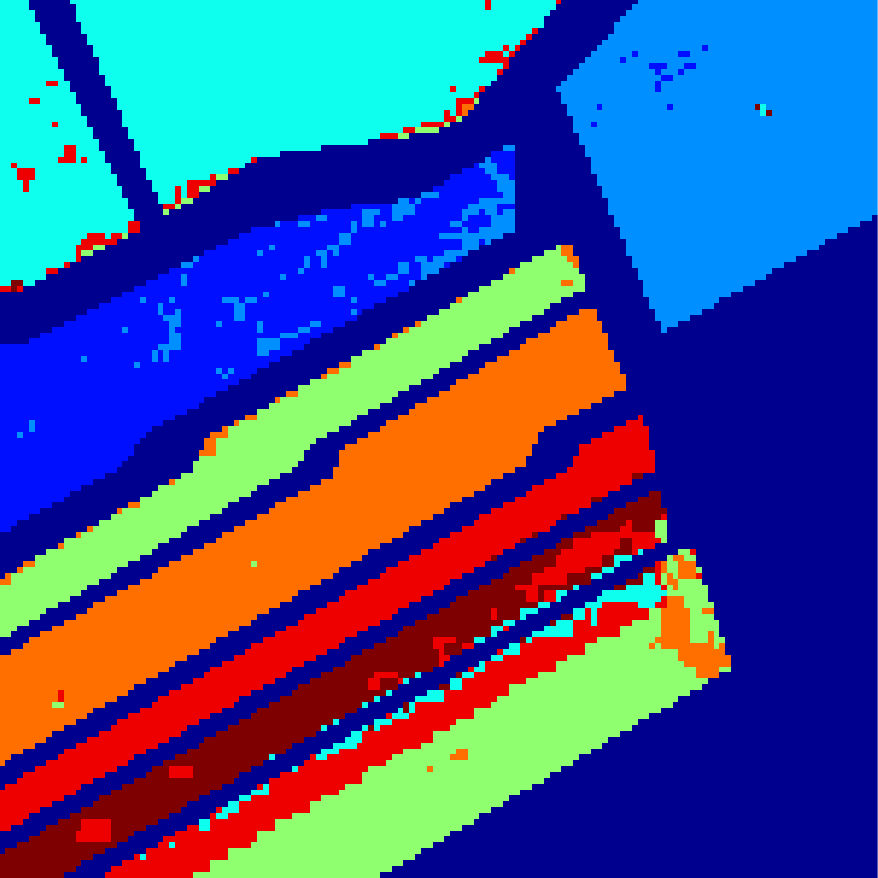}\label{salinas_PFCM_m8}}
\hfil
\centering
\subfloat[UPFC]{\includegraphics[width=0.29\textwidth]{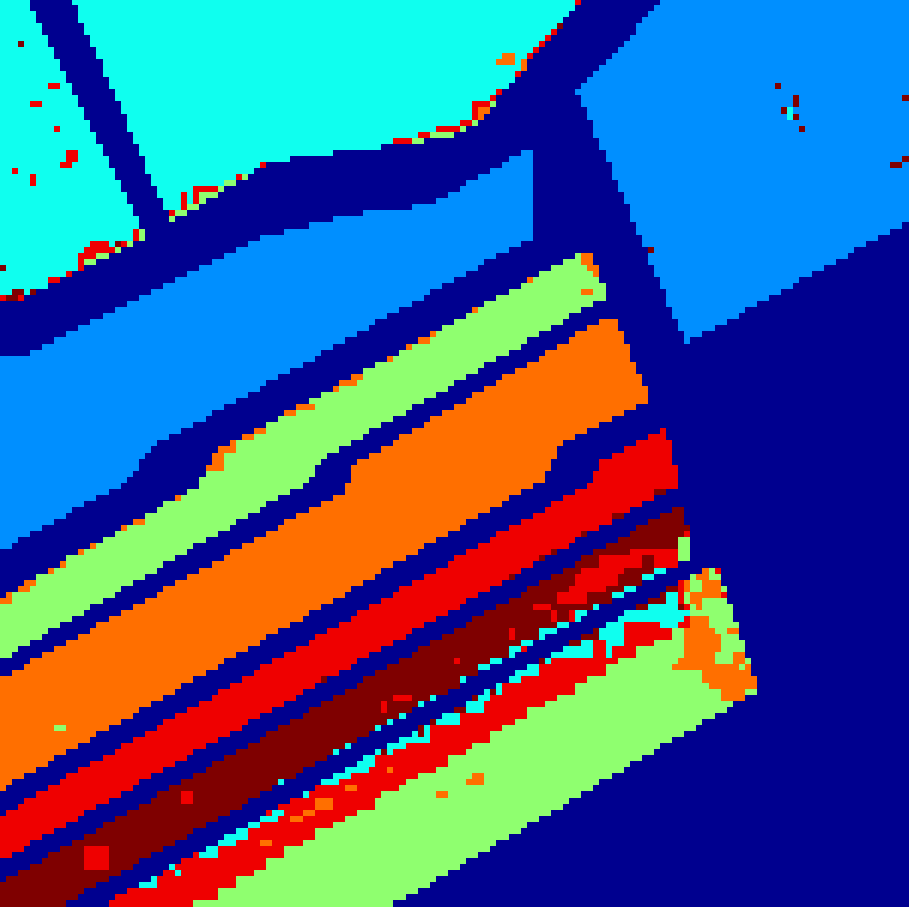}\label{salinas_UPFC_m35}}
\hfil
\centering
\subfloat[GRPCM]{\includegraphics[width=0.29\textwidth]{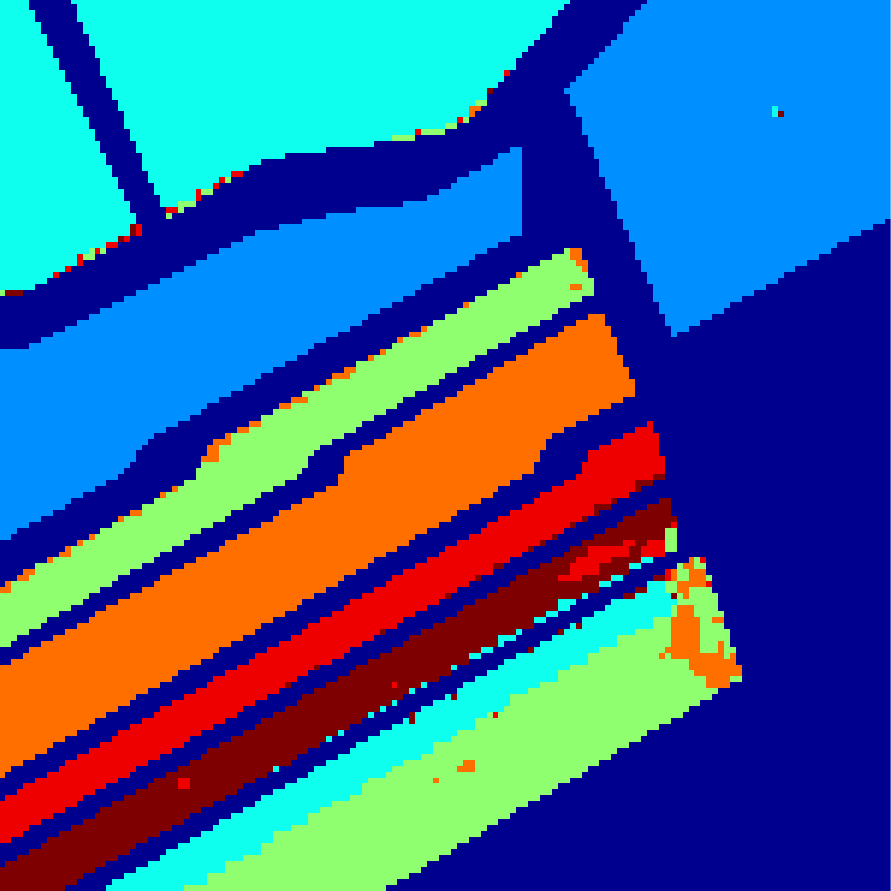}\label{salinas_GRPCM}}
\hfil
\centering
\subfloat[AMPCM]{\includegraphics[width=0.29\textwidth]{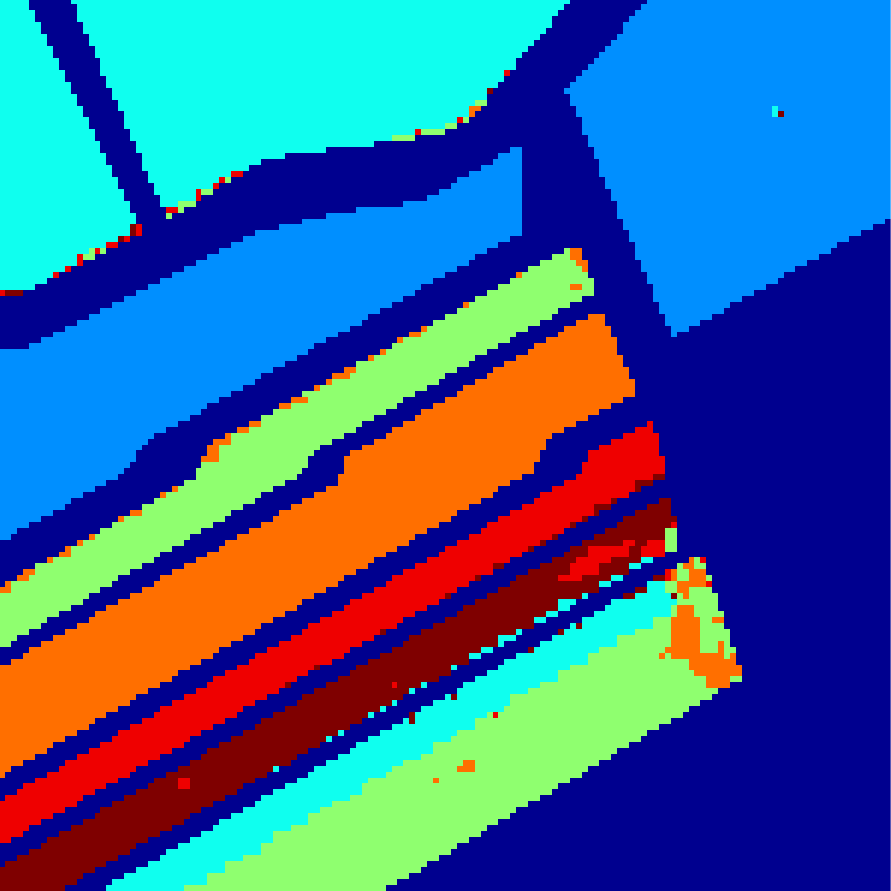}\label{salinas_AMPCM}}
\hfil
\label{example6}
\end{figure*}

\addtocounter{figure}{+1}
\begin{figure*}[htb!]
\centering
{\includegraphics[width=0.95\textwidth]{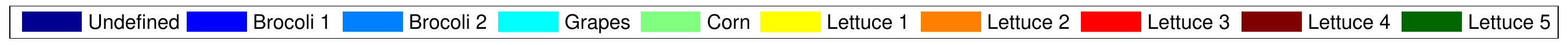}\label{salinas_legend}}
  \contcaption{(a) The 5th PC component of Salinas HSI and (b) the corresponding ground truth labeling. Clustering results of experiment 6 obtained from (c) k-means, $m_{ini}=8$, (d) FCM, $m_{ini}=8$, (e) FCM \& XB, (f) PCM, $m_{ini}=35$, (g) APCM, $m_{ini}=15$ and $\alpha=2$, (h) UPC, $m_{ini}=35$ and $q=3$, (i) PFCM, $m_{ini}=8$, $K=1$, $\alpha=1$, $\beta=6$, $q=2$ and $n=2$, (j) UPFC, $m_{ini}=35$, $\alpha=1$, $\beta=5$, $q=5$ and $n=2$, (k) GRPCM and (l) AMPCM.}
\end{figure*}

As it can be deduced from Fig.~\ref{example6}, when k-means and FCM are initialized with $m_{ini}=8$, they actually split the ``Broccoli 2" class into two clusters and they merge a part of ``Corn" class with ``Lettuce 3" and the rest of it with ``Lettuce 1". FCM \& XB validity index concludes to $m_{final}=9$ and merges a part of ``Corn" class with ``Lettuce 1", while the rest of it constitutes a seperate cluster, which also appears in scattered spots in the ``Grapes" class area. In addition, FCM \& XB requires long time to conclude to a final clustering result, due to the fact that FCM is executed for several values of $m_{ini}$, which increases the required computational time in high dimensional data sets. The PCM algorithm fails to uncover more than 6 discrete clusters, merging firstly ``Lettuce 3", ``Grapes" and a part of ``Corn" class and secondly the rest part of ``Corn" with ``Lettuce 1". Moreover, it merges the two types of ``Broccoli" into one, but splits the ``Lettuce 2" class into two clusters (whose pixels are spread over several classes of the image). Both UPC and UPFC algorithms are able to detect up to 6 clusters, having the same behavior as FCM and k-means (when the latter produce 8 clusters), except that they both merge the two ``Broccoli" classes into one. PFCM algorithm, after precise fine tuning of its parameters, manages additionally to distinguish the two types of ``Broccoli" classes, compared to UPC and UPFC, producing thus 7 clusters. GRPCM and AMPCM algorithms both end up with 6 clusters, merging the two ``Broccoli" classes into one, a part of ``Corn" class with ``Grapes" and the rest of it with ``Lettuce 1". However, the most important thing to be mentioned is that both these algorithms require excessively long time to converge in high dimensional data sets. Finally, APCM is the only algorithm that manages to distinguish the ``Lettuce 1" from the ``Corn" class, while at the same time it does not merge any other of the existing classes. 

Let us focus for a while on the ``Lettuce 2" class. This class forms two closely located clusters in the feature space, although this information is not reflected to the ground-truth labeling (note however that it can be deduced after inspection of the 5th PC component in Fig.~\ref{salinas_5thPCA}). It is important to note that, in contrast to APCM, none of the other algorithms succeeds in identifying each one of them. The fact that this is not reflected in the ground-truth labeling causes a misleading decrease in the SR performance of APCM. 

\section{Conclusion}
\label{sec5}
In this paper, commencing from the classic possibilistic c-means (PCM) algorithm proposed in \cite{Kris96}, a novel possibilistic clustering algorithm, called Adaptive Possibilistic c-means (APCM), has been derived exhibiting several new features. The main one is that its parameters $\gamma$ are adapted as the algorithm evolves, in contrast to all the other possibilistic algorithms, where parameters $\gamma$, once they are set, they remain fixed during the execution of the algorithm. This gives APCM more flexibility in tracking the variations in the cluster formation as the algorithm evolves. Additional significant features are related with the computation of the parameters $\gamma$. Specifically, in contrast to previous possibilistic algorithms, each $\gamma_j$ is expressed in terms to the mean absolute deviation of the vectors that are most compatible with the $j$th cluster ($C_j$), from their mean. The use of the Euclidean distance, instead of the squared Euclidean one, gives the ability to the algorithm to distinguish closely located to each other clusters. Moreover, the use of the mean instead of the previous location of the corresponding representative in the computation of $\gamma_j$'s gives better estimates for the latter. A significant side-effect of the adaptation of $\gamma_j$'s is that APCM is now (in principle) capable to detect the true number, $m$, of physical clusters provided that it is initialized with an overestimate of it, $m_{ini}$. The latter releases APCM from the noose of knowing exactly in advance the true number of ``physical" clusters. It is worth noting that as experiments shown, $m_{ini}$ and $\alpha$ should vary inversely to each other, in order the algorithm to work properly, which makes their choice not entirely arbitrary. In addition, they show that if $\alpha$ is fixed to a value around 1 and $m_{ini}$ is around 3-4 times greater than $m$, then, in several cases, the algorithm works properly. The experimental results provided show that APCM exhibits superior performance compared to several other related algorithms, in almost all the considered data sets. In addition, Appendix B contains some indicative theoretical results, concerning the convergence behavior of APCM. Extension of APCM for identifying noisy data points and outliers, based on the concept of ``sparsity", is a subject of on going investigation.

%
\appendices
\section{}
\begin{prop}
Let $\gamma_j'=\frac{\sum\nolimits_{\mathbf{x}_i:u_{ij}=\max_{r=1,...,m} u_{ir}} \|\mathbf{x}_i-\boldsymbol{\mu}_j\|^2}{n_j}$ and $\eta_j^2=\left(\frac{\sum\nolimits_{\mathbf{x}_i:u_{ij}=\max_{r=1,...,m} u_{ir}}^{} \|\mathbf{x}_i-\boldsymbol{\mu}_j\|}{n_j}\right)^2$ (see eq.~\eqref{transformations2}). Then $\eta_j^2\leq\gamma_j'$.
\label{prop1}
\end{prop}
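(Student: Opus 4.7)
The plan is to recognize that the inequality is a direct instance of the classical quadratic-mean/arithmetic-mean inequality (equivalently, Cauchy--Schwarz, or Jensen's inequality applied to the convex map $t\mapsto t^2$). Concretely, I would introduce the shorthand $a_i=\|\mathbf{x}_i-\boldsymbol{\mu}_j\|$ for each of the $n_j$ data vectors that are most compatible with cluster $C_j$. With this notation, the proposition just asserts that $\left(\frac{1}{n_j}\sum_i a_i\right)^2 \le \frac{1}{n_j}\sum_i a_i^2$, i.e.\ that the square of the arithmetic mean of non-negative numbers does not exceed the arithmetic mean of their squares.

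The cleanest derivation is a one-line application of Cauchy--Schwarz: treating $a_i$ and $1$ as two sequences of length $n_j$, one has $\bigl(\sum_i a_i\cdot 1\bigr)^2 \le \bigl(\sum_i a_i^2\bigr)\bigl(\sum_i 1^2\bigr)=n_j\sum_i a_i^2$, after which dividing both sides by $n_j^2$ yields precisely $\eta_j^2\le\gamma_j'$. I would then note that the $a_i$ being non-negative is not strictly needed for this step, but it is automatic here since they are Euclidean norms.

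There is essentially no obstacle: the only care needed is to make sure that the index set $\{i:u_{ij}=\max_r u_{ir}\}$ is exactly the same in the two sums (which it is, by construction), so that both quantities are built from the same $n_j$ terms $a_i$. It is also worth remarking, as a brief closing comment, that equality $\eta_j^2=\gamma_j'$ holds if and only if all $a_i$ are equal, that is, all the most-compatible points lie at the same Euclidean distance from $\boldsymbol{\mu}_j$; in all generic data configurations the inequality is strict, which is consistent with the discussion in Section~\ref{subsec3} that transformation \circled{\scriptsize{2}} \emph{reduces} the range of influence around each representative.
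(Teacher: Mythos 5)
Your proof is correct and follows essentially the same route as the paper: the paper invokes the relation $\|\mathbf{q}_j\|_1\leq n_j^{1/2}\|\mathbf{q}_j\|_2$ between the $l_1$ and $l_2$ norms of the vector of distances, which is exactly the Cauchy--Schwarz step you apply to the sequences $(a_i)$ and $(1,\ldots,1)$. The only difference is cosmetic (the paper also records the reverse bound $\gamma_j'\leq n_j\eta_j^2$ as a side remark), so your argument matches the paper's proof in substance.
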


\begin{proof}[Proof]
Let $q_{ij}=\|\mathbf{x}_i-\boldsymbol{\mu}_j\|$ and $\mathbf{q}_j=[q_{i1},\ldots,q_{in_j}]^T$. Then $\gamma_j'=\frac{1}{n_j}\|\mathbf{q}_j\|^2_2$ (squared $l_2$-norm) and $\eta_j^2=\frac{1}{n_j^2}\|\mathbf{q}_j\|^2_1$ (squared $l_1$-norm). From the relation between the $l_1$ and $l_2$ norms (see e.g. \cite{Bert89}), it is: 
$\|\mathbf{q}_j\|_1\leq n_j^{1/2}\|\mathbf{q}_j\|_2\leq n_j^{1/2}\|\mathbf{q}_j\|_1$ or $\|\mathbf{q}_j\|^2_1\leq n_j\|\mathbf{q}_j\|^2_2\leq n_j\|\mathbf{q}_j\|^2_1$ or $\frac{1}{n_j^2}\|\mathbf{q}_j\|^2_1\leq\frac{1}{n_j}\|\mathbf{q}_j\|^2_2\leq
n_j\frac{1}{n_j^2}\|\mathbf{q}_j\|^2_1$ or $\eta_j^2\leq\gamma_j'\leq n_j\eta_j^2$. Note that for finite $n_j$ values, $\eta_j^2$ and $\gamma_j'$ are of the same magnitude.
\end{proof}

\section{}
In this appendix we prove some propositions that are indicative of the basic properties of APCM, namely the convergence of the representatives to the center of dense regions and cluster elimination. Note that some convergence results are given in \cite{Zhou13}. However, these are not applicable to APCM, due to the adaptation mechanism employed for the parameters $\eta_j$'s. We begin with the following proposition.

\begin{prop}
Let $\boldsymbol{\theta}_1$, $\boldsymbol{\theta}_2$ be two cluster representatives with $\eta_2<\eta_1$. The geometrical locus of the points $\mathbf{x}\in\Re^\ell$ having $u_2(\mathbf{x})>u_1(\mathbf{x})$, where $u_j(\mathbf{x})=\exp\left(-\frac{\alpha d_j(\mathbf{x})}{\eta_j\hat{\eta}}\right)$ and $d_j(\mathbf{x})=\|\mathbf{x} - \boldsymbol{\theta}_j\|^2$, $j=1,2$, is the set of points that lie in the interior of the hypersphere $C$:
\begin{equation}
\|\mathbf{x}-\frac{k\boldsymbol{\theta}_2-\boldsymbol{\theta}_1}{k-1}\|^2 = \frac{k}{(k-1)^2}\|\boldsymbol{\theta}_2-\boldsymbol{\theta}_1\|^2\equiv r^2,
\label{eqprop2}
\end{equation}
centered at $\frac{k\boldsymbol{\theta}_2-\boldsymbol{\theta}_1}{k-1}$ and having radius $r=\frac{\sqrt{k}}{k-1}\|\boldsymbol{\theta}_2-\boldsymbol{\theta}_1\|$, where $k=\eta_1/\eta_2(>1)$.
\label{prop2}
\end{prop}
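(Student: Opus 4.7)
My plan is to reduce the inequality $u_2(\mathbf{x}) > u_1(\mathbf{x})$ to a quadratic inequality in $\mathbf{x}$ and then complete the square to recognize it as the interior of a hypersphere.

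First, since $\exp$ is strictly monotonic and $\alpha,\hat{\eta}>0$, the condition $u_2(\mathbf{x}) > u_1(\mathbf{x})$ is equivalent to $d_2(\mathbf{x})/\eta_2 < d_1(\mathbf{x})/\eta_1$, i.e., $k\, d_2(\mathbf{x}) < d_1(\mathbf{x})$, where $k=\eta_1/\eta_2$. By the hypothesis $\eta_2<\eta_1$ we have $k>1$, which is what allows the subsequent division to preserve the inequality direction.

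Next, I would substitute $d_j(\mathbf{x})=\|\mathbf{x}-\boldsymbol{\theta}_j\|^2$ and expand. Collecting the $\|\mathbf{x}\|^2$ and $\mathbf{x}$-linear terms on one side yields
\[
(k-1)\|\mathbf{x}\|^2 - 2\,\mathbf{x}^T(k\boldsymbol{\theta}_2-\boldsymbol{\theta}_1) + k\|\boldsymbol{\theta}_2\|^2 - \|\boldsymbol{\theta}_1\|^2 < 0.
\]
Dividing through by $k-1>0$ and completing the square on $\mathbf{x}$ with center $\mathbf{c}=(k\boldsymbol{\theta}_2-\boldsymbol{\theta}_1)/(k-1)$ transforms the inequality into
\[
\left\|\mathbf{x}-\mathbf{c}\right\|^2 < \|\mathbf{c}\|^2 - \frac{k\|\boldsymbol{\theta}_2\|^2-\|\boldsymbol{\theta}_1\|^2}{k-1}.
\]
The remaining step is to simplify the right-hand side. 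A straightforward expansion of $\|k\boldsymbol{\theta}_2-\boldsymbol{\theta}_1\|^2$ and algebraic cancellation shows that the numerator collapses to $k\|\boldsymbol{\theta}_2-\boldsymbol{\theta}_1\|^2$, so the RHS becomes $k\|\boldsymbol{\theta}_2-\boldsymbol{\theta}_1\|^2/(k-1)^2 = r^2$, matching eq.~\eqref{eqprop2}.

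There is no real conceptual obstacle here: the whole proof is a monotone-reduction followed by completing the square. The only point requiring a bit of care is verifying that the constant term after completing the square simplifies cleanly to $k\|\boldsymbol{\theta}_2-\boldsymbol{\theta}_1\|^2$, which I would do by writing $\|k\boldsymbol{\theta}_2-\boldsymbol{\theta}_1\|^2 - (k-1)(k\|\boldsymbol{\theta}_2\|^2-\|\boldsymbol{\theta}_1\|^2)$ and collecting terms. Finally, I would remark that the strictness of the original inequality gives the open interior of the hypersphere (not its boundary), and that the use of $k>1$ is essential both for the direction of the inequality and for the center/radius formulas to be well defined.
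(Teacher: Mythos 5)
Your proposal is correct and follows essentially the same route as the paper's proof: reduce $u_2(\mathbf{x})>u_1(\mathbf{x})$ via monotonicity of the exponential to $d_1(\mathbf{x})>k\,d_2(\mathbf{x})$, expand the squared norms, and complete the square to obtain the interior of the hypersphere centered at $\frac{k\boldsymbol{\theta}_2-\boldsymbol{\theta}_1}{k-1}$ with radius $\frac{\sqrt{k}}{k-1}\|\boldsymbol{\theta}_2-\boldsymbol{\theta}_1\|$. The simplification of the constant term to $k\|\boldsymbol{\theta}_2-\boldsymbol{\theta}_1\|^2/(k-1)^2$ checks out, so no gap remains.
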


\begin{proof}[Proof]
It is $u_1(\mathbf{x})<u_2(\mathbf{x}) \Leftrightarrow \frac{d_1(\mathbf{x})}{\eta_1}>\frac{d_2(\mathbf{x})}{\eta_2} \Leftrightarrow d_1(\mathbf{x})>kd_2(\mathbf{x}) \Leftrightarrow \|\mathbf{x} - \boldsymbol{\theta}_1\|^2 > k\|\mathbf{x} - \boldsymbol{\theta}_2\|^2 \Leftrightarrow \|\mathbf{x}\|^2-2\mathbf{x}^T\boldsymbol{\theta}_1+\|\boldsymbol{\theta}_1\|^2 > k\|\mathbf{x}\|^2-2k\mathbf{x}^T\boldsymbol{\theta}_2+k\|\boldsymbol{\theta}_2\|^2 \Leftrightarrow (k-1)\|\mathbf{x}\|^2-2(k\boldsymbol{\theta}_2-\boldsymbol{\theta}_1)^T\mathbf{x}
+k\|\boldsymbol{\theta}_2\|^2-\|\boldsymbol{\theta}_1\|^2<0 \Leftrightarrow \|\mathbf{x}\|^2-2\left(\frac{k\boldsymbol{\theta}_2-\boldsymbol{\theta}_1}{k-1}\right)^T\mathbf{x} + \frac{k\|\boldsymbol{\theta}_2\|^2-\|\boldsymbol{\theta}_1\|^2}{k-1}<0 \Leftrightarrow \|\mathbf{x}\|^2-2\left(\frac{k\boldsymbol{\theta}_2-\boldsymbol{\theta}_1}{k-1}\right)^T\mathbf{x} + \|\frac{k\boldsymbol{\theta}_2-\boldsymbol{\theta}_1}{k-1}\|^2 - \|\frac{k\boldsymbol{\theta}_2-\boldsymbol{\theta}_1}{k-1}\|^2 + \frac{k\|\boldsymbol{\theta}_2\|^2-\|\boldsymbol{\theta}_1\|^2}{k-1}<0 \Leftrightarrow \|\mathbf{x}-\frac{k\boldsymbol{\theta}_2-\boldsymbol{\theta}_1}{k-1}\|^2 < \frac{\|k\boldsymbol{\theta}_2-\boldsymbol{\theta}_1\|^2}{(k-1)^2} - \frac{k\|\boldsymbol{\theta}_2\|^2-\|\boldsymbol{\theta}_1\|^2}{k-1}$ or $\|\mathbf{x}-\frac{k\boldsymbol{\theta}_2-\boldsymbol{\theta}_1}{k-1}\|^2 < \frac{k}{(k-1)^2}\|\boldsymbol{\theta}_2-\boldsymbol{\theta}_1\|^2$.
\end{proof}
\noindent
Note that the radius $r$ of $C$ can be written in terms of $\eta_1$, $\eta_2$ as 
\begin{equation}
r=\frac{\sqrt{\eta_1\eta_2}}{|\eta_1-\eta_2|}\|\boldsymbol{\theta}_2-\boldsymbol{\theta}_1\|^2
\label{radius}
\end{equation}

We consider next the continuous case where the data vectors are modelled by a random vector $\mathbf{x}$ that follows a continuous pdf distribution $p(\mathbf{x})$. In this case, the updating equations for the APCM algorithm (with a slight modification in notation, in order to denote explicitly the dependence of $u_j(\mathbf{x})$ from the continuous random variable $\mathbf{x}$) are given below.
%

\vspace{0.1cm}
\begin{minipage}{0.9\linewidth}  
\begin{equation}  
\boldsymbol{\theta}_j^{t+1}=\frac{\int_{\Re^\ell} u_j^t(\mathbf{x})\mathbf{x}p(\mathbf{x})d\mathbf{x}}{\int_{\Re^\ell} u_j^t(\mathbf{x})p(\mathbf{x})d\mathbf{x}}
\label{eqprop31}  
\end{equation}  
\end{minipage}  

\vspace{0.1cm}

\begin{minipage}{0.9\linewidth}  
\begin{equation}  
\text{ where \ \ } u_j^t(\mathbf{x})=\exp\left(-\frac{\|\mathbf{x}-\boldsymbol{\theta}_j^t\|^2}{\gamma_j^t}\right)
\label{eqprop32}   
\end{equation}
\end{minipage}

\vspace{0.1cm}
\begin{minipage}{0.9\linewidth}  
\begin{equation}  
\gamma_j^t=\frac{\hat{\eta}}{\alpha}\frac{\int_{T_j^t} \|\mathbf{x}-\boldsymbol{\mu}_j^t\|p(\mathbf{x})d\mathbf{x}}{\int_{T_j^t} p(\mathbf{x})d\mathbf{x}}
\label{eqprop33}  
\end{equation}  
\end{minipage}

\vspace{0.1cm}

\begin{minipage}{0.9\linewidth}  
\begin{equation} 
\hspace{-2cm} 
\text{and \ \ } \boldsymbol{\mu}_j^t=\frac{\int_{T_j^t} \mathbf{x}p(\mathbf{x})d\mathbf{x}}{\int_{T_j^t} p(\mathbf{x})d\mathbf{x}}
\label{eqprop34}   
\end{equation}
\end{minipage}

\noindent
with $T_j^t=\{\mathbf{x}:u_j^t(\mathbf{x})=\max_{q=1,\ldots,m} u_q^t(\mathbf{x})\}$, $j=1,\ldots,m$.

The above equations define the iterative scheme $\boldsymbol{\theta}_j^{t+1}=f(\boldsymbol{\theta}_j^{t})$, where 
\begin{equation}
f(\boldsymbol{\theta}_j^t)=\frac{\int_{\Re^\ell} \exp\left(-\frac{\|\mathbf{x}-\boldsymbol{\theta}_j^t\|^2}{\gamma_j}\right)\mathbf{x}p(\mathbf{x})d\mathbf{x}}{\int_{\Re^\ell} \exp\left(-\frac{\|\mathbf{x}-\boldsymbol{\theta}_j^t\|^2}{\gamma_j}\right)p(\mathbf{x})d\mathbf{x}}
\label{eqprop17}
\end{equation}

In the sequel we give some indicative theoretical results concerning aspects of the behavior of APCM, namely (a) the convergence of the cluster representatives to the centers of the dense in data regions and (b) the cluster elimination mechanism. In the sequel, we state two assumptions that will be used as premises in the propositions to follow.

\noindent
\newline
{\it Assumption 1:} (a) $p(\mathbf{x})$ decreases isotropically along all directions around its center $\mathbf{c}$ \footnote{Such pdf's are e.g. the independent identically distributed (i.i.d) multivariate normal and Laplace distribution.}. 

\noindent
(b) Without loss of generality, we consider the case $\mathbf{c}=\mathbf{0}$.

\noindent
\newline
Note that this assumption indicates the existence of a {\it single} dense in data region.

\noindent
\newline
{\it Assumption 2:} $p(\mathbf{x})$ is a zero mean normal distribution ${\cal N}(\mathbf{0},\sigma^2I)$.

\noindent
(Clearly, Assumption 2 is more restrictive than assumption 1.)

\begin{prop}
Under assumption 1, the center $\mathbf{c}=\mathbf{0}$ of $p(\mathbf{x})$ is a fixed point for the iterative scheme defined by eq.~\eqref{eqprop17}.
\label{prop3}
\end{prop}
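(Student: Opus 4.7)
The plan is to directly verify $f(\mathbf{0})=\mathbf{0}$ by exploiting the isotropy of $p(\mathbf{x})$ about the origin. Substituting $\boldsymbol{\theta}_j^t=\mathbf{0}$ into eq.~\eqref{eqprop17} gives
$$f(\mathbf{0})=\frac{\int_{\Re^\ell}\exp\!\left(-\tfrac{\|\mathbf{x}\|^2}{\gamma_j}\right)\mathbf{x}\,p(\mathbf{x})\,d\mathbf{x}}{\int_{\Re^\ell}\exp\!\left(-\tfrac{\|\mathbf{x}\|^2}{\gamma_j}\right)p(\mathbf{x})\,d\mathbf{x}}.$$
By Assumption~1, $p(\mathbf{x})$ depends on $\mathbf{x}$ only through $\|\mathbf{x}\|$, i.e.\ $p(\mathbf{x})=g(\|\mathbf{x}\|)$ for some nonnegative, decreasing profile $g$; in particular $p(-\mathbf{x})=p(\mathbf{x})$. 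The Gaussian kernel $\exp(-\|\mathbf{x}\|^2/\gamma_j)$ is also radial and thus even under $\mathbf{x}\mapsto-\mathbf{x}$.

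First, I would show the numerator vanishes. Define the vector-valued integrand $\mathbf{h}(\mathbf{x})=\exp(-\|\mathbf{x}\|^2/\gamma_j)\,\mathbf{x}\,p(\mathbf{x})$. The change of variables $\mathbf{x}\mapsto-\mathbf{x}$ (which preserves the Lebesgue measure on $\Re^\ell$) yields $\mathbf{h}(-\mathbf{x})=-\mathbf{h}(\mathbf{x})$, so $\mathbf{h}$ is odd componentwise. Splitting the integral into the two half-spaces relative to any coordinate axis and applying this oddness collapses the numerator to $\mathbf{0}$.

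Second, I would verify that the denominator is a strictly positive, finite real number, so that the quotient is well defined. Its integrand is nonnegative everywhere, strictly positive on a neighborhood of $\mathbf{0}$ (where $p(\mathbf{x})>0$ by Assumption~1), and dominated by $\exp(-\|\mathbf{x}\|^2/\gamma_j)p(\mathbf{x})\le p(\mathbf{x})$, which is integrable since $p$ is a density. Here $\gamma_j$ is the positive constant produced by eq.~\eqref{eqprop33} evaluated at $\boldsymbol{\theta}_j^t=\mathbf{0}$; positivity of $\gamma_j$ is inherited from the positivity of $\hat\eta$, $\alpha$, and of the radial mean-absolute-deviation integral (which is nonzero since $p$ is not a point mass). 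Combining the two steps gives $f(\mathbf{0})=\mathbf{0}/(\text{positive})=\mathbf{0}$, as claimed.

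There is no substantive obstacle: the whole argument is a symmetry observation, and the only thing to be careful about is the well-posedness of $\gamma_j$ at $\boldsymbol{\theta}_j=\mathbf{0}$, which follows immediately from Assumption~1 and the definitions in eqs.~\eqref{eqprop33}--\eqref{eqprop34} (with $T_j=\Re^\ell$ in the single-representative setting used to define the iterative map).
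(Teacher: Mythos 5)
Your proposal is correct: it verifies $f(\mathbf{0})=\mathbf{0}$ directly, which is exactly what the paper does, but the cancellation mechanism you use is different. The paper decomposes $\Re^\ell$ into spherical shells $\|\mathbf{x}\|^2=r^2$, pulls the radial factor $\exp(-r^2/\gamma^t)$ out of each shell integral, and invokes isotropy to conclude that the first moment of $\mathbf{x}\,p(\mathbf{x})$ over every shell vanishes, after which positivity of the denominator finishes the argument. You instead observe that the full integrand $\exp(-\|\mathbf{x}\|^2/\gamma_j)\,\mathbf{x}\,p(\mathbf{x})$ is odd under the point reflection $\mathbf{x}\mapsto-\mathbf{x}$, so the numerator vanishes in one step. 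Your route is marginally more elementary (no shell decomposition, no surface integrals) and in fact slightly more general, since it only needs the central symmetry $p(-\mathbf{x})=p(\mathbf{x})$ rather than full isotropy; the paper's shell argument uses isotropy as literally stated in Assumption 1 and matches the geometric picture used elsewhere in the appendix. Your additional care about well-posedness (finiteness of the numerator via the boundedness of $\|\mathbf{x}\|e^{-\|\mathbf{x}\|^2/\gamma_j}$, strict positivity of the denominator, and positivity of $\gamma_j$ from eq.~\eqref{eqprop33}) is a small but genuine tightening relative to the paper, which treats these points informally.
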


\begin{proof}[Proof]
%
Assuming that $\boldsymbol{\theta}_j^{t}=\mathbf{0}$, we will show that $\boldsymbol{\theta}_j^{t+1}=\mathbf{0}$ also. Dropping the index $j$ from $\boldsymbol{\theta}_j$, $\gamma_j$ from eq.~\eqref{eqprop17} we have
\begin{equation}
\boldsymbol{\theta}^{t+1}=\frac{\int_0^\infty \left[\int_{\|\mathbf{x}\|^2=r^2} \exp\left(-\frac{\|\mathbf{x}\|^2}{\gamma^t}\right)\mathbf{x}p(\mathbf{x})dA_r\right]dr}{\int_0^\infty \left[\int_{\|\mathbf{x}\|^2=r^2} \exp\left(-\frac{\|\mathbf{x}\|^2}{\gamma^t}\right)p(\mathbf{x})dA_r\right]dr}
\label{eqprop36}
\end{equation}
where $\int_{\|\mathbf{x}\|^2=r^2} (\cdot)dA_r$ is the integral over the hypersphere $\|\mathbf{x}\|^2=r^2$.

Continuing from eq.~\eqref{eqprop36} we have
\begin{equation}
\boldsymbol{\theta}^{t+1}=\frac{\int_0^\infty \exp\left(-\frac{r^2}{\gamma^t}\right)\left[\int_{\|\mathbf{x}\|^2=r^2} \mathbf{x}p(\mathbf{x})dA_r\right]dr}{\int_0^\infty \exp\left(-\frac{r^2}{\gamma^t}\right)\left[\int_{\|\mathbf{x}\|^2=r^2} p(\mathbf{x})dA_r\right]dr}
\label{eqprop37}
\end{equation}

But, due to the isotropic property of $p(\mathbf{x})$ along all directions around $\mathbf{0}$, all points on the hypersphere $\|\mathbf{x}\|^2=r^2$ are evenly distributed (and have the same magnitude). Thus, it is:
\begin{equation}
\int_{\|\mathbf{x}\|^2=r^2} \mathbf{x}p(\mathbf{x})dA_r=\mathbf{0}
\label{eqprop38}
\end{equation}

Noting also that $\exp\left(-\frac{r^2}{\gamma^t}\right)>0$ and $\int_{\|\mathbf{x}\|^2=r^2} p(\mathbf{x})dA_r$ is the area of the hypersphere $\|\mathbf{x}\|^2=r^2$, the denominator in eq.~\eqref{eqprop37} is positive. Thus, eqs.~\eqref{eqprop37} and \eqref{eqprop38} finally give $\boldsymbol{\theta}^{t+1}=\mathbf{0}$. In other words, $\mathbf{0}$ is indeed a fixed point of the iterative scheme defined by eq.~\eqref{eqprop17}.
\end{proof}

\begin{prop}
Adopt the assumption 2 and consider the mapping $f:\Re^\ell\rightarrow\Re^\ell$ defined by eq.~\eqref{eqprop17}. Then, the fixed point $\mathbf{0}$ of the scheme $\boldsymbol{\theta}^{t+1}=f(\boldsymbol{\theta}^{t})$ is stable.
\label{prop4}
\end{prop}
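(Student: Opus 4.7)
My plan is to exploit the Gaussian form of both $p(\mathbf{x})$ and the weight $\exp(-\|\mathbf{x}-\boldsymbol{\theta}\|^2/\gamma)$ so as to evaluate $f(\boldsymbol{\theta})$ in closed form. The product of two Gaussians is, up to a $\mathbf{x}$-independent factor, another Gaussian in $\mathbf{x}$ whose mean is a scalar multiple of $\boldsymbol{\theta}$. This reduces $f$ to a linear map, from which stability follows by direct contraction.

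Concretely, I would first complete the square in the combined exponent: writing $p(\mathbf{x})\propto\exp(-\|\mathbf{x}\|^2/(2\sigma^2))$ and setting $a=\tfrac{1}{\gamma}+\tfrac{1}{2\sigma^{2}}$, one obtains
\[
-\frac{\|\mathbf{x}-\boldsymbol{\theta}\|^{2}}{\gamma}-\frac{\|\mathbf{x}\|^{2}}{2\sigma^{2}}
= -a\Bigl\|\mathbf{x}-\tfrac{\boldsymbol{\theta}}{a\gamma}\Bigr\|^{2}+C(\boldsymbol{\theta}),
\]
with $C(\boldsymbol{\theta})$ independent of $\mathbf{x}$. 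Substituting this into eq.~\eqref{eqprop17}, the factor $e^{C(\boldsymbol{\theta})}$ cancels between numerator and denominator, and the remaining integrals are those of a normalized Gaussian with mean $\boldsymbol{\theta}/(a\gamma)$. Hence
\[
f(\boldsymbol{\theta})=\frac{\boldsymbol{\theta}}{a\gamma}=\frac{2\sigma^{2}}{2\sigma^{2}+\gamma}\,\boldsymbol{\theta}\equiv c\,\boldsymbol{\theta},
\qquad c\in(0,1),
\]
where $c<1$ strictly because $\gamma>0$.

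The stability conclusion is then immediate: iterating gives $\boldsymbol{\theta}^{t}=c^{t}\boldsymbol{\theta}^{0}$, so $\|\boldsymbol{\theta}^{t}\|=c^{t}\|\boldsymbol{\theta}^{0}\|\leq\|\boldsymbol{\theta}^{0}\|$ for every $t$. Given any $\varepsilon>0$, choosing $\delta=\varepsilon$ in the Lyapunov definition shows that $\|\boldsymbol{\theta}^{0}\|<\delta$ implies $\|\boldsymbol{\theta}^{t}\|<\varepsilon$ for all $t\geq 0$; in fact $\boldsymbol{\theta}^{t}\to\mathbf{0}$, which is asymptotic stability. Proposition~\ref{prop3} already guarantees that $\mathbf{0}$ is a fixed point, so the contraction argument closes the statement.

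The main obstacle is the bookkeeping of the completion-of-the-square step and verifying that the $\boldsymbol{\theta}$-dependent constant $e^{C(\boldsymbol{\theta})}$ cancels cleanly between numerator and denominator, so that no hidden $\boldsymbol{\theta}$-dependence corrupts the linearity of $f$. A minor conceptual subtlety is that $\gamma$ is treated as a fixed positive parameter of the mapping (as the notation in eq.~\eqref{eqprop17} suggests, where the superscript $t$ appears on $\boldsymbol{\theta}$ but not on $\gamma$), even though in the full APCM algorithm $\gamma$ is itself adapted; however, since the contraction factor $c=2\sigma^{2}/(2\sigma^{2}+\gamma)$ is strictly less than $1$ for every admissible $\gamma>0$, the argument is robust and does not hinge on a particular value of $\gamma$.
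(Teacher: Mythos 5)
Your proposal is correct, and it actually proves something slightly stronger than the paper does, via a genuinely different route. The paper argues locally: it exploits the coordinate-wise separability of the Gaussian to reduce $f$ to scalar maps $f_s(\theta_s)$, computes the (diagonal) Jacobian at $\boldsymbol{\theta}=\mathbf{0}$ using the same Gaussian-product identity you use (eq.~\eqref{eqprop423}--\eqref{eqprop424}), and concludes stability from $\left.\partial f_s/\partial\theta_s\right|_{\mathbf{0}}=\tfrac{2\sigma^2}{2\sigma^2+\gamma}<1$ — a linearization argument that by itself only yields local (asymptotic) stability of the fixed point. You instead complete the square globally, observe that the $\mathbf{x}$-independent factor $e^{C(\boldsymbol{\theta})}$ cancels between numerator and denominator of eq.~\eqref{eqprop17}, and obtain the exact closed form $f(\boldsymbol{\theta})=\tfrac{2\sigma^2}{2\sigma^2+\gamma}\,\boldsymbol{\theta}$, i.e.\ $f$ is an exact linear contraction under Assumption 2; this gives global asymptotic stability with no Jacobian computation, and the contraction constant you find coincides with the paper's derivative at the origin, which is a useful consistency check. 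Your closing remark about $\gamma$ also matches the paper's observation that Propositions~\ref{prop3} and \ref{prop4} remain valid for time-varying positive $\gamma$: with $\gamma=\gamma^t$ the map at each step is still a contraction with factor $c_t=\tfrac{2\sigma^2}{2\sigma^2+\gamma^t}<1$, so the Lyapunov argument with $\delta=\varepsilon$ goes through unchanged (only the convergence rate, not stability, could degrade if $\gamma^t\to 0$). In short: same underlying Gaussian computation, but your global closed-form evaluation buys a stronger (global) conclusion, while the paper's Jacobian route is the standard local-stability template that would generalize to non-Gaussian isotropic $p(\mathbf{x})$ where no closed form is available.
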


\begin{proof}[Proof]
Focusing on the $s$-th component $f_s(\boldsymbol{\theta})$ of the above mapping and utilizing the assumption 2 of $p(\mathbf{x})$ as well as the fact that $\exp\left(-\frac{\|\mathbf{x}-\boldsymbol{\theta}\|^2}{\gamma}\right)=\prod\limits_{q=1}^{\ell} \exp\left(-\frac{(x_q-\theta_q)^2}{\gamma}\right)$, it is easy to verify that:
\begin{equation}
f_s(\boldsymbol{\theta})=\frac{\int_{\Re} x_s\exp\left(-\frac{(x_s-\theta_s)^2}{\gamma}\right)p(x_s)dx_s}{\int_{\Re} \exp\left(-\frac{(x_s-\theta_s)^2}{\gamma}\right)p(x_s)dx_s}\equiv f_s(\theta_s)
\label{eqprop421}
\end{equation}
Thus, $f_s(\boldsymbol{\theta})$ depends only on $\theta_s$.

In order to prove the stability of $\boldsymbol{\theta}=\mathbf{0}$, we will compute the Jacobian matrix on $\boldsymbol{\theta}=\mathbf{0}$ and we will show that $|J(\boldsymbol{\theta})|<1$.

Since, $\frac{\partial f_s(\boldsymbol{\theta})}{\partial \theta_q}=0$, for $q\neq s$ the Jacobian is diagonal. Computing its diagonal elements at $\boldsymbol{\theta}=\mathbf{0}$, we have after some algebra
\begin{multline}
\left.\frac{\partial f_s(\boldsymbol{\theta})}{\partial \theta_s}\right|_{\boldsymbol{\theta}=\mathbf{0}}=\frac{2}{\gamma}\frac{\int_{\Re} x_s^2\exp\left(-\frac{x_s^2}{\gamma}\right)p(x_s)dx_s}{\int_{\Re} \exp\left(-\frac{x_s^2}{\gamma}\right)p(x_s)dx_s}-\\-\frac{2}{\gamma}\frac{\left(\int_{\Re} x_s\exp\left(-\frac{x_s^2}{\gamma}\right)p(x_s)dx_s\right)^2}{\left(\int_{\Re} \exp\left(-\frac{x_s^2}{\gamma}\right)p(x_s)dx_s\right)^2}
\label{eqprop422}
\end{multline}

In addition, due to the fact that $p(x_s)$ is ${\cal N}(0,\sigma^2)$, it is easy to verify that
\begin{equation}
\exp\left(-\frac{x_s^2}{\gamma}\right)p(x_s)=\frac{\sigma '}{\sigma}\hat{p}(x_s)
\label{eqprop423}
\end{equation}
where $\hat{p}(x_s)$=${\cal N}(0,\sigma '{^2})$, with 
\begin{equation}
\sigma '{^2}=\frac{1}{2\left(\frac{1}{\gamma}+\frac{1}{2\sigma^2}\right)}
\label{eqprop424}
\end{equation}

Substituting eq.~\eqref{eqprop423} to eq.~\eqref{eqprop422} and taking into account that (a) the numinator of the second fraction is the mean of $\hat{p}(x_s)$, (b) the numinator of the first fraction is the variance of $\hat{p}(x_s)$ and (c) the denominators are both equal to 1, we end up with
\begin{equation}
\left.\frac{\partial f_s(\boldsymbol{\theta})}{\partial \theta_s}\right|_{\boldsymbol{\theta}=\mathbf{0}}=\frac{2\sigma '{^2}}{\gamma}
\label{eqprop425}
\end{equation}

Substituing eq.~\eqref{eqprop424} to eq.~\eqref{eqprop425}, it is: $\left.\frac{\partial f_s(\boldsymbol{\theta})}{\partial \theta_s}\right|_{\boldsymbol{\theta}=\mathbf{0}}=\frac{2\sigma^2}{2\sigma^2+\gamma}$, which is always less than 1, due to the positivity of $\sigma^2$ and $\gamma$.

Thus, $\boldsymbol{\theta}=\mathbf{0}$ is a stable fixed point of the iterative scheme $\boldsymbol{\theta}^{t+1}=f(\boldsymbol{\theta}^{t})$
\end{proof}

Propositions \ref{prop3} and \ref{prop4} are valid for both constant and time varying positive $\gamma_j$'s.

In the general case where the data form more than one dense regions\footnote{That is, when $p(\mathbf{x})$ has more than one peaks.}, the above propositions are still valid, assuming that the influence on a representative that belongs to a given dense region from data points from other dense regions is negligible. This can be ensured by choosing $\gamma_j$'s properly.


In the next proposition, we focus on the cluster elimination property of APCM for the case of two representatives that lie in the same physical cluster. 

\begin{prop}
Adopt assumption 1 and consider two cluster representatives $\boldsymbol{\theta}_1$ and $\boldsymbol{\theta}_2$. 
Assuming that $\eta_1(t)\neq\eta_2(t)$ and $\eta_j(t)<+\infty$, $j=1,2$, $\forall t$, one of the clusters represented by $\boldsymbol{\theta}_1$ and $\boldsymbol{\theta}_2$ will be eliminated\footnote{Note that, in practice, the hypothesis for $\eta_1(t)$ and $\eta_2(t)$ is almost always met, due to their definition.}.
\label{prop5}
\end{prop}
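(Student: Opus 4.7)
The plan is to combine Propositions~\ref{prop2},~\ref{prop3}, and~\ref{prop4} with the adaptation rule~\eqref{adapteta} to show that whichever of $C_1, C_2$ has the smaller $\eta$ is progressively starved of most-compatible data points, until it is removed by the cluster elimination step of Algorithm~\ref{alg:apcm}. Without loss of generality I would assume $\eta_2(t_0) < \eta_1(t_0)$ at some iteration $t_0$. Since Assumption~1 places a single dense region centered at $\mathbf{0}$, Propositions~\ref{prop3} and~\ref{prop4} ensure that both representatives are attracted to $\mathbf{0}$, so that $\delta(t) := \|\boldsymbol{\theta}_1(t) - \boldsymbol{\theta}_2(t)\| \to 0$.

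Next, I would invoke Proposition~\ref{prop2}: the set $V_2(t)$ of data points more compatible with $C_2$ than with $C_1$ is a ball of radius
\begin{equation*}
r(t) \;=\; \frac{\sqrt{\eta_1(t)\eta_2(t)}}{|\eta_1(t) - \eta_2(t)|}\,\delta(t).
\end{equation*}
Every $\mathbf{x}_i$ that is most compatible with $C_2$ lies in $V_2(t)$, hence within distance $2r(t)$ of its centroid $\boldsymbol{\mu}_2(t)$, so the update~\eqref{adapteta} forces $\eta_2(t+1) \leq 2 r(t)$. Combining the two bounds yields a recursion which, as long as the ratio $\eta_2(t)/\eta_1(t)$ stays bounded away from $1$, together with $\delta(t) \to 0$ drives $\eta_2(t) \to 0$. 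Hence $r(t) \to 0$, the ball $V_2(t)$ collapses, and for $t$ large enough no point of $X$ lies inside $V_2(t)$; at that iteration the index $j=2$ no longer appears in the vector $label$, and the removal loop in Algorithm~\ref{alg:apcm} eliminates $C_2$.

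The hard part will be ruling out a blow-up of the coefficient $\sqrt{\eta_1 \eta_2}/|\eta_1 - \eta_2|$, i.e., ensuring that $\eta_2(t)$ never creeps back up towards $\eta_1(t)$. I would close this gap by a self-reinforcement/monotone-separation argument: once $\eta_2 < \eta_1$, the shrinking $V_2$ necessarily pulls $\eta_2$ further down, while the complementary set inherited by $C_1$ becomes larger and more spread out, so $\eta_1(t)$ is bounded below by a positive constant and the gap $\eta_1(t) - \eta_2(t)$ actually widens. A secondary subtlety is that Proposition~\ref{prop4} assumes the more restrictive Assumption~2, whereas Proposition~\ref{prop5} only posits Assumption~1; for full rigor I would replace the appeal to asymptotic stability by a direct monotone-descent estimate for $\delta(t)$ based on the isotropic decrease of $p(\mathbf{x})$ around $\mathbf{0}$.
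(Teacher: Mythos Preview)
Your strategy matches the paper's: invoke Propositions~\ref{prop3} and~\ref{prop4} to get $\|\boldsymbol{\theta}_1(t)-\boldsymbol{\theta}_2(t)\|\to 0$, use the radius formula from Proposition~\ref{prop2} (eq.~\eqref{radius}), argue that the coefficient $\sqrt{\eta_1\eta_2}/|\eta_1-\eta_2|$ stays bounded, conclude $r_t\to 0$, and hence one region $T_j$ becomes empty and the corresponding cluster is eliminated.

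The difference is that you do considerably more work than the paper. The paper's proof simply \emph{asserts} that the hypotheses $\eta_1(t)\neq\eta_2(t)$ and $\eta_j(t)<+\infty$ yield a uniform bound $\bigl|\sqrt{\eta_1(t)\eta_2(t)}/(\eta_1(t)-\eta_2(t))\bigr|<M$ for all $t$, and then concludes $r_t\to 0$ directly; there is no recursion $\eta_2(t{+}1)\le 2r(t)$, no self-reinforcement or monotone-separation argument, and no discussion of the Assumption~1 versus Assumption~2 mismatch you flagged. Both issues you single out as ``the hard part'' are genuine, and the paper does not close them --- it effectively treats the uniform boundedness of the coefficient as following from the standing hypotheses and silently uses Proposition~\ref{prop4} under Assumption~1. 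So your plan is more careful than the published argument; if your goal is only to reproduce the paper's proof, you can drop the recursion and the gap-widening step and take the coefficient bound as given.
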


\begin{proof}[Proof]
Utilizing propositions \ref{prop3} and \ref{prop4}, we have that $\boldsymbol{\theta}_1$ and $\boldsymbol{\theta}_2$ converge towards $\mathbf{c}$. Thus, the distance between them decreases towards zero, i.e. 
\begin{equation}
\|\boldsymbol{\theta}_1(t)-\boldsymbol{\theta}_2(t)\|\rightarrow 0
\label{eqprop51}
\end{equation}		


Taking into account eq.~\eqref{radius}, the radius of the hypersphere $C_t$ that delimits $T_1(t)$ and $T_2(t)$ at iteration $t$ can be written as 
\begin{equation}
r_t=\frac{\sqrt{\eta_1(t)\eta_2(t)}}{|\eta_1(t)-\eta_2(t)|} \|\boldsymbol{\theta}_2(t)-\boldsymbol{\theta}_1(t)\|
\label{eqprop52}
\end{equation}

From hypothesis it follows that $\frac{\sqrt{\eta_1(t)\eta_2(t)}}{|\eta_1(t)-\eta_2(t)|}$ is finite, i.e.,
\begin{equation}
\exists M>0: \left|\frac{\sqrt{\eta_1(t)\eta_2(t)}}{\eta_1(t)-\eta_2(t)}\right|<M \ \ \ \forall t \label{eqprop53}
\end{equation}

Combining eqs.~\eqref{eqprop51}, \eqref{eqprop52} and \eqref{eqprop53} we have that $r_t\rightarrow 0$. Thus $T_j(t)$ for one of the two representatives will eventually becomes empty, which will lead the corresponding $\eta_j(t)$ to zero value (see eq.~\eqref{adapteta}) and thus to the elimination of cluster $C_j$ (from the execution of statements 13-20 of APCM).
\end{proof}

\ifCLASSOPTIONcompsoc
\else
\fi


\ifCLASSOPTIONcaptionsoff
\fi

\begin{IEEEbiography}
[{\includegraphics[width=1in,height=1.25in,clip,keepaspectratio]{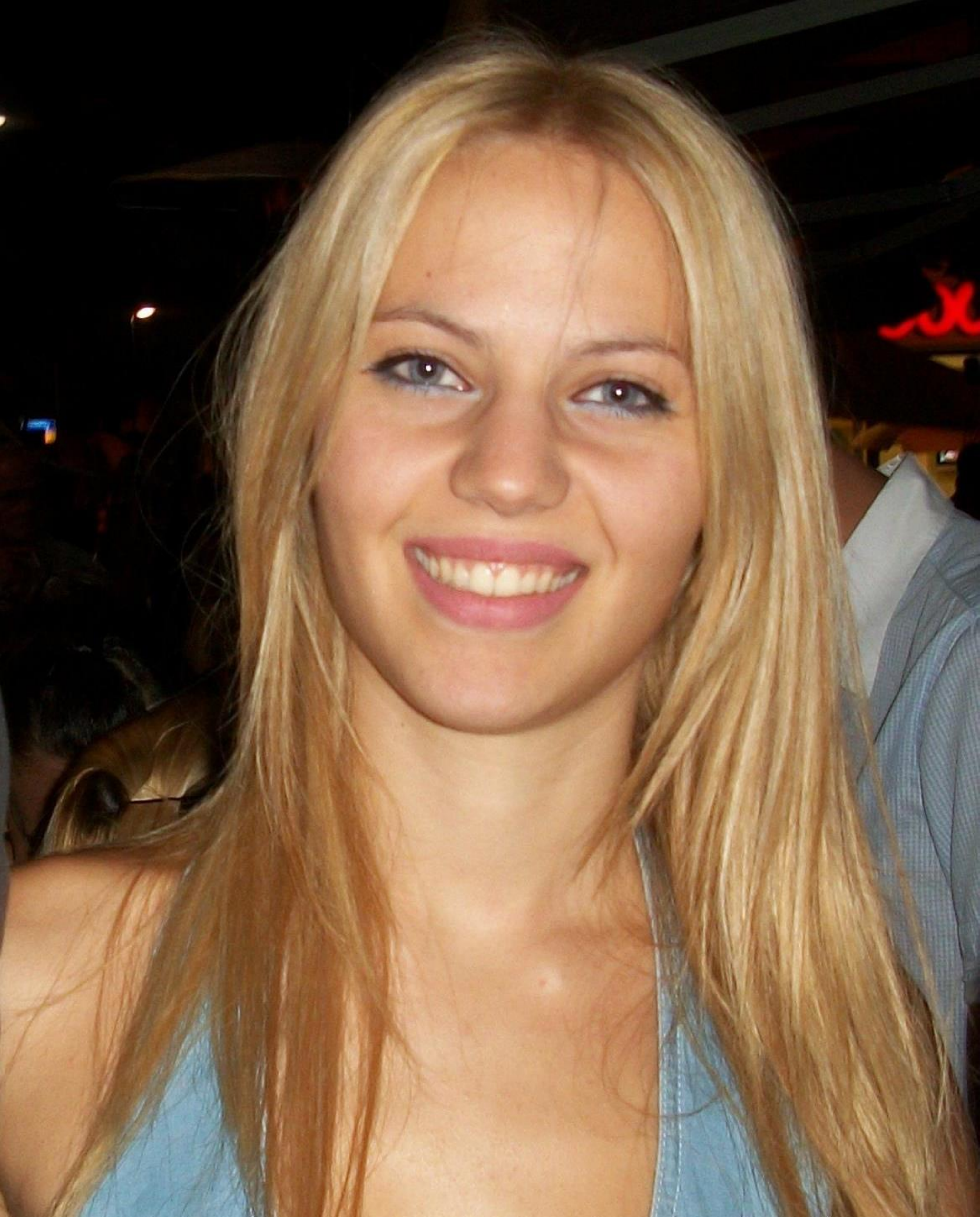}}]%
{Spyridoula Xenaki}
was born in Piraeus, Greece, in 1988. She received her Diploma degree in Infomatics and Telecommunications in 2010 and her M.Sc. degree in Signal Processing for Communication and Multimedia in 2012, both from the National and Kapodistrian University of Athens. From 2013, she is a Ph.D. student in the area of Signal Processing at the University of Athens in co-operation with the Institute for Astronomy, Astrophysics, Space Applications and Remote Sensing (IAASARS) of the National Observatory of Athens (NOA). Her research interests are in the area of signal processing and pattern recognition with application to image processing.
\end{IEEEbiography}

\begin{IEEEbiography}[{\includegraphics[width=1in,height=1.25in,clip,keepaspectratio]{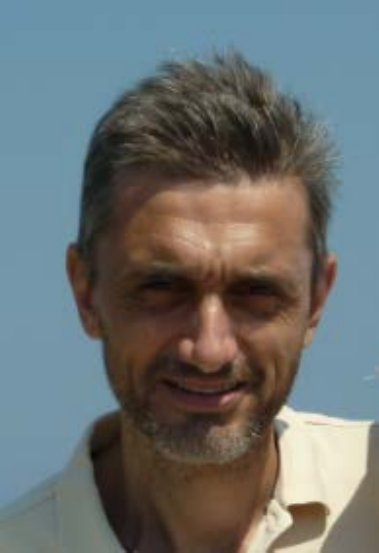}}]%
{Konstantinos Koutroumbas}
received the Diploma degree from the University of Patras (1989), an Ms.C. degree in advanced methods in computer science from the Queen Mary College of the University of London (1990) and a Ph.D. degree from the University of Athens (1995).

Since 2001 he is with the Institute of Astronomy, Astrophysics, Space Applications and Remote Sensing of the National Observatory of Athens, Greece, where currently he is a Senior Researcher. His research interests include mainly Pattern Recognition, Time Series Estimation and their application to (a) remote sensing and (b) the estimation of characteristic quantities of the upper atmosphere. He has co-authored the books Pattern Recognition (1st, 2nd, 3rd, 4th editions) and Introduction to Pattern Recognition: A MATLAB Approach. He has over 3000 citations in his work.
\end{IEEEbiography}

\begin{IEEEbiography}[{\includegraphics[width=1in,height=1.25in,clip,keepaspectratio]{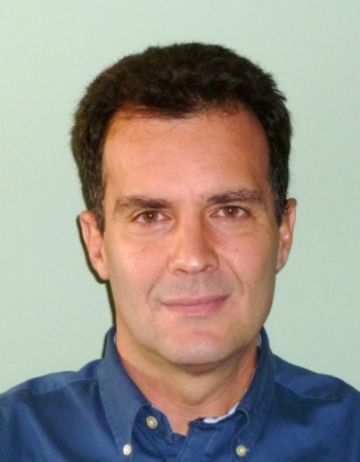}}]%
{Athanasios A. Rontogiannis}
(M'97) was born in Lefkada Island, Greece, in 1968. He received the (5 years) Diploma degree in electrical engineering from the National Technical University of Athens (NTUA), Greece, in 1991, the M.A.Sc. in electrical and computer engineering from the University of Victoria, Canada, in 1993, and the Ph.D. in communications and signal processing from the University of Athens, Greece, in 1997. 

From 1998 to 2003, he was with the University of Ioannina. In 2003 he joined the Institute for Astronomy, Astrophysics, Space Applications and Remote Sensing (IAASARS) of the National Observatory of Athens (NOA), where since 2011 he is a Senior Researcher. His research interests are in the general areas of statistical signal processing and wireless communications with emphasis on adaptive estimation, hyperspectral image processing, Bayesian compressive sensing, channel estimation/equalization and cooperative communications. Currently, he serves at the Editorial Boards of the EURASIP Journal on Advances in Signal Processing, Springer (since 2008) and the EURASIP Signal Processing Journal, Elsevier (since 2011). He is a member of the IEEE Signal Processing and Communication Societies and the Technical Chamber of Greece.
\end{IEEEbiography}

\end{document}